\documentclass[sigconf]{acmart}

\AtBeginDocument{%
  }

\usepackage{url}            
\usepackage{amsfonts}       

\usepackage{algorithm}
\usepackage[noend]{algpseudocode}
\usepackage{mathtools}
\usepackage{tikz}
\usepackage{tikz-3dplot}
\usepackage{pgfplots}
\usepgfplotslibrary{ternary}
\usetikzlibrary{calc,shapes}
\usepackage{thmtools}
\usepackage{thm-restate}
\usepackage{multirow}
\usepackage{array}
\usepackage{tabu}

\newtheorem{assumption}{Assumption}

\newcommand{\alg}{\textsf{GCB$_{\mathtt{safe}}$}}
\newcommand{\opt}{\textsf{Opt}$(\boldsymbol{\mu}, \lambda)$}

\newcommand{\bi}{x}

\newcommand{\co}{c}
\newcommand{\realbi}{\hat{x}}

\usepackage{eurosym}
\usepackage{bm}

\usepackage{subfig} 
\pgfplotsset{compat=1.18}

\allowdisplaybreaks






\begin{document}

\title{Safe Online Bid Optimization with Return on Investment and Budget Constraints}


\author{Matteo Castiglioni}
\email{matteo.castiglioni@polimi.it}
\affiliation{%
	\institution{Politecnico di Milano}
	\city{Milano}
	\country{Italy}
}


\author{Alessandro Nuara}
\email{alessandro.nuara@adcube.com}
\affiliation{%
	\institution{ADcube, Politecnico di Milano}
	\city{Milano}
	\country{Italy}
}

\author{Giulia Romano}
\email{giulia.romano@polimi.it}
\affiliation{%
  \institution{Politecnico di Milano}
  \city{Milano}
  \country{Italy}
}

\author{Giorgio Spadaro}
\email{giorgio.spadaro@mail.polimi.it}
\affiliation{%
	\institution{Politecnico di Milano}
	\city{Milano}
	\country{Italy}
}

\author{Francesco Trovò}
\email{francesco1.trovo@polimi.it}
\affiliation{%
\institution{Politecnico di Milano}
\city{Milano}
\country{Italy}
}

\author{Nicola Gatti}
\email{nicola.gatti@polimi.it}
\affiliation{%
\institution{Politecnico di Milano}
\city{Milano}
\country{Italy}
}

\renewcommand{\shortauthors}{Matteo Castiglioni et al.}


\begin{CCSXML}
	<ccs2012>
	<concept>
	<concept_id>10010147.10010257.10010258.10010261.10010272</concept_id>
	<concept_desc>Computing methodologies~Sequential decision making</concept_desc>
	<concept_significance>500</concept_significance>
	</concept>
	<concept>
	<concept_id>10002951.10003260.10003272.10003273</concept_id>
	<concept_desc>Information systems~Sponsored search advertising</concept_desc>
	<concept_significance>500</concept_significance>
	</concept>
	</ccs2012>
\end{CCSXML}

\ccsdesc[500]{Computing methodologies~Sequential decision making}
\ccsdesc[500]{Information systems~Sponsored search advertising}
\keywords{online bidding, roi, combinatorial bandits, gaussian process}



\begin{abstract}
In online marketing, the advertisers aim to balance achieving \emph{high volumes} and \emph{high profitability}.
The companies' business units address this tradeoff by maximizing the volumes while guaranteeing a minimum Return On Investment (ROI) level.
Such a task can be naturally modeled as a combinatorial optimization problem subject to ROI and budget constraints that can be solved online.
In this picture, the learner's uncertainty over the constraints' parameters plays a crucial role since the algorithms' exploration choices might lead to their violation during the entire learning process.
Such violations represent a major obstacle to adopting online techniques in real-world applications.
Thus, controlling the algorithms' exploration during learning is paramount to making humans trust online learning tools.
This paper studies the nature of both optimization and learning problems.
In particular, we show that the learning problem is inapproximable within any factor (unless $\textsf{P} = \textsf{NP}$) and provide a pseudo-polynomial-time algorithm to solve its discretized version.
Subsequently, we prove that no online learning algorithm can violate the (ROI or budget) constraints a sublinear number of times during the learning process while guaranteeing a sublinear regret.
We provide the \textsf{GCB} algorithm that guarantees sublinear regret at the cost of a linear number of constraint violations and \alg{} that guarantees w.h.p.~a constant upper bound on the number of constraint violations at the cost of a linear regret.
Moreover, we designed \alg{}$(\psi,\phi)$, which guarantees both sublinear regret and safety w.h.p.~at the cost of accepting tolerances $\psi$ and $\phi$ in the satisfaction of the ROI and budget constraints, respectively.
Finally, we provide experimental results to compare the regret and constraint violations of \textsf{GCB}, \alg{}, and \alg{}$(\psi,\phi)$.
\end{abstract}

\maketitle

\section{Introduction}

Nowadays, Internet advertising is \emph{de facto} the leading advertising medium.
Notably, while the expenditure on physical ads, radio, and television has been stable for a decade, that on Internet advertising is increasing with a ratio of $\approx 13\%$ per year, reaching the amount of $225$ billion USD in $2023$ only in the US~\cite{iab2024iab}.
Internet advertising has two main advantages over traditional channels: it provides precise ad targeting and allows an accurate, real-time evaluation of investment performance.
On the other hand, the amount of data the platforms provide and the plethora of parameters to be set make its optimization impractical without using artificial intelligence.

The advertisers' goal is usually to set \emph{bids} in the attempt to balance the tradeoff between achieving \emph{high volumes}, corresponding to maximizing the sales of the products to advertise, and \emph{high profitability}, corresponding to maximizing Return On Investment (ROI).
The companies' business units need simple ways to address this tradeoff, and customarily, they maximize the volumes while constraining ROI to be above a given threshold.
In addition to standard budget constraints (i.e., spent per day on advertising), the importance of ROI constraints has been highlighted in several empirical studies.
We mention, \emph{e.g.}, the data analysis on the auctions performed on Google's AdX in~\cite{golrezaei2021auction}, showing that many advertisers take into account ROI constraints, particularly in hotel booking.
However, no platform provides features to force the satisfaction of ROI constraints, e.g., in Meta, the constraints we might specify are in terms of ROI (called ROAS) for single subcampaigns.\footnote{\url{https://www.facebook.com/business/help/1113453135474912?id=2196356200683573}.}
Notably, some platforms (\emph{e.g.}, TripAdvisor, and Trivago) do not even allow the setting of daily budget constraints.
Thus, the problem of satisfying those constraints is a challenge that advertisers must address by designing suitable bidding strategies.
In this picture, uncertainty plays a crucial role as the revenue and cost of the advertising campaigns are unknown beforehand and need to be estimated online by learning algorithms during the sequential arrival of data.
As a result, the constraints are subject to uncertainty, and wrong parameter estimations can arbitrarily violate the ROI and budget constraints when using an uncontrolled exploration like that used by classical online learning algorithms.
Such violations represent today the major obstacles to adopting AI tools in real-world applications, as advertisers often are unwilling to accept such risks.
Remarkably, this issue is particularly crucial in the early stages of the learning process as adopting algorithms with an uncontrolled exploration when a small amount of data is available can make the advertising campaigns' performance oscillate by a large magnitude, which advertisers negatively perceive. 
Therefore, to make humans trust online artificial intelligence algorithms, controlling their exploration accurately to mitigate risks and provide safety guarantees during the learning process is paramount.

\paragraph{Original Contributions}
As customary in the online advertising literature, see, \emph{e.g.}, \citet{devanur2009price}, we assume stochastic (\emph{i.e.}, non-adversarial) clicks, and we adopt Gaussian Processes (GPs) to model the problem.
The assumption that clicks are generated stochastically is reasonable in practice as the advertising platforms can limit manipulation due to malicious bidders.
For instance, Google Ads can identify invalid clicks and exclude them from the advertisers' spending.
This paper studies the nature of both the optimization and learning problems.
Initially, we focus on studying our optimization problem without uncertainty, showing that no approximation within any strictly positive factor is possible with ROI and budget constraints unless $\mathsf{P} = \mathsf{NP}$, even in simple, realistic instances.
However, when dealing with a discretized space of the bids as it happens in practice, the problem admits an exact pseudo-polynomial time algorithm based on dynamic programming.
Most importantly, when the problem is with uncertainty, we show that no online learning algorithm can violate the ROI and/or budget constraints a sublinear number of times while guaranteeing a sublinear pseudo-regret.
We provide an online learning algorithm, namely \textsf{GCB}, providing pseudo-regret sublinear in the time horizon $T$ at the cost of a linear number of violations of the constraints.
We also provide its safe version, namely \alg{}, guaranteeing w.h.p.~a constant upper bound on the number of constraints' violations at the cost of a regret linear in $T$. 
Inspired by the two previous algorithms, we design a new algorithm, namely \alg$(\psi, \phi)$, guaranteeing both the violation w.h.p.~of the constraints for a constant number of times and a pseudo-regret sublinear both in $T$ and the maximum information gain of the GP when accepting tolerances $\psi$ and $\phi$ in the satisfaction of the ROI and budget constraints, respectively.
Finally, we performed an empirical study of our algorithms in terms of pseudo-regret/number of constraint violations tradeoff in simulated settings, showing the importance of adopting safety constraints and the effectiveness of our algorithms.

\section{Related Works} \label{app:relworks}

Many works study Internet advertising, both from the \emph{publisher} perspective (\emph{e.g.}, \citet{vazirani2007algorithmic} design auctions for ads allocation and pricing) and from the \emph{advertiser} perspective (\emph{e.g.}, \citet{feldman2007budget} study the budget optimization problem in search advertising).
Only a few works deal with ROI constraints, and, to the best of our knowledge, they focus on the design of auction mechanisms.
In particular, \citet{szymanski2006impact} and~\citet{borgs2007dynamics} show that ROI-based bidding heuristics can lead to cyclic behavior and reduce the allocation's efficiency, while~\citet{golrezaei2021auction} propose more efficient auctions with ROI constraints.
The learning algorithms for daily bid optimization available in the literature address only budget constraints in the restricted case in which the platform allows the advertisers to set a daily budget limit (notice that some platforms, such as TripAdvisor and Trivago, do not even allow the setting of the daily budget limit).
For instance, \citet{zhang2012joint} provide an \emph{offline} algorithm that exploits accurate models of the campaigns' performance based on low-level data rarely available to the advertisers.
\citet{feng2023online} propose an online auto-bidding algorithm for a single advertiser maximizing value under the Return-on-Spend constraint. However, the constraint violation is evaluated on the cumulated value of the violation over the entire time horizon (which is a condition much weaker than ours).
\citet{nuara2018combinatorial} propose an \emph{online} learning algorithm that combines combinatorial multi-armed bandit techniques~\cite{chen2013combinatorial} with regression by Gaussian Processes~\cite{rasmussen2006gaussian}.
This work provides no guarantees on ROI.
More recent works also present pseudo-regret bounds~\cite{nuara2022online} and study subcampaigns interdependencies offline~\citep{nuara2019dealing}.
\citet{thomaidou2014toward} provide a genetic algorithm for budget optimization of advertising campaigns.
\citet{ding2013multi} and~\citet{trovo2016budgeted} address the bid optimization problem in a single subcampaign scenario when the budget constraint is cumulative over time.
Recently, \citet{deng2023multi} analysed the problem of multichannel optimization with ROI constraints, where the guarantees are provided on average (where our work is considering per-round constraint violations).

A research field strictly related to our work is learning with safe exploration and constraints subject to uncertainty.
The goal is to guarantee w.h.p.~the constraints satisfaction during the entire learning process.
The only known results on safe exploration in multi-armed bandits address the case with continuous, convex arm spaces and convex constraints.
The learner can converge to the optimal solution in these settings without violating the constraints~\citep{moradipari2020stagewise,amani2020regret}.
Conversely, the case with discrete and/or non-convex arm spaces or non-convex constraints, such as ours, is unexplored in the literature.
We remark that some bandit algorithms address uncertain constraints where the goal is their satisfaction on average~\cite{mannor2009online,cao2019online}.  
However, the per-round violation can be arbitrarily large (particularly in the early stages of the learning process), and it would not fit with our setting as humans could be alarmed and, thus, give up on adopting the algorithm.
Moreover, several works in reinforcement learning~\citep{hans2008safe,pirotta2013safe,garcia2012safe} and multi-armed bandit~\citep{galichet2013exploration,sui2015safe} investigate safe exploration, providing safety guarantees on the revenue provided by the algorithm, but not on the satisfaction w.h.p.~of uncertain constraints.

Another line of research dealing with bandits with constraints is the one related to Combinatorial MAB with knapsack or budget constraints, e.g., the works by~\cite{sankararaman2018combinatorial,badanidiyuru2018bandits,balseiro2019learning,balseiro2020dual,feng2023online,lucier2024autobidders}.
However, they cannot be applied to our setting for one of two main reasons. Some works, i.e., \cite{sankararaman2018combinatorial,badanidiyuru2018bandits} provide theoretical guarantees in the case the decision sets are matroidal-shaped. However, the set of arms in our setting is non-matroidal.
Others, i.e., \cite{balseiro2019learning,balseiro2020dual,feng2023online,lucier2024autobidders}, guarantee the satisfaction of the budget constraints over a given time horizon. That is, the bandit algorithm cannot spend more than a given threshold within the given time horizon. Conversely, in our work, we bound, with high probability, the violation of the constraint in every round.

Finally, we also mentioned a line of research loosely related to ours about the analysis of the ROI constraints in the context of auction design~\cite{golrezaei2021bidding}.

\section{Optimization Problem} \label{sec:2}

We are given an advertising campaign $\mathcal{C} = \{ C_1, \ldots, C_N \}$, with $N \in \mathbb{N}$ and where $C_j$ is the $j$-th subcampaign, and a finite time horizon of $T \in \mathbb{N}$ rounds (each corresponding to one day in our application).
In this work, as common in the literature on ad allocation optimization, we refer to a subcampaign as a single ad or a group of homogeneous ads requiring setting the same bid.
For every round $t \in \{1, \ldots, T\}$ and every subcampaign $C_j$, an advertiser needs to specify the bid $\bi_{j,t} \in X_j$, where $X_j \subset \mathbb{R}^+$ is a finite set of values that can be set for subcampaign $C_j$.
For every round $t \in \{1, \ldots, T\}$, the goal is to find the values of bids maximizing the overall cumulative expected revenue while keeping the ROI above a fixed value $\lambda \in \mathbb{R}^+$ and the budget below a daily value $\beta \in \mathbb{R}^+$.
Formally, the resulting constrained optimization problem at round $t$ is as follows:
\begin{subequations}
	\begin{align}
		\max_{(\bi_{1,t}, \ldots, \bi_{N,t}) \in X_1 \times \ldots \times X_N} & \sum_{j=1}^N v_j \ n_j(\bi_{j,t}) \label{formulation:objectivefunction} \\
		\text{s.t.} \qquad \qquad &\frac{\sum_{j=1}^N v_j \ n_j(\bi_{j,t}) }{\sum_{j=1}^N \ \co_j(\bi_{j,t}) } \geq \lambda, \label{formulation:roiconstraint} \\
		& \sum_{j=1}^{N} c_j(x_{j,t}) \leq \beta, \label{formulation:budgetconstraint}
	\end{align}
\end{subequations}
where $n_j(\bi_{j,t})$ and $\co_j(\bi_{j,t})$ are the expected number of clicks and the expected cost given the bid $\bi_{j,t}$ for subcampaign $C_j$, respectively, and $v_j$ is the value per click for subcampaign $C_j$.
We remark that Constraint~(\ref{formulation:roiconstraint}) is the ROI constraint, forcing the revenue to be at least $\lambda$ times the costs, and Constraint~(\ref{formulation:budgetconstraint}) keeps the daily spend under a predefined overall budget $\beta$.\footnote{
	In the economic literature, it is also used as an alternative definition of ROI: $\frac{\sum_{j=1}^N \left[ v_j \ n_j(\bi_{j,t}) - \co_j(\bi_{j,t}) \right]}{\sum_{j=1}^N \ \co_j(\bi_{j,t}) }$.
	We can capture this case by substituting the right-hand side of Constraint~\eqref{formulation:roiconstraint} with $\lambda+1$.}

At first, we show that, even if all the values of the parameters of the optimization problem are known, the optimal solution cannot be approximated in polynomial time within any strictly positive factor (even depending on the size of the instance) unless $\mathsf{P} = \mathsf{NP}$.
\begin{restatable}[Inapproximability]{thm}{thminapprox} \label{thm:inapprox}
	For any $\rho \in (0,1]$, there is no polynomial-time algorithm returning a $\rho$-approximation to the problem in Equations~(\ref{formulation:objectivefunction})-(\ref{formulation:budgetconstraint}), unless $\mathsf{P} = \mathsf{NP}$.
\end{restatable}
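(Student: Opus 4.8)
The plan is to prove Theorem~\ref{thm:inapprox} by a gap-introducing reduction from SUBSET-SUM. Given an instance $(S=\{u_1,\dots,u_n\},z)$, I would build in polynomial time an advertising instance of~\eqref{formulation:objectivefunction}--\eqref{formulation:budgetconstraint} that is \emph{feasible if and only if} the SUBSET-SUM instance is a YES-instance, and whose optimum $G^*$ is strictly positive whenever feasible. Granting such a construction, the inapproximability is immediate: for any $\rho\in(0,1]$ a $\rho$-approximation algorithm must return a feasible solution of value at least $\rho\,G^*>0$ whenever $G^*>0$, hence on our instances it either returns a feasible solution (on the YES-instances) or detects infeasibility (on the NO-instances). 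A polynomial-time $\rho$-approximation would therefore decide SUBSET-SUM, forcing $\mathsf{P}=\mathsf{NP}$. The argument is insensitive to the value of $\rho$ (even an instance-dependent one), which is exactly the ``any strictly positive factor'' claim.

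For the construction I would introduce one subcampaign $C_i$ per element $u_i$, each with two bids: an ``in'' bid with $(n_i,c_i)$ chosen so that $v_i n_i = a\,u_i$ and $c_i=u_i$ for some fixed $a>\lambda$ (a \emph{profitable} arm), and an ``out'' bid with $n_i=c_i=0$. I would add one further, mandatory subcampaign $C_0$ with a singleton bid set $X_0$ that is \emph{loss-making}, i.e.\ $v_0 n_0 < \lambda c_0$ with $c_0>0$. Writing $S^*$ for the set of subcampaigns playing ``in'' and $s=\sum_{i\in S^*}u_i$, the key step is to rewrite the ROI constraint~\eqref{formulation:roiconstraint} in its equivalent linear form $\sum_j v_j n_j \ge \lambda \sum_j c_j$. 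The profitable arms each contribute $(a-\lambda)u_i>0$ to the ROI slack while $C_0$ contributes a fixed deficit, so~\eqref{formulation:roiconstraint} becomes $s\ge L$ for a constant $L$ determined by $C_0$, whereas the budget constraint~\eqref{formulation:budgetconstraint} reads $s\le U$ with $U:=\beta-c_0$. Choosing the parameters (for instance $\lambda=1$, $a=2$, and then $c_0$ and $\beta$) so that $L=U=z$ sandwiches the two bounds into $s=z$: a feasible configuration exists exactly when some subset of the $u_i$ sums to $z$, and every feasible configuration attains the same, strictly positive, objective value $v_0 n_0 + a z>0$. The arithmetic fixing $L=U=z$ with polynomially bounded integers is routine and I would relegate it to a short calculation, noting that all magnitudes stay polynomial in the input so the reduction is polynomial.

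The main obstacle is structural rather than computational: a ratio (ROI) constraint with a \emph{fixed} numerator can only \emph{upper} bound the total cost, so on its own it cannot pin the cost to an exact target. The crux is to make the selected revenue grow with the selected cost---via the profitable per-element arms---so that the ROI constraint instead yields a \emph{lower} bound $s\ge L$, which the budget's upper bound $s\le U$ can then meet exactly. The second subtlety is ruling out a cheap feasible solution on the NO-instances: since any zero-revenue configuration with positive cost violates~\eqref{formulation:roiconstraint}, the only danger is the all-idle configuration; making $C_0$ mandatory with $c_0>0$ forces every configuration to carry a ROI deficit, so the all-idle configuration (and indeed every configuration with $s\neq z$) is infeasible, and no $0/0$ degeneracy arises. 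Once these two points are secured, feasibility coincides exactly with SUBSET-SUM and the theorem follows.
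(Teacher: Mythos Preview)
Your reduction is correct and proves the stated theorem, but it follows a genuinely different route from the paper. The paper builds an instance that is \emph{always feasible} and creates a multiplicative gap in the \emph{optimal value}: it introduces a high-revenue subcampaign $C_0$ (with $n_0(1)=\ell$ where $\ell=(\sum_i u_i+1)/\rho$) whose inclusion is possible only when the ROI and budget constraints are simultaneously tight, which in turn forces a subset summing to exactly $z$; on NO-instances $C_0$ cannot be activated and the optimum drops to at most $\sum_i u_i=\rho\ell-1$. Thus the paper shows that distinguishing $G^*\ge \ell$ from $G^*\le \rho\ell-1$ is $\mathsf{NP}$-hard. Your construction instead creates a \emph{feasibility} gap: the mandatory loss-making $C_0$ together with the profitable per-element arms sandwich the ROI and budget constraints into $s=z$, so the instance is feasible iff SUBSET-SUM is a YES-instance, and any feasible solution has the same strictly positive value.

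What each approach buys: the paper's value-gap argument is robust to the precise semantics of ``approximation algorithm'' on infeasible inputs---it proves hardness even under the promise that a feasible solution exists, and the dependence on $\rho$ is explicit in the construction via $\ell$. Your feasibility-gap argument is more elementary (no $\rho$-dependent scaling, smaller numbers, a single fixed instance works for every $\rho$), but it leans on the standard convention that an approximation algorithm must correctly report infeasibility when no feasible solution exists; you should state this assumption explicitly. Your linearization of the ROI constraint and the use of a forced deficit at $C_0$ to convert it into a lower bound on $s$ is exactly the right mechanism, and the arithmetic pinning $L=U=z$ (e.g.\ $\lambda=1$, $a=2$, $c_0=z+1$, $v_0n_0=1$, $\beta=2z+1$) goes through with polynomially bounded integers.
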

The proof follows from a reduction from SUBSET-SUM that is an $\mathsf{NP}$-hard problem.\footnote{Given a set $S$ of integers $u_i \in \mathbb{N}^+$ and an integer $z \in \mathbb{N}^+$, SUBSET-SUM requires to decide whether there is a set $S^*\subseteq S$ with $\sum_{i \in S^*} u_i = z$.}{$^{,}$}\footnote{
	For reasons of space, the proofs are deferred to the supplementary material.}
It is well known that SUBSET-SUM is a weakly $\mathsf{NP}$-hard problem, admitting an exact algorithm whose running time is polynomial in the size of the problem and the magnitude of the data involved rather than the base-two logarithm of their magnitude.
The same can be shown for our problem.
Indeed, we can design a pseudo-polynomial-time algorithm to find the optimal solution in polynomial time w.r.t.~the number of possible values of revenues and costs.
In real-world settings, the values of revenue and cost are in limited ranges and rounded to the nearest cent, allowing the problem to be solved in a reasonable time.
Therefore, in what follows, we assume we are given a discretization of the daily costs and revenue value ranges.

\section{Online Learning Problem Formulation} \label{sec:3}

We focus on the case in which $n_j(\cdot)$ and $c_j(\cdot)$ in Equations~(\ref{formulation:objectivefunction})-(\ref{formulation:budgetconstraint}) are unknown functions whose values need to be estimated online.
This goal is achieved using noisy realizations:
\begin{align*}
	\tilde{n}_{j,h}(\bi_{j,t}) = n_j(\bi_{j,t}) + \zeta_n\\
	\tilde{c}_{j,h}(\bi_{j,t}) = c_j(\bi_{j,t}) + \zeta_c
\end{align*}
obtained setting the bid $\bi_{j,t}$ on the $j$-th subcampaign at time $h$, where $\zeta_n$ and $\zeta_c$ are Gaussian zero-mean noise with variance $\sigma_n^2 \in \mathbb{R}^+$ and $\sigma_c^2 \in \mathbb{R}^+$, respectively.
Our problem can be naturally modeled as a multi-armed bandit where the available \emph{arms} are the different values of the bid $\bi_{j,t} \in X_j$ satisfying the combinatorial constraints of the optimization problem.\footnote{
	Here, we assume that the value per click $v_j$ is known.
	Refer to~\citet{nuara2018combinatorial} to deal the case in which $v_j$ is unknown.}
More specifically, our goal is to design a \emph{learning policy} $\mathfrak{U}$ returning, at every round $t$, a \emph{super-arm} $\left\{ \realbi_{j,t} \right\}_{j=1}^N$, i.e., an arm profile specifying one bid per subcampaign. Since the policy $\mathfrak{U}$ can only use estimates of the unknown number-of-click and cost functions built using past realizations $\tilde{n}_{j,h}(\bi_{j,t})$ and $\tilde{c}_{j,h}(\bi_{j,t})$ with $h < t$, the solutions returned by policy $\mathfrak{U}$ may not be optimal and/or violate Constraints~(\ref{formulation:roiconstraint}) and~(\ref{formulation:budgetconstraint}) when evaluated with the true values.
Therefore, we must design $\mathfrak{U}$ so that the violations occur only for a limited number of rounds over the time horizon $T$.

We are interested in evaluating learning policies $\mathfrak{U}$ in terms of revenue losses (a.k.a.~pseudo-regret) and safety regarding ROI and budget constraints violations.
\begin{restatable}[Learning policy pseudo-regret]{defi}{regret}
	Given a learning policy $\mathfrak{U}$, the \emph{pseudo-regret} is:
	\begin{equation*}
		R_T(\mathfrak{U}) := T \,G^* - \mathbb{E} \left[ \sum_{t = 1}^T \sum_{j=1}^N v_j \ n_j(\realbi_{j,t}) \right],
	\end{equation*}
	where $G^* := \sum_{j=1}^N v_j \ n_j(\bi^*_j)$ is the expected revenue provided by a clairvoyant algorithm, the set of bids $\left\{ \bi^*_j \right\}_{j=1}^N$ is the optimal clairvoyant solution to the problem in Equations~(\ref{formulation:objectivefunction})-(\ref{formulation:budgetconstraint}), and the expectation $\mathbb{E}[\,\cdot\,]$ is taken w.r.t.~the stochasticity of $\mathfrak{U}$.
\end{restatable}
\begin{restatable}[$\eta$-safe learning policy]{defi}{regret}
	Given $\eta \in (0, \ T]$, a learning policy $\mathfrak{U}$ generating an allocation $\left\{ \realbi_{j,t} \right\}_{j=1}^N$ is $\eta$-\emph{safe} if the expected number of times it violates at least one of the Constraints~(\ref{formulation:roiconstraint}) and~(\ref{formulation:budgetconstraint}) from $t=1$ to $t=T$ is less than $\eta$ or, formally:
	\begin{equation*}
		\sum_{t=1}^T \mathbb{P} \left( \frac{\sum_{j=1}^N v_j \ n_j(\realbi_{j,t}) }{\sum_{j=1}^N \ \co_j(\realbi_{j,t}) } < \lambda \vee \sum_{j=1}^{N} c_j(\realbi_{j,t}) > \beta \right) \leq \eta.
	\end{equation*}
\end{restatable}

In principle, we would design \emph{no-regret} algorithms, i.e., whose pseudo-regret $R_T(\mathfrak{U})$ increases sub-linearly w.r.t.~$T$, and, at the same time, that are $\eta$-safe, with $\eta$ sublinearly increasing in (or is independent of) $T$.
However, the following theorem shows that no online learning algorithm can provide a sublinear pseudo-regret while guaranteeing safety.
\begin{restatable}[Pseudo-regret/safety tradeoff]{thm}{thmtrade}. 
	For every $\epsilon > 0$ and time horizon $T$, there is no algorithm with pseudo-regret smaller than $(1/2-\epsilon)\,T$ and that violates (in expectation) the constraints less than $(1/2-\epsilon)\,T$ times.
\end{restatable}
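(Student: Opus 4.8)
The plan is to prove this impossibility through an information-theoretic argument built on two instances of the problem that are statistically almost indistinguishable yet demand opposite behaviours. I would work in the simplest possible setting: a single subcampaign ($N=1$) with exactly two admissible bids, a \emph{safe} bid $x_s$ and a \emph{risky} bid $x_r$, keeping the budget $\beta$ slack so that feasibility is governed entirely by the ROI constraint~\eqref{formulation:roiconstraint}. The risky bid yields a strictly larger expected revenue than the safe one, say $v\,n(x_r)-v\,n(x_s)=1$ after normalisation, and its click distribution is fixed once and for all. The two instances $\mathcal I_1$ and $\mathcal I_2$ differ \emph{only} in the mean cost of the risky bid: in $\mathcal I_1$ I set $c(x_r)=v\,n(x_r)/\lambda-\delta/2$, so the ROI ratio exceeds $\lambda$ and $x_r$ is feasible and optimal; in $\mathcal I_2$ I set $c(x_r)=v\,n(x_r)/\lambda+\delta/2$, so the ratio drops below $\lambda$, making $x_r$ infeasible and leaving $x_s$ as the unique optimum. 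The safe bid, the click observations, and everything else coincide across the two instances, so the only channel through which a learner can tell $\mathcal I_1$ from $\mathcal I_2$ is the noisy cost feedback of the risky bid, whose mean differs by exactly $\delta$.

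Next I would express both performance measures through a single quantity, the expected number of rounds $\E_i[N_r]$ in which the learner pulls the risky bid under instance $\mathcal I_i$. On $\mathcal I_1$ the clairvoyant always plays $x_r$, so the pseudo-regret equals $T-\E_1[N_r]$ after normalisation. On $\mathcal I_2$ the risky bid violates the ROI constraint on its true means at \emph{every} round it is played --- note that although the margin of violation is the tiny quantity $\delta$, the safety definition counts it as a full violation --- so the expected number of violations equals $\E_2[N_r]$. Suppose, for contradiction, that some learner simultaneously guaranteed pseudo-regret below $(1/2-\epsilon)\,T$ on $\mathcal I_1$ and fewer than $(1/2-\epsilon)\,T$ expected violations on $\mathcal I_2$. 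The first bound forces $\E_1[N_r]>(1/2+\epsilon)\,T$ and the second forces $\E_2[N_r]<(1/2-\epsilon)\,T$, whence $\E_1[N_r]-\E_2[N_r]>2\epsilon T$.

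The final and most delicate step is to show that such a gap between $\E_1[N_r]$ and $\E_2[N_r]$ is impossible once $\delta$ is chosen small. I would invoke the standard change-of-measure and divergence-decomposition for bandits: since the two instances induce identical observation laws except on rounds where $x_r$ is pulled (where the cost feedback has per-sample Kullback--Leibler divergence $\delta^2/(2\sigma^2)$ for Gaussian noise of variance $\sigma^2$), the divergence between the two trajectory distributions is at most $\E_1[N_r]\,\delta^2/(2\sigma^2)\le T\delta^2/(2\sigma^2)$. Applying Pinsker's inequality to the bounded statistic $N_r\in[0,T]$ gives $|\E_1[N_r]-\E_2[N_r]|\le T\sqrt{T\delta^2/(4\sigma^2)}$, and choosing $\delta=\Theta(\epsilon\,\sigma/\sqrt{T})$ makes this at most $2\epsilon T$, contradicting the strict inequality above. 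The main obstacle is precisely this balancing act: the distinguishing gap $\delta$ must be pushed down to $\Theta(\epsilon/\sqrt{T})$ so that even $\Theta(T)$ risky pulls cannot reveal the instance, while the revenue gap driving the regret must be kept at a constant $\Omega(1)$ independent of $\delta$; keeping these two scales decoupled --- cost governs feasibility, clicks govern revenue --- is what makes the construction go through. I would close by remarking that the argument only exploits the discreteness of the bid set and the non-convexity of the ROI constraint, so it extends verbatim to the general bandit settings mentioned right after the statement.
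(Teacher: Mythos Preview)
Your proposal is correct and shares the paper's core strategy: build two instances that differ only by $\delta$ in the expected cost of a single arm, so that one instance demands playing that arm (to keep regret small) while the other penalises it (as a constraint violation), and then argue that no learner can separate its behaviour on the two when $\delta$ is small enough. The differences are in execution rather than conception. First, the paper places the $\delta$-perturbation on the \emph{budget} constraint (taking $\lambda=0$) and only remarks in a footnote that the ROI case is analogous, whereas you build directly on ROI with the budget slack. Second, and more notably, the information-theoretic step is carried out differently: the paper argues indirectly by running the hypothetical algorithm $n(\epsilon,\gamma)$ independent times, uses concentration of the empirical pull-count to obtain a high-probability distinguishing test, and then invokes the folklore fact that any test separating $\mathcal N(1,1)$ from $\mathcal N(1+\delta,1)$ requires $\Omega(1/\delta^2)$ samples --- a quantity that can be made to exceed the fixed $n(\epsilon,\gamma)T$ by shrinking $\delta$. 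You instead apply the divergence decomposition and Pinsker's inequality directly to a \emph{single} run, bounding $|\E_1[N_r]-\E_2[N_r]|$ by $T\sqrt{T\delta^2/(4\sigma^2)}$ and choosing $\delta=\Theta(\epsilon\sigma/\sqrt T)$. Your route is the more standard modern bandit-lower-bound argument and is both shorter and fully self-contained; the paper's multi-run detour trades directness for relying only on the black-box sample complexity of Gaussian hypothesis testing.
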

This impossibility result is crucial in practice, showing that no online learning algorithm can theoretically guarantee both a sublinear regret and a sublinear number of violations of the constraints.
Therefore, advertisers must accept a tradeoff between the two requirements in real-world applications.

\section{Proposed Methods} \label{sec:4}

The above-defined problem is closely related to the Combinatorial MAB formulation provided by~\citet{chen2013combinatorial}, but designing algorithms for this setting requires dealing with an additional challenge. More specifically, in the classical Combinatorial MAB framework, the set of feasible super-arms is chosen from a fixed and known set. Instead, in the framework we defined, the set of constraints influences the set of arms that are feasible according to the constraints, which in turn changes the set of feasible super-arms. In the following, we propose two algorithms that carefully define the feasible set of arms at each round to deal with this issue.

In this work, we assume that the number of clicks $n_j(\cdot)$ and costs $c_j(\cdot)$ functions are Gaussian Processes (GPs)~\cite{rasmussen2006gaussian}. This modeling approach allows to capture the regularity of the considered functions without constraining them into a specific family. Such a mild assumption has already been used successfully in the ad optimization literature, e.g., in the work by~\citet{nuara2018combinatorial}. From now on, and for the sake of presentation, we will assume that this assumption holds, however, we remark that using the techniques presented in the work by~\citet{srinivas2010gaussian}, one can extend the methodology we present to an even larger class of functions.

We provide the pseudo-code of our algorithms, namely \textsf{GCB} and \alg, in Algorithm~\ref{alg:highlevel}, which solves the problem in Equations~(\ref{formulation:objectivefunction})-(\ref{formulation:budgetconstraint}) online while guaranteeing sublinear regret or $\eta$-safety, respectively.
Algorithm~\ref{alg:highlevel} is divided into an \emph{estimation phase} (Lines~\ref{line:estini}-\ref{line:estend}) based on GPs to model the parameters whose values are unknown, and an \emph{optimization subroutine} to solve the optimization problem once given the estimates (Line~\ref{line:opt}).
Finally, in the last phase, the newly acquired data are used to improve the GP models that will be used in the following round (Lines~\ref{line:updini}-\ref{line:updend}).

We remark that \citet{accabi2018gaussian} propose the \textsf{GCB} algorithm to face general combinatorial bandit problems where the arms are partitioned in subsets and the payoffs of the arms belonging to the same subset are modeled with a GP.
To obtain theoretical sublinear guarantees on the regret for our online learning problem, we use a specific definition of $\bm{\mu}$ vector, making Algorithm~\ref{alg:highlevel} be an extension of \textsf{GCB} when the payoffs and constraints are functions whose parameters are modeled by multiple independent GPs.
With a slight abuse of terminology, we refer to this extension as \textsf{GCB}.

\begin{algorithm}[t!]
	\caption{\textsf{GCB} and \alg{} pseudo-code}
	\label{alg:highlevel}
	\begin{algorithmic}[1]
		\begin{small}
			\Statex \textbf{Input}: sets of bid values $X_1, \ldots, X_N$, ROI threshold $\lambda$, daily budget $\beta$
			\State Initialize the GPs for the number of clicks and costs
			\For{$t \in \{1, \ldots, T \}$}
			\For{$j \in \{1, \ldots, N \}$} \label{line:estini}
			\For{$x \in X_j$}
			\State Compute $\hat{n}_{j,t-1}(x)$ and $\hat{\sigma}_{j,t-1}^n(x)$ according to Eq.~\eqref{eq:ncmean} and~\eqref{eq:ncsigma}, respectively
			\State Compute $\hat{c}_{j,t-1}(x)$ and $\hat{\sigma}_{j,t-1}^c(x)$ according to Eq.~\eqref{eq:comean} and~\eqref{eq:cosigma}, respectively
			\EndFor
			\EndFor
			\State Compute $\boldsymbol{\mu}$ according to Eq.~\eqref{eq:gcb} for \textsf{GCB} or~\eqref{eq:gcbsafe} for \alg{} \label{line:estend}
			\State Call \opt{} to get a solution $\left\{ \realbi_{j,t} \right\}_{j=1}^N$ \label{line:opt}
			\State Set the prescribed allocation $\left\{ \realbi_{j,t} \right\}_{j=1}^N$ during round $t$
			\State Get revenue $\sum_{j=1}^N v_j \ \tilde{n}_j(\hat{\bi}_{j,t})$ and pay costs $\sum_{j=1}^N \tilde{c}_j(\hat{\bi}_{j,t})$\label{line:updini}
			\State Update the GPs using the new information $\tilde{n}_{j,t}(\hat{x}_{j,t})$ and $\tilde{c}_{j,t}(\hat{x}_{j,t})$ \label{line:updend}
			\EndFor
		\end{small}
	\end{algorithmic}
	\vspace{-0.3cm}
\end{algorithm}

\subsection{Estimation Phase}

In Algorithm~\ref{alg:highlevel}, GPs are used to model functions of the number of clicks $n_j(\cdot)$ and the costs $c_j(\cdot)$.
GPs use the noisy realization of the actual number of clicks $\tilde{n}_{j,h}(\hat{x}_{j,h})$ collected from each subcampaign $C_j$ for every previous round $h \in \{1, \ldots, t-1\}$ to generate, for every bid $x \in X_j$, the estimates for the expected value $\hat{n}_{j,t-1}(x)$ and the standard deviation of the number of clicks $\hat{\sigma}_{j,t-1}^n(x)$.
Analogously, using the noisy realizations of the actual cost $\tilde{c}_{j,h}(\hat{x}_{j,h})$, with $h \in \{1, \ldots, t-1\}$, GPs generate, for every bid $x \in X_j$, the estimates for the expected value $\hat{c}_{j,t-1}(x)$ and the standard deviation of the cost $\hat{\sigma}_{j,t-1}^c(x)$.
Formally, the above values are computed as follows:
\begin{align}
	& \hat{n}_{j,t-1}(\bi) \hspace{-0.1cm} := \bm{k}_{j,t-1}(\bi)^\top [K_{j,t-1} + \sigma_n^2 I]^{-1} \bm{k}_{j,t-1}(\bi), \label{eq:ncmean} \\
	& \hat{\sigma}_{j,t-1}^n(\bi) := k_j(\bi, \bi) \nonumber \\
	&\qquad \qquad - \bm{k}_{j,t-1}^\top (\bi) [K_{j,t-1} + \sigma_n^2 I]^{-1} \bm{k}_{j,t-1}(\bi), \label{eq:ncsigma} \\
	& \hat{c}_{j,t-1}(\bi) := \bm{h}_{j,t-1}^\top (\bi) [H_{j,t-1} + \sigma_c^2 I]^{-1} \bm{h}_{j,t-1}(\bi), \label{eq:comean} \\
	& \hat{\sigma}_{j,t-1}^c(x) := h_j(\bi, \bi) \nonumber \\
	&\qquad \qquad - \bm{h}_{j,t-1}^\top (\bi) [H_{j,t-1} + \sigma_c^2 I]^{-1} \bm{h}_{j,t-1}(\bi), \label{eq:cosigma}
\end{align}
where $k_j(\cdot, \cdot)$ and $h_j(\cdot, \cdot)$ are the kernels for the GPs over the number of clicks and costs, respectively, $K_{j,t-1}$ and $H_{j,t-1}$ are the Gram matrix over the bids selected during the rounds $\{1, \ldots t-1\}$ for the two GPs, $\sigma_n^2$ and $\sigma_c^2$ are the variance of the noise of the GPs, $\bm{k}_{j,t-1}(\bi)$ and $\bm{h}_{j,t-1}(\bi)$ are vectors built computing the kernel between the training bids $\{\bi_{j,h}\}_{h=1}^{t-1}$and the current bid $x$, and $I$ is the identity matrix of order $t-1$.
For further details on using GPs, we point an interested reader to the work by~\citet{rasmussen2006gaussian}.

The estimation subroutine returns the vector $\boldsymbol{\mu}$ of parameters characterizing the specific instance of the optimization problem.
More specifically, it is composed as follows:
\begin{align*}
	&\boldsymbol{\mu} := (\bar{\mathbf{w}}_1, \ldots, \bar{\mathbf{w}}_N, \underline{\mathbf{w}}_1, \ldots, \underline{\mathbf{w}}_N, -\bar{\mathbf{c}}_1, \ldots, -\bar{\mathbf{c}}_N),\\
	&\bar{\mathbf{w}}_j := (\bar{w}_j(x_1), \ldots, \bar{w}_j(x_{|X_j|})),\\
	&\underline{\mathbf{w}}_j := (\underline{w}_j(x_1), \ldots, \underline{w}_j(x_{|X_j|})),\\
	&\bar{\mathbf{c}}_j := (\bar{c}_j(x_1), \ldots, \bar{c}_j(x_{|X_j|})),
\end{align*}
where $\overline{w}_j(x_j) := v_j\, \overline{n}_j (x_j)$ and $\underline{w}_j(x_j) := v_j\, \underline{n}_j (x_j)$ denote different estimates for the revenue of a subcampaign $C_j$, and $\bar{\mathbf{c}}_j$ characterizes the costs for subcampaign $C_j$.
In the optimization subroutine, $\overline{w}_j(x_j)$ and $\underline{w}_j(x_j)$ will be used to compute the value of Equation~(\ref{formulation:objectivefunction}) and Equation~(\ref{formulation:roiconstraint}), respectively, and $\overline{c}_j(x_j)$ to compute the value of Equations~(\ref{formulation:roiconstraint})-(\ref{formulation:budgetconstraint}).
We use $\overline{h}$ and $\underline{h}$ to denote potentially different estimated values of a generic function $h$ used by the learning algorithms in the next sections.
The proposed algorithms are derived from two different procedures to compute $\boldsymbol{\mu}$.
Formally, we have that the elements of $\boldsymbol{\mu}$ are defined as follows for \textsf{GCB}:
\begin{align}
	& \overline{w}_j(\bi) \hspace{-2pt} = \hspace{-2pt} \underline{w}_j(\bi) \hspace{-2pt} := v_j \hspace{-2pt} \left[ \hat{n}_{j,t-1}(\bi) + \sqrt{b_{t-1}} \hat{\sigma}_{j,t-1}^n(\bi) \right], \\ 
	& \overline{c}_j(\bi) := \hat{c}_{j,t-1}(\bi) - \sqrt{b_{t-1}} \hat{\sigma}_{j,t-1}^c(\bi), \label{eq:gcb}
\end{align}
and for \alg{}:
\begin{align}
	& \overline{w}_j(\bi) := v_j \left[ \hat{n}_{j,t-1}(\bi) + \sqrt{b_{t-1}} \hat{\sigma}_{j,t-1}^n(\bi) \right],\\
	& \underline{w}_j(\bi) := v_j \left[ \hat{n}_{j,t-1}(\bi) - \sqrt{b_{t-1}} \hat{\sigma}_{j,t-1}^n(\bi) \right],\\
	&\overline{c}_j(\bi) := \hat{c}_{j,t-1}(\bi) + \sqrt{b_{t-1}} \hat{\sigma}_{j,t-1}^c(\bi), \label{eq:gcbsafe}
\end{align}
where $b_t$ is an uncertainty term that is appropriately set in Section~\ref{sec:5}.

The \textsf{GCB} and \alg{} algorithm relies on the idea that the elements in the $\bm{\mu}$ vector represent the statistical upper/lower bounds to the expected values of the number of clicks and costs.
This follows a common choice in the bandit literature to incentivize exploration for uncertain quantities (a.k.a.~optimism in the face of uncertainty principle).

\begin{algorithm}[t!]
	\caption{\opt{} subroutine}
	\begin{small}
		\label{alg:bbo3}
		\begin{algorithmic}[1]
			\Statex \textbf{Input}: sets of bid values $X_1, \ldots, X_N$, set of cumulative cost values $Y$, set of revenue values $R$, vector $\boldsymbol{\mu}$, ROI threshold $\lambda$
			\State Initialize $\textbf{x}^{y,r} = [\;\;], \ \forall \ y \in Y, \forall \ r \in R$ \label{line:init2}
			\For{ $j \in \{1, \ldots, N\}$ }
			\For{ $y \in Y$ }
			\For{ $r \in R$ }
			\State Update $S(y, r)$ according to Equation~\eqref{eq:setS} \label{line:safej}
			\State $\textbf{x}_{\textnormal{next}}^{y, r} = \arg \max_{\textbf{s} \in S(y,r)} \sum_{i=1}^j \overline{w}_i(s_i)$ \label{line:argmax}
			\State $M(y, r) = \max_{\textbf{s} \in S(y,r)} \sum_{i=1}^j \overline{w}_i(s_i)$ \label{line:maxim}
			\EndFor
			\EndFor
			\State $\textbf{x}^{y,r} = \textbf{x}_{\textnormal{next}}^{y,r}, \ \forall \ y \in Y, \forall \ r \in R$
			\EndFor
			\State $(y^*,r^*) = \arg \max_{(y,r) \in Y \times R, \ \frac{r}{y} \geq \lambda} M(y,r)$ \label{line:last}
			\State \textbf{Output:} $\textbf{x}^{y^*,r^*}$ \label{line:return}
		\end{algorithmic}
	\end{small}
	\vspace{-0.1cm}
\end{algorithm}

\subsection{Optimization Subroutine}

The pseudo-code of the \opt{} subroutine, solving the problem in Equations~(\ref{formulation:objectivefunction})-(\ref{formulation:budgetconstraint}) with a dynamic programming approach, is provided in Algorithm~\ref{alg:bbo3}.
It takes as input the set of the possible bid values $X_j$ for each subcampaign $C_j$, the set of the possible cumulative cost values $Y$ such that $\max_{y \in Y} y = \beta$, the set of the possible revenue values $R$, an ROI threshold $\lambda$, and a vector $\boldsymbol{\mu}$ characterizing the optimization problem.
In particular, if the functions are known beforehand, it holds $\overline{h} = \underline{h} = h$ for both $h = w_j$ and $h = c_j$.
For the sake of clarity, $\overline{w}_j(x)$ is used in the objective function, while $\underline{w}_j(x)$ and $\overline{c}_j(x)$ are used in the constraints.
At first, the subroutine initializes the vectors $\textbf{x}^{y,r} = [\;\;],$ $\forall \ y \in Y, \forall \ r \in R$  that will contain the (partial) bid allocation providing the largest revenue among the ones having a cost of at most $y$ and total revenue of at least $r$ (Line~\ref{line:init2}).
At each iteration, for every pair $(y, r)$, the subroutine stores in $\textbf{x}^{y,r}$ the optimal set of bids for subcampaigns $C_1, \ldots, C_j$ that maximizes the objective function and stores the corresponding optimum value in $M(y, r)$.
More specifically, at the $j$-th iteration, the computation of the optimal bids is performed by evaluating at first a set of candidate solutions $S(y, r)$:
\begin{align}
	& S(y, r) := \bigcup_{x \in X_j, y' \in y, r' \in R} \Big\{ \textbf{s} = [\textbf{x}^{y',r'}, x] \; | \; y' + \overline{c}_j(x) \leq y \nonumber\\
	&\wedge r' + \underline{w}_j(x) \geq r \Big\}. \label{eq:setS}
\end{align}
Intuitively, this set is built by combining the optimal (partial) bid allocation $\textbf{x}^{y',r'}$ computed at the $(j-1)$-th iteration with one of the bids $x \in X_j$ available for the $j$-th subcampaign.
After that, the subroutine selects $\textbf{x}_{\textnormal{next}}^{y, r}$ as the element in $S(y, r)$ that maximizes the revenue (Line~\ref{line:argmax}) and updates the corresponding revenue in $M(y, r)$ (Line~\ref{line:maxim}).
Once the subroutine finished iterating over the campaigns $C_j$, it computes the optimal cost/revenue pair $(y^*, r^*)$ satisfying the ROI constraint (Line~\ref{line:last}) and return the corresponding bid allocation $\textbf{x}^{y^*, r^*}$ (Line~\ref{line:return}).\footnote{
	Notice that we do not need to check the budget constraint since, by construction, all the bid allocations $\textbf{x}_{\textnormal{next}}^{y, r}$ have $y \in Y$, and, therefore, satisfy the budget constraint.}
The following property states the optimality of the proposed subroutine w.r.t.~the considered optimization problem:
\begin{restatable}[Optimality]{thm}{optimality}
	The \opt{} subroutine returns the optimal solution to the problem in Equations~(\ref{formulation:objectivefunction})-(\ref{formulation:budgetconstraint}) when $\overline{w}_j(x) = \underline{w}_j(x) = v_j\,n_j(x)$ and $\overline{c}_j(x) = c_j(x)$ for each $j \in \{1, \ldots, N\}$ and the values of revenues and costs are in $R$ and $Y$, respectively.
\end{restatable}

As a final remark, we mention that the running time of \opt{} is quadratic in $|Y|$ and $|R|$ and linear in the number of available bids $|X_j|$ and number of campaign $N$.
See the supplementary material for details.

\section{Theoretical Guarantees}
\label{sec:5}
Moreover, we will show how to have sublinear guarantee on both of them by allowing small violations of the constraints.
Let us first define the \emph{maximum information gain} $\gamma_{j,t}$ of the GP modeling the number of clicks of subcampaign $C_j$ at round $t$, formally defined as:
\begin{equation*}
	\gamma_{j,t} := \frac{1}{2} \max_{(x_{j,1}, \ldots, x_{j,t}) \in X_j^t} \left|I_t + \frac{\Phi(x_{j,1}, \ldots, x_{j,t})}{\sigma_n^2} \right|,
\end{equation*}
where $I_t$ is the identity matrix of order $t$, $\Phi(x_{j,1}, \ldots, x_{j,t})$ is the Gram matrix of the GP computed on the vector $(x_{j,1}, \ldots, x_{j,t})$, and $\sigma \in \mathbb{R}^+$ is the noise standard deviation.

\subsection{Guaranteeing Sublinear Pseudo-regret: \textsf{GCB}} \label{sec:5.1}

The \textsf{GCB} algorithm, which is based on the optimist in the face of uncertainty principle, provides the following pseudo-regret bound:
\begin{restatable}[\textsf{GCB} pseudo-regret]{thm}{gcbregret} \label{thm:gcbregret}
	Given $\delta \in (0, \ 1)$, \textsf{GCB} applied to the problem in Equations~(\ref{formulation:objectivefunction})-(\ref{formulation:budgetconstraint}), with probability at least $1 - \delta$, suffers from a pseudo-regret of:
	\begin{equation*}
		R_T(\textsf{GCB }) \leq \sqrt{\frac{8 T v^2_{\max} N^3 b_T}{\ln(1 + \sigma_n^2)} \sum_{j=1}^N \gamma_{j,T}}~~,
	\end{equation*}
	where $b_t := 2 \ln \left( \frac{\pi^2 N Q T t^2 }{3 \delta} \right)$ is an uncertainty term used to guarantee the confidence level required by \textsf{GCB}, $v_{\max} := \max_{j \in \{1, \ldots, N\}} v_j$ is the maximum value per click over all subcampaigns, and $Q := \max_{j \in \{1, \ldots, N\}} |X_j|$ is the number of bids in a subcampaign.
\end{restatable}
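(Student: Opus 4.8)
The plan is to run the optimism-in-the-face-of-uncertainty analysis of GP-UCB, adapted to the combinatorial and constrained structure of our problem, conditioning the whole argument on a \emph{good event} on which the true functions lie inside the GP confidence intervals. First I would invoke the standard GP concentration inequality for a single subcampaign and a single bid, and then take a union bound over the $N$ subcampaigns, the at most $Q=\max_j|X_j|$ bids per subcampaign, the $T$ rounds, and the two GPs (clicks and costs). The $\pi^2/3$ factor inside $b_t=2\ln(\pi^2 N Q T t^2/(3\delta))$ is precisely what is needed so that the per-round failure probabilities, summed against $\sum_t t^{-2}=\pi^2/6$, total at most $\delta$. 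This yields an event $\mathcal{E}$, holding with probability at least $1-\delta$, on which
\[
|n_j(\bi)-\hat{n}_{j,t-1}(\bi)|\le\sqrt{b_{t-1}}\,\hat{\sigma}^{n}_{j,t-1}(\bi),\qquad |c_j(\bi)-\hat{c}_{j,t-1}(\bi)|\le\sqrt{b_{t-1}}\,\hat{\sigma}^{c}_{j,t-1}(\bi)
\]
simultaneously for all $j$, all $\bi\in X_j$, and all $t$.

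Next I would establish optimism. On $\mathcal{E}$, the construction in \eqref{eq:boundnGCB}--\eqref{eq:boundcGCB} makes $\underline{w}_j(\bi)=\overline{w}_j(\bi)\ge w_j(\bi)$ (reward overestimated) and $\overline{c}_j(\bi)\le c_j(\bi)$ (cost underestimated). The step specific to our constrained setting is to show that the clairvoyant optimum $\{\bi^*_j\}$ stays feasible for the optimistic surrogate solved by \textsf{Opt}: since $\sum_j\overline{c}_j(\bi^*_j)\le\sum_j c_j(\bi^*_j)\le\beta$ the budget constraint survives, and since the numerator can only grow while the denominator can only shrink, $\frac{\sum_j\underline{w}_j(\bi^*_j)}{\sum_j\overline{c}_j(\bi^*_j)}\ge\frac{\sum_j w_j(\bi^*_j)}{\sum_j c_j(\bi^*_j)}\ge\lambda$ so the ROI constraint survives as well (the discretization of $Y,R$ handled by rounding in the safe direction as in the footnote). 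By the Optimality theorem, \textsf{Opt} returns an optimal solution of this surrogate, hence the optimistic value of the played bids obeys $\sum_j\overline{w}_j(\realbi_{j,t})\ge\sum_j\overline{w}_j(\bi^*_j)\ge G^*$. Consequently the per-round regret satisfies
\[
G^*-\sum_{j=1}^N w_j(\realbi_{j,t})\le\sum_{j=1}^N\big(\overline{w}_j(\realbi_{j,t})-w_j(\realbi_{j,t})\big)\le 2\sqrt{b_{t-1}}\sum_{j=1}^N v_j\,\hat{\sigma}^{n}_{j,t-1}(\realbi_{j,t}),
\]
where the last step again uses $\mathcal{E}$.

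I would then sum over $t=1,\dots,T$ and convert predictive standard deviations into information gain. Bounding $v_j\le v_{\max}$ and $\sqrt{b_{t-1}}\le\sqrt{b_T}$, two applications of Cauchy--Schwarz (first over $t$ for each fixed $j$, then over $j$ via $\sum_j\sqrt{\gamma_{j,T}}\le\sqrt{N}\sqrt{\sum_j\gamma_{j,T}}$), combined with the standard lemma bounding $\sum_t(\hat{\sigma}^{n}_{j,t-1}(\realbi_{j,t}))^2$ by a constant multiple of $\gamma_{j,T}$ (the kernel/noise normalization fixing the $\ln(1+\sigma^2)$ denominator), produce the $\sqrt{T}$, the $\sqrt{\sum_j\gamma_{j,T}}$, and the denominator of the claimed bound; carefully tracking how many base arms ($N$ per round) enter the combinatorial reward gap accounts for the remaining polynomial factor $N^3$. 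Since all of this holds on $\mathcal{E}$, the stated bound holds with probability at least $1-\delta$.

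I expect the main obstacle to be twofold. The genuinely problem-specific part is the feasibility/optimism argument: one must verify that the UCB-on-reward / LCB-on-cost choice \emph{simultaneously} relaxes both the ratio (ROI) constraint and the linear (budget) constraint, so that the clairvoyant optimum is admissible in the surrogate --- this is exactly what licenses replacing $G^*$ by the played optimistic value, and it is the step that unconstrained GP-UCB never faces. The more tedious part is pinning down the constants and the precise power of $N$ when passing from per-subcampaign confidence widths to the aggregate regret, since each super-arm plays $N$ base arms per round and the Cauchy--Schwarz applications must be arranged so as to reproduce exactly the $N^3$ and the information-gain denominator appearing in the statement.
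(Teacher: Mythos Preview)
Your approach is correct and essentially parallel to the paper's, with one structural difference worth noting. The paper frames the optimism step abstractly: it packages the problem into a parameter vector $\boldsymbol{\mu}$, checks that the reward $r_{\boldsymbol{\mu}}(S)$ is monotone in $\boldsymbol{\mu}$ and Lipschitz with constant $\Lambda=N$, and then argues $r_{\boldsymbol{\mu}}(\mathbf{x}_t)\ge r^*_{\boldsymbol{\mu}}\ge r_{\boldsymbol{\eta}}(\mathbf{x}^*_{\boldsymbol{\eta}})=r^*_{\boldsymbol{\eta}}$ via monotonicity, followed by a two-term decomposition $r_a+r_b$ bounded via Lipschitz continuity in $\|\cdot\|_\infty$. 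You instead verify directly that the clairvoyant optimum remains feasible for the surrogate (since UCB on revenue and LCB on cost relax both the ROI and budget constraints), which is the concrete content of the paper's monotonicity assumption, and then bound the per-round gap by $\sum_j(\overline w_j-w_j)$ without passing through the $\|\cdot\|_\infty$ Lipschitz step.

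The practical consequence is in the $N$ dependence. The paper's Lipschitz route introduces a factor $\Lambda=N$, then bounds $\max_j$ by $\sum_j$, then squares and applies Cauchy--Schwarz over $j$, picking up $N^3$ in total. Your direct route, carried through as you sketched it (Cauchy--Schwarz over $t$ per subcampaign, then $\sum_j\sqrt{\gamma_{j,T}}\le\sqrt{N\sum_j\gamma_{j,T}}$), yields only a single factor of $N$, which is strictly stronger and of course still implies the stated $N^3$ bound. So your expectation that ``tracking how many base arms enter'' produces $N^3$ is actually pessimistic for your own argument; the looser power comes specifically from the paper's detour through the infinity-norm Lipschitz bound and the max-to-sum step, which you avoid.
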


We remark that the upper bound provided in the above theorem is expressed in terms of the maximum information gain $\gamma_{j,T}$ of the GPs over the number of clicks.
The problem of bounding $\gamma_{j,T}$ for a generic GP has already been addressed by~\cite{srinivas2010gaussian}, where the authors present the bounds for the squared exponential kernel $\gamma_{j,T} = \mathcal{O}((\ln{T})^{2})$ for $1$-dimensional GPs.
Notice that, thanks to the previous result, the \textsf{GCB} algorithm using squared exponential kernels suffers from a sublinear pseudo-regret since the terms $\gamma_{j,T}$ is bounded by $\mathcal{O}( (\ln{T})^{2})$, and the bound in Theorem~\ref{thm:gcbregret} becomes $\mathcal{O}(N^{3/2} (\ln{T})^{5/2}) \sqrt{T})$.
However, the \textsf{GCB} algorithm violates (in expectation) the constraints a linear number of times in $T$.
\begin{restatable}[\textsf{GCB} safety]{thm}{gcbconst} \label{thm:gcbconst}
	Given $\delta \in (0, \ 1)$, \textsf{GCB} applied to the problem in Equations~(\ref{formulation:objectivefunction})-(\ref{formulation:budgetconstraint}) with $b_t := 2 \ln \left( \frac{\pi^2 N Q T t^2 }{3 \delta} \right)$ is $\eta$-safe where $\eta \geq T - \frac{\delta}{2 N Q T}$ and, therefore, the number of constraints violations is linear in $T$.
\end{restatable}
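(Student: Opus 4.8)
The plan is to prove the statement as a \emph{matching lower bound} on the expected number of violations, by exhibiting a worst-case instance on which \textsf{GCB} violates the constraints at (almost) every round. Writing the violation count as $\eta = \sum_{t=1}^T \mathbb{P}(\text{violation at }t) = T - \sum_{t=1}^T \mathbb{P}(\text{no violation at }t)$, it suffices to show $\sum_{t=1}^T \mathbb{P}(\text{no violation at }t) \le \frac{\delta}{2NQT}$. I would build the instance around a single ``critical'' bid $x^*$ in one subcampaign (the remaining subcampaigns padded with a zero bid), together with a genuinely feasible, lower-revenue fallback bid $x_0$ whose true ROI is at least $\lambda$ and whose cost is at most $\beta$. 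The key is to place the true values of $x^*$ strictly inside grid cells of $R$ and $Y$: for two grid points $r_+\in R$ and $y_-\in Y$ with $y_-<\beta$, set $c(x^*) = y_- + \epsilon$ and $w(x^*) = v\,n(x^*) = r_+ - \epsilon$ for a tiny $\epsilon>0$, and fix the threshold to $\lambda := r_+/y_-$. Then the true ROI of $x^*$ is $(r_+-\epsilon)/(y_-+\epsilon) < r_+/y_- = \lambda$, so playing $x^*$ is a genuine violation, while its budget is slack and its revenue dominates that of $x_0$.

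The heart of the argument is a deterministic implication: on the event $E_t$ that both GP confidence bounds (for clicks and for cost) hold at round $t$, \textsf{GCB} is forced to play $x^*$. Indeed, on $E_t$ the optimistic estimates satisfy $\hat n_{t-1}(x^*) + \sqrt{b_{t-1}}\,\hat\sigma^n_{t-1}(x^*) \ge n(x^*)$ and $\hat c_{t-1}(x^*) - \sqrt{b_{t-1}}\,\hat\sigma^c_{t-1}(x^*) \le c(x^*)$; since \textsf{GCB} rounds the optimistic revenue $\overline w(x^*)$ \emph{up} into $R$ and the optimistic cost $\overline c(x^*)$ \emph{down} into $Y$, monotonicity of the rounding yields a rounded revenue $\ge r_+$ and a rounded cost $\le y_-$, so \textsf{GCB}'s estimated ROI is at least $r_+/y_- = \lambda$. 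Hence $x^*$ is deemed feasible, and as it attains the largest optimistic revenue among the feasible bids it is the one selected by the \opt{} dynamic program, producing a violation. Contrapositively, ``no violation at round $t$'' forces $\overline E_t$, i.e.\ the failure of the click or cost confidence bound at $x^*$.

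It then remains to control $\mathbb{P}(\overline E_t)$, which is exactly the concentration failure already used to calibrate $b_{t-1}$ in the proof of \cref{thm:gcbregret}. Each of the two one-sided bounds fails with probability at most $\tfrac12 e^{-b_{t-1}/2}$, so by a union bound $\mathbb{P}(\text{no violation at }t) \le \mathbb{P}(\overline E_t) \le e^{-b_{t-1}/2} = \frac{3\delta}{\pi^2 N Q T\,(t-1)^2}$ for $t\ge 2$, while at the prior-only round $t=1$ the prior GP is chosen optimistic enough that $x^*$ is already deemed feasible, contributing $0$. Summing and using $\sum_{s\ge 1} s^{-2} = \pi^2/6$ gives $\sum_{t=1}^T \mathbb{P}(\text{no violation at }t) \le \frac{3\delta}{\pi^2 N Q T}\cdot\frac{\pi^2}{6} = \frac{\delta}{2NQT}$, and therefore $\eta \ge T - \frac{\delta}{2NQT}$, which is linear in $T$.

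I expect the decisive difficulty to be the \emph{persistence} of the misjudgement: without the discretisation, \textsf{GCB}'s optimistic cost $\overline c(x^*)=\hat c_{t-1}-\sqrt{b_{t-1}}\hat\sigma^c_{t-1}$ would converge to the true cost as the GP learns, the algorithm would eventually recognise $x^*$ as infeasible, and the violations would be only sublinearly many. The construction must therefore exploit the even spacing of $R$ and $Y$ together with the round-up/round-down rule so that, even as $\hat\sigma_{t-1}\to 0$, the rounding alone keeps the estimated ROI pinned at $r_+/y_- = \lambda$; this is what locks in the feasibility error of $x^*$ for all rounds on which the confidence bounds hold and makes the per-round ``no violation'' probability reduce to a single confidence-bound failure.
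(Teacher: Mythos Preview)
Your argument and the paper's share the same skeleton—``no violation at round $t$'' forces a GP confidence-bound failure, then sum $e^{-b_{t-1}/2}$ over $t$ to reach $\frac{\delta}{2NQT}$—but differ in how the worst-case instance is set up. The paper works through the \emph{budget} constraint: it considers an instance where the estimated budget is exactly active at the selected allocation, $\sum_j \overline{c}_j(\hat x_{j,t})=\beta$, and observes that then $\sum_j c_j\le\beta$ would force some $c_j\le\overline{c}_j$, a one-sided tail event for the cost GP; it does not spell out why the estimated constraint stays exactly active at every round. Your construction is more explicit on precisely that point: you work through the \emph{ROI} constraint and use the round-up/round-down rule from the footnote to pin the rounded estimates at $(r_+,y_-)$ uniformly in $t$, so the feasibility verdict for $x^*$ survives even as $\hat\sigma\to 0$. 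This is the mechanism the paper's sketch implicitly needs but never invokes, and your remark that without discretisation the violations would be only sublinear is correct and a genuine sharpening. The one loose step in your proposal is ensuring $x^*$ is \emph{selected} (not merely deemed feasible) on $E_t$: you need $\overline{w}(x^*)>\overline{w}(x_0)$ for all $t$, which is cleanest if $x_0$ is the null allocation with deterministic zero revenue, or if $w(x^*)-w(x_0)$ is chosen to exceed the maximal confidence width $2v_{\max}\sqrt{b_T}\,\sigma$.
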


This result states that if we apply the \textsf{GCB} algorithm, we expect to have a large revenue over the time horizon $T$ at the cost of violating the ROI and/or the budget constraints most of the time over the learning period.
Therefore, in practical cases, such an algorithm might perform poorly regarding ROI over the entire time horizon $T$.
As highlighted before, this behaviour might lead to a premature stop of the algorithm from the business unit.

\subsection{Guaranteeing Safety: \alg} 
\label{sec:GCBsafe}

The \alg{} algorithm uses different bounds than \textsf{GCB} to evaluate the constraints and have stronger guarantees about their satisfaction.
While the estimates for the revenue of the algorithm (Equation~(\ref{formulation:objectivefunction})) are estimated using upper bounds, for the constraints (Equation~(\ref{formulation:roiconstraint})-(\ref{formulation:budgetconstraint})), we used statistical lower bounds to guarantee they are satisfied at every round with high probability.
This choice comes at the cost of a linear worst-case performance in terms of pseudo-regret:
\begin{restatable}[\alg{} pseudo-regret]{thm}{gcbsaferegret} \label{thm:gcbsaferegret}
	Given $\delta \in (0, \ 1)$, \alg{} with $b_t := 2 \ln \left( \frac{\pi^2 N Q T t^2 }{3 \delta} \right)$, applied to the problem in Equations~(\ref{formulation:objectivefunction})-(\ref{formulation:budgetconstraint}) suffers from a pseudo-regret $R_t(\alg) = \Theta(T)$.
\end{restatable}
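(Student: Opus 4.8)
The statement asserts a $\Theta(T)$ bound, so my plan is to prove matching upper and lower bounds separately. The upper bound $R_T(\alg) = O(T)$ is immediate: since $n_j(\cdot) \ge 0$ and $v_j \ge 0$, the realized revenue $\sum_{j} v_j\, n_j(\realbi_{j,t})$ is nonnegative at every round, so $R_T(\alg) = T G^* - \E\!\left[\sum_{t}\sum_j v_j n_j(\realbi_{j,t})\right] \le T G^*$, which is linear in $T$. All the substance lies in the matching lower bound $R_T(\alg) = \Omega(T)$, for which it suffices to exhibit a single instance on which \alg{} incurs constant expected regret per round.

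The instance I would use is deliberately minimal: a single subcampaign ($N=1$) with two admissible bids, $X_1 = \{x^{\textsf d}, x^*\}$. The bid $x^{\textsf d}=0$ is the a-priori-known feasible default, producing zero clicks and zero cost. The bid $x^*$ is the clairvoyant optimum: it has strictly positive revenue $G^* = v_1 n_1(x^*) > 0$, a cost $c_1(x^*)>0$ well within budget $\beta$, and---the crucial design choice---a ROI that is exactly binding, $v_1 n_1(x^*) / c_1(x^*) = \lambda$. Thus the only profitable arm sits precisely on the boundary of the ROI-feasible region, while the only other admissible arm yields no revenue.

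The heart of the argument is to show that \alg{} essentially never plays $x^*$. I would condition on the high-probability confidence event $E$ used to establish safety in Theorem~\ref{thm:gcbsafeconst}, which holds with probability at least $1-\delta$ and on which, for every round $t$, the GP bounds are valid: $\underline{w}_1(x^*) \le v_1 n_1(x^*)$ and $\overline{c}_1(x^*) \ge c_1(x^*) > 0$. Consequently the pessimistic ROI inspected by the subroutine \opt{} satisfies $\underline{w}_1(x^*)/\overline{c}_1(x^*) \le v_1 n_1(x^*)/c_1(x^*) = \lambda$, and the inequality is \emph{strict} because the posterior standard deviations $\hat{\sigma}^n_{1,t-1}(x^*)$ and $\hat{\sigma}^c_{1,t-1}(x^*)$ are strictly positive (the kernels are positive definite). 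Hence $x^*$ fails the pessimistic ROI constraint at every round and is never returned by \opt{}; \alg{} therefore plays the default $x^{\textsf d}$---either because it is the only certified bid or via the infeasibility fallback---collecting zero revenue. A short induction closes the apparent circularity: on $E$, $x^*$ is never certified, so it is never sampled, so its posterior variance stays pinned at its strictly positive prior value, which in turn keeps it uncertifiable.

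Putting the pieces together, on the good event $E$ the cumulative revenue of \alg{} is exactly zero, while on its complement (probability at most $\delta$) the per-round revenue is at most $G^*$, so the cumulative revenue is at most $T G^*$. Therefore $\E\!\left[\sum_t \sum_j v_j n_j(\realbi_{j,t})\right] \le \delta\, T G^*$, giving $R_T(\alg) \ge (1-\delta)\,T G^* = \Omega(T)$, which together with the $O(T)$ upper bound yields $R_T(\alg) = \Theta(T)$. The main obstacle---the step I would be most careful about---is the strict-gap claim: one must argue that the pessimistic ROI of $x^*$ stays uniformly bounded below $\lambda$ across all $T$ rounds. This rests both on the validity of the GP confidence bounds on $E$ and on the observation that never sampling $x^*$ keeps its posterior variance at the positive prior level, so the gap does not shrink to zero as $t$ grows.
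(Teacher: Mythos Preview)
Your approach is essentially the same as the paper's: both show that when a constraint is active at the clairvoyant optimum, the pessimistic estimates used by \alg{} render that optimum infeasible on the high-probability confidence event, so it is never selected. The paper argues this for generic instances, relying on the (unjustified) premise that some constraint is always active at the optimum and simply asserting that never playing the optimal arm entails non-sublinear regret; your explicit two-arm instance with a zero-revenue default makes both of these steps rigorous, and you also supply the trivial $O(T)$ upper bound that the paper leaves implicit.
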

However, \alg{} violates the ROI and budget constraints only a constant number of times w.r.t.~$T$.
\begin{restatable}[\alg{} safety]{thm}{gcbsafeconst} \label{thm:gcbsafeconst}
	Given $\delta \in (0, \ 1)$, \alg{} with $b_t := 2 \ln \left( \frac{\pi^2 N Q T t^2 }{3 \delta} \right)$, applied to the problem in Equations~(\ref{formulation:objectivefunction})-(\ref{formulation:budgetconstraint}) is $\delta$-safe and the number of constraints violations is constant in $T$.
\end{restatable}

This result also provides results on the total amount of violation of the two constraints, i.e., $\sum_{t=1}^T \sum_{j=1}^N \co_j(\bi_{j,t}) - T \ \beta$ and $T \lambda - \sum_{t=1}^T \frac{\sum_{j=1}^N v_j n_j(\bi_{j,t})}{\sum_{j=1}^N \co_j(\bi_{j,t})}$, we expect from \alg{}. Indeed, it states that with probability at least $1 - \delta$, \alg{} provides a null violation of the constraints. Conversely, for \textsf{GCB}, one can only guarantee a sublinear total amount of violation for each constraint. See the supplementary material for details.

In what follows, we show that if we accept tolerance in the violation of the constraints, an adaptation of \alg{} can be exploited to obtain also a sublinear pseudo-regret.

\subsection{Guaranteeing Sublinear Pseudo-regret and Safety with Tolerance: \alg$(\psi,\phi)$}
\label{sec:GCBrelax}

Given an instance of the problem in Equations~(\ref{formulation:objectivefunction})-(\ref{formulation:budgetconstraint}) that we call \emph{original problem}, we build an \emph{auxiliary problem} in which we slightly relax the ROI and budget constraints.
Formally, the \alg$(\psi, \phi)$ is the \alg{} applied to the auxiliary problem in which the parameters $\lambda$ and $\beta$ have been substituted with $\lambda - \psi$ and $\beta + \phi$, with $\psi \in (0, \lambda)$ and $\psi \in \mathbb{R}^+$, formally:
\begin{subequations}
	\begin{align}
		\max_{(\bi_{1,t}, \ldots, \bi_{N,t}) \in X_1 \times \ldots \times X_N} & \sum_{j=1}^N v_j \ n_j(\bi_{j,t}) \label{formulation:auxinit}\\
		\text{s.t.} \qquad \qquad &\frac{\sum_{j=1}^N v_j \ n_j(\bi_{j,t}) }{\sum_{j=1}^N \ \co_j(\bi_{j,t}) } \geq \lambda - \psi, \\
		& \sum_{j=1}^{N} c_j(x_{j,t}) \leq \beta + \phi. \label{formulation:auxend}
	\end{align}
\end{subequations}

The regret of this algorithm w.r.t.~the original problem is:

\begin{restatable}[\alg{}$(\psi, \phi)$ pseudo-regret]{thm}{gcbsaferelaxboth} \label{thm:gcbsaferelaxboth}
	Given $\delta' \leq \delta$ and setting
	\begin{equation*}
		\psi = 2 \frac{\beta_{opt}+n_{\max}}{\beta^2_{opt}} \sigma_c \sum_{j=1}^N v_j \sqrt{2 \ln\left(\frac{\pi^2 N Q T^3}{3 \delta'}\right)}
	\end{equation*}
	and
	\begin{equation*}
		\qquad \phi = 2 N \sigma_c \sqrt{2 \ln\left(\frac{\pi^2 N Q T^3}{3 \delta'}\right)},
	\end{equation*}
	\alg{}$(\psi, \phi)$ provides a pseudo-regret w.r.t.~the optimal solution to the original problem of $\mathcal{O} \left( \sqrt{T \sum_{j=1}^N \gamma_{j,T}} \right)$ with probability at least $1 - \delta - \frac{\delta'}{Q T^2}$.
\end{restatable}

As a direct corollary of Theorem~\ref{thm:gcbsafeconst}, \alg$(\psi, \phi)$, w.h.p., does not violate the ROI constraint of the original problem by more than the tolerance $\psi$ and the budget constraint of the original problem by more than the tolerance $\phi$, and is $\delta$-safe w.r.t.~the auxiliary problem:

\begin{restatable}[\alg{}$(\psi, \phi)$ safety]{cor}{gcbsaferelaxbothsafe} \label{thm:gcbsaferelaxbothsafe}
	Given $\delta \in (0, \ 1)$, \alg$(\psi, \phi)$ with $b_t := 2 \ln \left( \frac{\pi^2 N Q T t^2 }{3 \delta} \right)$ is $\delta$-safe w.r.t.~the constraints of the auxiliary problem in Equations~(\ref{formulation:auxinit})-(\ref{formulation:auxend}).
\end{restatable}

Some comments are in order. The above results state that if we allow a violation of at most $\psi$ of the ROI constraint and of $\phi$ of the budget one, the result provided in Theorem~\ref{thm:inapprox} can be circumvented.

Notice that the magnitude of the violation $\psi$ required in Theorem~\ref{thm:gcbsaferelaxboth} increases linearly in the maximum number of clicks $n_{max}$ and $\sum_{j=1}^N v_j$, that, in turn, increases linearly in the number of sub-campaigns $N$.
This suggests that in large instances, this value may be large.
However, in practice, the maximum number of clicks of a sub-campaign $n_{\max}$ is a sublinear function in the optimal budget $\beta_{opt}$, and usually, it goes to a constant as the budget spent goes to infinity.
Moreover, the number of sub-campaigns $N$ usually depends on the budget, \emph{i.e.}, the budget planned by the business units is linear in the number of sub-campaigns.
As a result, $\beta_{opt}$ is of the same order of $\sum_{j=1}^N v_j$, and therefore, since $n_{max}$ is sublinear in $\beta_{opt}$ and $\sum_{j=1}^N v_j$ is of the order of $\beta_{opt}$, the final expression of $\psi$ is sub-linear in $\beta_{opt}$. This means that the value of $\psi$ to satisfy the assumption needed by Theorem~\ref{thm:gcbsaferelaxboth} goes to zero as $\beta_{opt}$ increases.

\begin{figure*}[th!]
	\centering
	\captionsetup[subfigure]{oneside,margin={1.05cm,0cm}}
	\subfloat[]{\scalebox{0.67}{\includegraphics{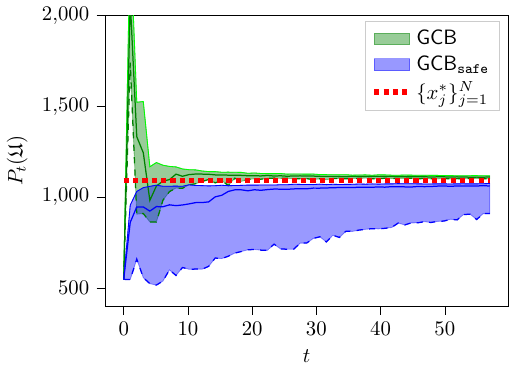}\label{fig:exp1_rev}}}
	\subfloat[]{\scalebox{0.67}{\includegraphics{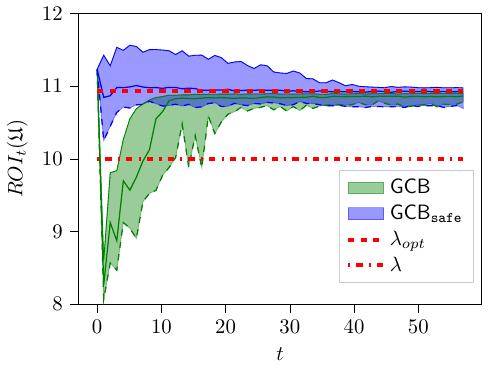}\label{fig:exp1_roi}}}
	\subfloat[]{\scalebox{0.67}{\includegraphics{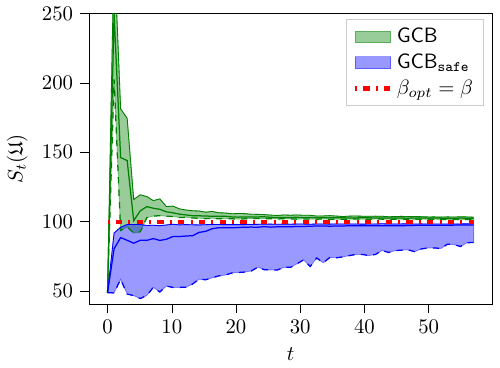}\label{fig:exp1_costs}}}	
	\vspace{-0.4cm}
	\caption{Results of Experiment \#1: Daily revenue (a), ROI (b), and spend (c) obtained by \textsf{GCB} and \alg{}. Dashed lines correspond to optimal values for revenue and ROI, while dash-dotted lines correspond to values of ROI and budget constraints.}
	\label{fig:exp1}
\end{figure*}

Conversely, the most relevant dependence on the magnitude of $\phi$ is the number of campaigns $N$.
This is reasonable since the more subcampaigns there are, the more potential variance we have over the costs, which should be balanced with a larger violation of the constraint.
This suggests that the \alg{}$(\psi, \phi)$ will not be effective for large instances of the analysed optimization problem.

\section{Experimental Evaluation}
\label{sec:6}
We experimentally evaluate our algorithms in terms of pseudo-regret and safety in synthetic settings.
The adoption of synthetic settings allows us to evaluate our algorithms in realistic scenarios and, at the same time, to have an optimal clairvoyant solution necessary to measure the algorithms' pseudo-regret and safety.
In the following experiment, we show that \textsf{GCB} suffers from significant violations of both ROI and budget constraints even in simple settings, while \alg{} does not.\footnote{
	Additional experiments and details useful for the complete reproducibility of our results are provided in the supplementary material. Code available at \url{https://github.com/oi-tech/safe_bid_opt}.}

\subsection{Experiment \#1: evaluating constraint violation with \textsf{GCB} and \alg{}}\label{sec:6.1}

We simulate $N = 5$ subcampaigns, with $|X_j| = 201$ bid values evenly spaced in $[0, \ 2]$, $|Y| = 101$ cost values evenly spaced in $[0, \ 100]$, and $|R| = 151$ revenue values evenly spaced in $[0, \ 1200]$.
For a generic subcampaign $C_j$, at every $t$, the daily  number of clicks is returned by the function $\tilde{n}_j(x) := \theta_j(1 - e^{-x/\delta_j}) + \xi^n_j$ and the daily cost by the function $\tilde{c}_j(x) = \alpha_j(1 - e^{-x/\gamma_j}) + \xi^c_j$, where $\theta_j \in \mathbb{R}^+$ and $\alpha_j \in \mathbb{R}^+$ represent the maximum achievable number of clicks and cost for subcampaign $C_j$ in a single day, $\delta_j \in \mathbb{R}^+$ and $\gamma_j \in \mathbb{R}^+$ characterize how fast the two functions reach a saturation point, and $\xi^n_j $ and $\xi^c_j$ are noise terms drawn from a $\mathcal{N}(0,1)$ Gaussian distribution 
(these functions are customarily used in the advertising literature, \emph{e.g.,} by~\citet{kong2018combinatorial}).
We assume a unitary value for each click, \emph{i.e.,} $v_j = 1$ for each $j \in \{1, \ldots, N \}$.
The values of the parameters of cost and revenue functions of the subcampaigns are specified in the supplementary material.
We set a daily budget $\beta = 100$, $\lambda = 10$ in the ROI constraint, and a time horizon $T = 60$.
Notice that in this setting, at the optimal solution, the budget constraint is active, while the ROI constraint is not. 
For both \textsf{GCB} and \alg{}, the kernels for the number of clicks GPs $k(x, x')$ and for the costs GPs $h_j(x, x')$ are squared exponential kernels of the form $\sigma^2_f \,\exp \left\{- \frac{(x - x')^2}{l} \right\}$ for every $x, x' \in X_j$, where the parameters $\sigma_f \in \mathbb{R}^+$ and $l \in \mathbb{R}^+$ are estimated from data, as suggested by~\citet{rasmussen2006gaussian}.
The confidence for the algorithms is $\delta = 0.2$.
We evaluate the algorithms in terms of daily revenue $P_t(\mathfrak{U}) := \sum_{j=1}^N v_j n_j(\realbi_{j,t})$, daily ROI: $ROI_t(\mathfrak{U}) := \frac{\sum_{j=1}^N v_j \ n_j(\realbi_{j,t}) }{\sum_{j=1}^N \ \co_j(\realbi_{j,t}) }$, and  daily spend: $S_t(\mathfrak{U}) : = \sum_{j=1}^{N} c_j(\realbi_{j,t})$.
We perform $100$ independent runs for each algorithm. 

\paragraph{Results}
In Figure~\ref{fig:exp1}, for the daily revenue, ROI, and spend achieved by \textsf{GCB} and \alg{} at every $t$, we show the $50_{th}$ percentile with solid lines and the $90_{th}$ and $10_{th}$ percentiles with dashed lines surrounding the semi-transparent area.
While \textsf{GCB} achieves a larger revenue than \alg{}, it violates the budget constraint over the entire time horizon and the ROI constraint in the first $7$ days in more than $50\%$ of the runs.
This happens because, in optimal solutions, the ROI constraint is not active, while the budget constraint is. 
Conversely, \alg{} satisfies the budget and ROI constraints over the time horizon for more than $90\%$ of the runs and has a slower convergence to the optimal revenue.
If we focus on the median revenue, \alg{} has a similar behaviour to \textsf{GCB} for $t > 15$. This makes \alg{} a good choice, even in terms of overall revenue.
However, it is worth noticing that, in the $10\%$ of the runs, \alg{} does not converge to the optimal solution before the end of the learning period.
These results confirm our theoretical analysis that limiting the exploration to safe regions leads the algorithm to a large regret.
%

\begin{figure*}[t!]
	\centering
	\captionsetup[subfigure]{oneside,margin={1.05cm,0cm}}\textbf{}
	\subfloat[]{\scalebox{0.67}{\includegraphics{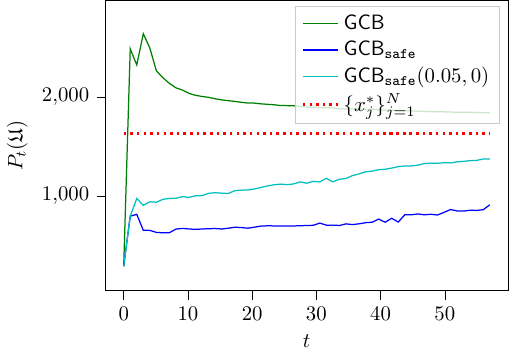}\label{fig:exp3_rev}}}
	\subfloat[]{\scalebox{0.67}{\includegraphics{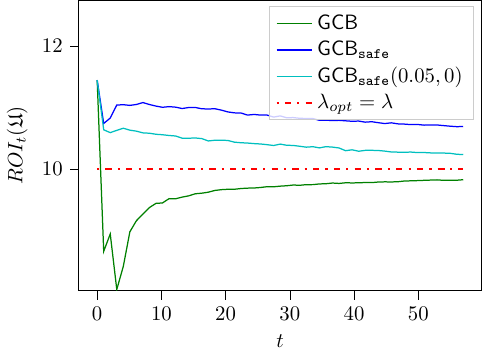}\label{fig:exp3_roi}}}
	\subfloat[]{\scalebox{0.67}{\includegraphics{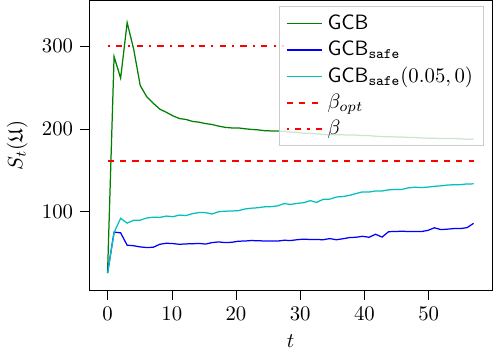}\label{fig:exp3_costs}}}
	\vspace{-0.4cm}
	\caption{Results of Experiment \#2: Median values of the daily revenue (a), ROI (b) and spend (c) of \textsf{GCB}, \alg{}, and \alg{}($0.05,0$).}
	\label{fig:exp2}
\end{figure*}

\begin{figure*}[t!]
	\centering
	\captionsetup[subfigure]{oneside,margin={1.05cm,0cm}}
	\subfloat[]{\scalebox{0.67}{\includegraphics{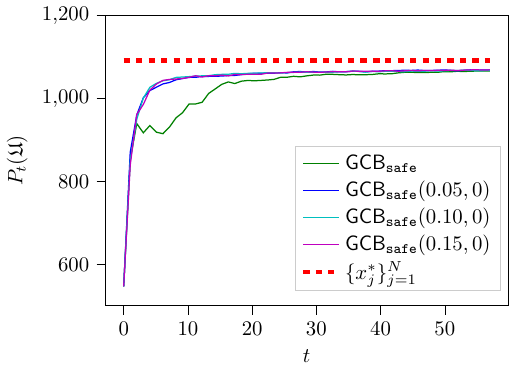}\label{fig:exp2_rev}}}
	\subfloat[]{\scalebox{0.67}{\includegraphics{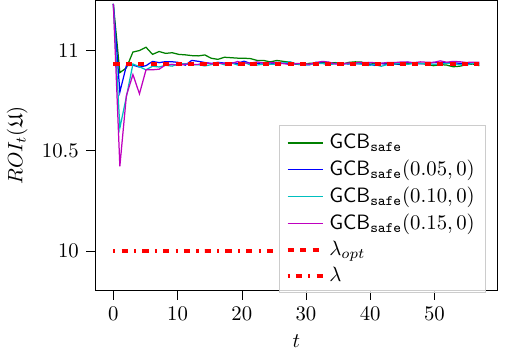}\label{fig:exp2_roi}}}
	\subfloat[]{\scalebox{0.67}{\includegraphics{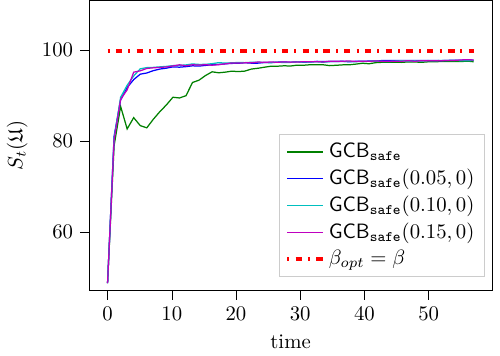}\label{fig:exp2_costs}}}
	\vspace{-0.4cm}
	\caption{Results of Experiment \#3: Median values of the daily revenue (a), ROI (b) and spend (c) obtained by \alg{}$(\psi,0)$ with different values of $\psi$.}
	\label{fig:exp3}
\end{figure*}

\subsection{Experiment \#2: \textsf{GCB}, \alg, and \alg$(\psi, 0)$ if the ROI constraint is active}\label{sec:6.3}

We study a setting where the ROI constraint is active at the optimal solution, \emph{i.e.}, $\lambda = \lambda_{opt}$, while the budget constraint is not. 
This means that, at the optimal solution, the advertiser would have an extra budget to spend.
However, if such a budget is not spent, the ROI constraint would be violated otherwise. 
The experimental setting is the same as Experiment~\#1, except that we set the budget constraint as $\beta = 300$.
The optimal daily spend is $\beta_{opt} = 161$.

\paragraph{Results}
In Figure~\ref{fig:exp3}, we show the median values of the daily revenue, the ROI, and the spend of \textsf{GCB}, \alg{}, \alg{}$(0.05, 0)$ obtained with $100$ independent runs.
The $10\%$ and $90\%$ of the quantities provided by \textsf{GCB}, \alg{}, and \alg{}$(0.05, 0)$ are reported in the supplementary material.
We notice that, even in this setting, \textsf{GCB} violates the ROI constraint for the entire time horizon and the budget constraint in $t = 6$ and $t = 7$.
However, it achieves a revenue larger than that of the optimal constrained solution. 
On the other side, \alg{} always satisfies both constraints, but it does not perform enough exploration to quickly converge to the optimal solution.
We observe that it is sufficient to allow a tolerance in the ROI constraint violation by slightly perturbing the input value $\lambda$ ($\psi = 0.05$, corresponding to a violation of the constraint by at most $0.5\%$) to make \alg$(\psi,\phi)$ capable of approaching the optimal solution while satisfying both constraints for every $t \in \{ 0, \ldots, T \}$.
This suggests that, in real-world applications, the use of \alg$(\psi,\phi)$ with a small tolerance represents an effective solution, providing guarantees on the violation of the constraints while returning high revenue values.

\subsection{Experiment \#3: evaluating \alg$(\psi,0)$ if the budget constraint is active}\label{sec:6.2}
Finally, we study a setting in which the active constraint is the budget one while we are introducing some tolerance on the ROI one.
We use the same setting of Experiment~\#1, except that we evaluate \alg{} and \alg$(\psi,\phi)$ algorithms.
More precisely, we relax the ROI constraint by a tolerance $\psi \in \{0, 0.05, 0.1, 0.15\}$ while keeping $\phi = 0$.
In practice, using $\psi > 0$, we allow \alg$(\psi,\phi)$ to violate the ROI constraint, but, with high probability, the violation is bounded by at most $0.5\%$, $1\%$, $1.5\%$ of $\lambda$, respectively.
Instead, we do not introduce any tolerance for the daily budget constraint $\beta$.

\paragraph{Results} Figure~\ref{fig:exp2} reports the results for Experiment \#2. For the sake of visualization, we only show the median values, on $100$ independent runs, of the performance in terms of daily revenue, ROI, and spend of \alg{} and \alg{}$(\psi, 0)$.\footnote{The $10\%$ and $90\%$ quantiles are reported in the supplementary material.}
The results show that by allowing a small tolerance in the ROI constraint violation, we can improve the exploration and, therefore, lead to faster convergence.
We note that if we set a value of $\psi \geq 0.05$, we achieve better performance in the first learning steps ($t < 20$), still maintaining a robust behavior in terms of constraint violations.
Most importantly, the ROI constraint is always satisfied by the median and also by the $10\%$ and $90\%$ quantiles.
Furthermore, a few violations are present only in the early stages of the learning process.

\section{Conclusions and Future Works}
\label{sec:7}

In this paper, we propose a novel framework for Internet advertising campaigns.
While previous works available in the literature focus only on the maximization of the revenue provided by the campaign, we introduce the concept of \emph{safety} for the algorithms choosing the bid allocation each day.
More specifically, we want that the bidding satisfies, with high probability, some daily ROI and budget constraints fixed by the business units of the companies.
Our goal is to maximize the revenue satisfying w.h.p.~the uncertain constraints (a.k.a.~safety).
We model this setting as a combinatorial optimization problem, proving that such a problem is inapproximable within any strictly positive factor unless $\mathsf{P} = \mathsf{NP}$, but it admits an exact pseudo-polynomial-time algorithm.
Most interestingly, we prove that no online learning algorithm can provide sublinear pseudo-regret while guaranteeing a sublinear number of violations of the uncertain constraints.
We show that the \textsf{GCB} algorithm suffers from a sublinear pseudo-regret, but it may violate the constraints a linear number of times. 
Thus, we design \alg{}, a novel algorithm that guarantees safety at the cost of a linear pseudo-regret.
An adaptation of \alg{}, namely \alg$(\psi,\phi)$, guarantees a sublinear pseudo-regret and safety at the cost of tolerances $\psi$ and $\phi$ on the ROI and budget constraints, respectively.
Finally, we evaluate the empirical performance of our algorithms with synthetically advertising problems that confirmed the theoretical results.

An interesting open research direction is the design of an algorithm that adopts constraints changing during the learning process, i.e., that identifies the active constraint and relaxes those that are not active.
Moreover, understanding the relationship between the relaxation of one of the constraints and the increase in revenue constitutes an interesting line of research.
%
%
\vspace{-0.2cm}
\section*{Acknowledgements}
This paper is supported by FAIR (Future Artificial Intelligence Research) project, funded by the NextGenerationEU program within the PNRR-PE-AI scheme (M4C2, Investment 1.3, Line on Artificial Intelligence).

\clearpage
\appendix
\onecolumn

\section{Omitted Proofs}\label{app:omitted}

\subsection{Proofs Omitted from Section~\ref{sec:2}} \label{app:opt}

\thminapprox*

\begin{proof}
	We restrict to the instances of SUBSET-SUM such that $z \leq \sum_{j \in S} u_j$.
	Solving these instances is trivially $\mathsf{NP}$-hard, as any instance with $z > \sum_{j \in S} u_j$ is not satisfiable, and we can decide it in polynomial time.
	Given an instance of SUBSET-SUM, let $\ell = \frac{\sum_{j \in S} u_j + 1}{\rho}$.
	Let us notice that the lower the degree of approximation we aim, the larger the value of $\ell$. 
	For instance, when studying the problem of computing an exact solution, we set $\rho =1$ and therefore $\ell = \sum_{j \in S} u_j + 1$, whereas, when we require a $1/2$-approximation, we set $\rho =1/2$ and therefore $\ell = 2(\sum_{j \in S} u_i + 1)$.
	We have $|S|+1$ subcampaigns, each denoted with $C_j$. 
	The available bid values belong to $\{0, 1\}$ for every subcampaign $C_j$.
	The parameters of the subcampaigns are set as follows.
	\begin{itemize}
		\item Subcampaign $C_0$: we set $v_0 = 1$, and 
		\[
		c_0(x)=\begin{cases}2 \ell + z~ & \textnormal{if } x= 1\\ 0 & \textnormal{otherwise}\end{cases}, \qquad n_0(x)=\begin{cases}\ell~ & \textnormal{if } x= 1\\ 0 & \textnormal{otherwise}\end{cases}. 
		\]
		\item Subcampaign $C_j$ for every $j \in S$: we set $v_j = 1$, and 
		\[
		c_j(x)=\begin{cases}u_j \textcolor{white}{\ell + z}& \textnormal{if } x= 1\\ 0 & \textnormal{otherwise}\end{cases}, \qquad n_j(x)=\begin{cases}u_j & \textnormal{if } x= 1\\ 0 & \textnormal{otherwise}\end{cases}.
		\]
	\end{itemize}
	We set the daily budget $\beta = 2(z+\ell)$ and the ROI limit $\lambda = \frac{1}{2}$.\footnote{For the sake of clarity, the proof uses simple instances. Adopting these instances is crucial to identify the most basic settings in which the problem is hard, and it is customarily done in the literature. Let us notice that it is possible to prove the theorem using more realistic instances. For example, we can build a reduction in which the costs are smaller than the values, \emph{i.e.}, $c_j(x)<n_j(x) v_j$. In particular, the reduction holds even if we set $c_0(1) = \epsilon(2l + z)$, $c_j(1) = \epsilon u_j$, $\beta = 2\epsilon(z + l)$, and $\lambda = 1/(2\epsilon)$ for an arbitrary small $\epsilon$.}
	We show that if a SUBSET-SUM instance is satisfiable, then the corresponding instance of our problem admits a solution with revenue larger than $\ell$, while if a SUBSET-SUM instance is not satisfiable, the maximum revenue in the corresponding instance of our problem is at most $\rho \, \ell-1$.
	Thus, the application of any polynomial-time $\rho$-approximation algorithm to instances of our problem generated from instances of SUBSET-SUM as described above would return a solution whose value is not smaller than $\rho\,\ell$ when the SUBSET-SUM instance is satisfiable, and it is not larger than $\rho\,\ell-1$ when the SUBSET-SUM instance is not satisfiable. 
	As a result, whenever such an algorithm returns a solution with a value that is not smaller than $\rho\,\ell$, we can decide that the corresponding SUBSET-SUM instance is satisfiable. Analogously, whenever such an algorithm returns a solution with a value that is in the range $[\rho(\rho\,\ell - 1), \rho\,\ell - 1]$, we can decide that the corresponding SUBSET-SUM instance is not satisfiable. Let us notice that the range $[\rho(\rho\,\ell - 1), \rho\,\ell - 1]$ is well defined for every $\rho \in (0,1]$, as, by construction, $\rho\,\ell = \sum_{j \in S} u_j + 1 \geq 1$ and therefore $\rho\,\ell - 1\geq \rho(\rho\,\ell - 1)$.
	Hence, such an algorithm would decide in polynomial time whether or not a SUBSET-SUM instance is satisfiable, but this is not possible unless $\mathsf{P}=\mathsf{NP}$.
	Since this holds for every $\rho \in (0,1]$, then no $\rho$-approximation to our problem is allowed in polynomial time unless $\mathsf{P}=\mathsf{NP}$.
	\textbf{If.}
	Suppose that SUBSET-SUM is satisfied by the set $S^* \subseteq S$ and that its solution assigns $x_j = 1$ if $j \in S^*$ and $x_j = 0$ otherwise, and it assigns $x_0 = 1$.
	The total revenue is $\ell + z \ge \ell$, and the constraints are satisfied.
	In particular, the sum of the costs is $2\ell + z + z = 2 (\ell + z)$, while  $\text{ROI} = \frac{\ell+z}{2\ell+2z} = \frac{1}{2}$.
	\textbf{Only if.}
	Assume by contradiction that the instance of our problem admits a solution with a revenue strictly larger than $\rho\, \ell - 1$ and that SUBSET-SUM is not satisfiable. 
	Then, it is easy to see that we need $x_0 = 1$ for campaign $C_0$ as the maximum achievable revenue is $\sum_{j \in S} u_j = \rho \,\ell-1$ when $x_0 = 0$.
	Thus, since $x_0 = 1$, the budget constraint forces $\sum_{j \in S: x_j = 1} c_i(x_j) \leq z$, thus implying $\sum_{j \in S:x_j= 1} u_j \leq z$.
	By the satisfaction of the ROI constraint, \emph{i.e.}, $\frac{\sum_{j \in S:x_j= 1} u_j+l}{\sum_{j \in S:x_j= 1} u_j+2l+z} \geq \frac{1}{2}$, it must hold $\sum_{i \in S:x_i = 1} u_i \geq z$.
	Therefore, the set $S^* = \{ i \in S : x_i = 1\}$ is a solution to SUBSET-SUM, thus reaching a contradiction.
	This concludes the proof.
\end{proof}

\subsection{Proofs Omitted from Section~\ref{sec:3}} \label{app:phipsi}

\thmtrade*

\begin{proof} 
	In what follows, we provide an impossibility result for the optimization problem in Equations~(\ref{formulation:objectivefunction})-(\ref{formulation:budgetconstraint}).
	For the sake of simplicity, our proof is based on the violation of (budget) Constraint~(\ref{formulation:budgetconstraint}), but its extension to the violation of (ROI) Constraint~(\ref{formulation:roiconstraint}) is direct.
	Initially, we show that an algorithm satisfying the two conditions of the theorem can be used to distinguish between $\mathcal{N}(1, 1)$ and $\mathcal{N}(1+\delta, 1)$ with an arbitrarily large probability using a number of samples independent from $\delta$.\footnote{
		With $\mathcal{N}(a, b)$ we denote the Gaussian distribution with mean $a$ and variance $b$.}
	Consider two instances of the bid optimization problem.
	Both instances have a single subcampaign with $x \in \{0, 1\}$, $c(0) = 0$, $r(0) = 0$, $r(1) = 1$, $\beta = 1$, and $\lambda=0$.
	The first instance has cost $c^1(1) = \mathcal{N}(1, 1)$, while the second one has cost $c^2(1) = \mathcal{N}(1+\delta, 1)$.
	With the first instance, the algorithm must choose $x = 1$ at least $T(1/2 + \epsilon)$ times in expectation; otherwise, the pseudo-regret would be strictly greater than $T(1/2 - \epsilon)$, while with the second instance, the algorithm must choose $x = 1$ at most than $T(1/2-\epsilon)$ times in expectation. Otherwise, the constraint on the budget would be violated strictly more than $T(1/2 - \epsilon)$ times.
	Standard concentration inequalities imply that, for each $\gamma > 0$, there exists a $n(\epsilon, \gamma)$ such that, given $n(\epsilon, \gamma)$ runs of the learning algorithm, with the first instance the algorithm plays $x = 1$ strictly more than $T n(\epsilon, \gamma)/2$ times with probability at least $1-\gamma$, while with the second instance, it is played strictly less than $T n(\epsilon, \gamma)/2$ times with probability at least $1-\gamma$.
	This entails that the learning algorithm can distinguish with arbitrarily large success probability (independent of $\delta$) between the two instances using (at most) $n(\epsilon, \gamma) T$ samples from one of the normal distributions.
	However, the Kullback-Leibler divergence~\citep{kullback1951information} between the two normal distributions is $KL(\mathcal{N}(1,1), \mathcal{N}(1+\delta,1)) = \delta^2/2$ and each algorithm needs at least $\Omega(1/\delta^2)$ samples to distinguish between the two distributions with arbitrarily large probability. 
	Since $\delta$ can be arbitrarily small, we have a contradiction.
	Thus, such an algorithm cannot exist. This concludes the proof.\footnote{Notice that the theorem can be modified to hold even with instances that satisfy real-world assumptions, \emph{e.g.}, with costs much smaller than the budget.
		Indeed, we can apply the same reduction in which the costs are arbitrary, \emph{e.g.}, $c(0) = c(1) = q$ with an arbitrary small $q$ and $\beta = 1$, while the utilities are $r(0) = 0$, $r(1) = \mathcal{N}(1,1)$ or $r(1) = \mathcal{N}(1 - \delta,1)$, and the ROI limit is $\lambda = 1/q$.}
\end{proof}

\subsection{Proofs Omitted from Section~\ref{sec:4}}
\optimality*

\begin{proof}
	Since all the possible values for the revenues and costs are taken into account in the subroutine, the elements in $S(y, r)$ satisfy the two inequalities in Equation~(\ref{eq:setS}) with the equal sign.
	Therefore, all the elements in $S(y, r)$ would contribute to the computation of the final value of the ROI and budget constraints, \emph{i.e.}, the ones after evaluating all the $N$ subcampaigns, with the same values for revenue and costs, being their overall revenue equal to $r$ and their overall cost equal to $y$.
	Notice that Constraint~(\ref{formulation:budgetconstraint}) is satisfied as long as it holds $\max(Y) = \beta$.
	The maximum operator in Line~\ref{line:maxim} excludes only solutions with the same costs and lower revenue, and, therefore, the subroutine excludes only solutions that would never be optimal (and, for this reason, said dominated).
	The same reasoning also holds for the subcampaign $C_1$ analysed by the algorithm.
	Finally, after all the dominated allocations have been discarded, the solution is selected in Line~\ref{line:last}, \emph{i.e.} among all the solutions satisfying the ROI constraints, the one with the largest revenue is selected.
\end{proof}

\subsection{Omitted Proofs from Section~\ref{sec:5}} \label{app:GCB}
\gcbregret*


\begin{proof}
	
	This proof extends the proof provided by~\citet{accabi2018gaussian} to the case in which multiple independent GPs are present in the optimization problem.
	
	Let us define $r_{\bm{\mu}}(\mathbf{x})$ as the expected reward provided by a specific allocation $\mathbf{x} = (x_1, \ldots, x_N)$ under the assumption that the parameter vector of the optimization problem is $\bm{\mu}$.
	Moreover, let
	\begin{align*}
		\bm{\eta} := \big[ w_1(x_1), \ldots, w_N(x_{|X_N|}), w_1(x_1), \ldots, w_N(x_{|X_N|}),
		-c_1(x_1), \ldots, - c_N(x_{|X_N|}) \big],
	\end{align*}
	be the vector characterizing the optimization problem in Equations~(\ref{formulation:objectivefunction})-(\ref{formulation:budgetconstraint}), $\mathbf{x}_t$ be the allocation chosen by the GCB algorithm at round $t$, $\mathbf{x}^*_{\bm{\eta}}$ the optimal allocation---\emph{i.e.}, the one solving the discrete version of the optimization problem in Equations~(\ref{formulation:objectivefunction})-(\ref{formulation:budgetconstraint}) with parameter $\bm{\eta}$---, and $r^*_{\bm{\eta}}$ the corresponding expected reward.
	
	To guarantee that \textsf{GCB} provides a sublinear pseudo-regret, we need a few assumptions to be satisfied.
	More specifically, we need a \emph{monotonicity property}, stating that the value of the objective function increases as the values of the elements in $\boldsymbol{\mu}$ increase and a \emph{Lipschitz continuity} assumption between the parameter vector $\boldsymbol{\mu}$ and the value returned by the objective function in Equation~(\ref{formulation:objectivefunction}).
	Formally:
	\begin{assumption}[Monotonicity] \label{ass:monotonicity}
		The expected reward $r_{\boldsymbol{\mu}}(S) := \sum_{j = 1}^N v_j \ n_j(\bi_{j,t})$, where $S$ is the bid allocation, is monotonically non decreasing in $\boldsymbol{\mu}$, i.e., given $\boldsymbol{\mu}$, $\boldsymbol{\eta}$ s.t.~$\mu_i \leq \eta_i $ for each $i$, we have $r_{\boldsymbol{\mu}}(S) \leq r_{\boldsymbol{\eta}}(S)$ for each $S$.
	\end{assumption}
	
	\begin{assumption}[Lipschitz continuity] \label{ass:lipshitz}
		The expected reward $r_{\boldsymbol{\mu}}(S)$ is Lipschitz continuous in the infinite norm w.r.t. the expected payoff vector $\boldsymbol{\mu}$, with Lipschitz constant $\Lambda > 0$.
		Formally, for each $\boldsymbol{\mu}, \boldsymbol{\eta}$ we have $|r_{\boldsymbol{\mu}}(S) - r_{\boldsymbol{\eta}}(S)| \leq \Lambda|| \boldsymbol{\mu} - \boldsymbol{\eta} ||_{\infty}$, where the infinite norm of a payoff vector is $||\boldsymbol{\mu}||_{\infty} := \max_{i} |\mu_i|$.
	\end{assumption}
	Our problem satisfies both of the above assumptions. Indeed, we have that the Lipschitz continuity holds with constant $\Lambda = v_{\max} N$.
	Instead, the monotonicity property holds by definition of $\boldsymbol{\mu}$, as the increase of a value of $\overline{w}_j(x)$ would increase the value of the objective function, and the increase of the values of $\underline{w}_j(x)$ or $\overline{c}_j(x)$ would enlarge the feasibility region of the problem, thus not excluding optimal solutions.
	
	Let us now focus on the per-step expected regret, defined as:
	$$reg_t := r^*_{\bm{\eta}} - r_{\bm{\eta}}(\mathbf{x}_t).$$
	Let us recall a property of the Gaussian distribution, which will be useful in what follows.
	Be $r \sim \mathcal{N}(0,1)$ and $c \in \mathbb{R}^+$, we have:
	\begin{align*}
		& \mathbb{P} [ r > c ] = \frac{1}{\sqrt{2 \pi}} e^{-\frac{c^2}{2}} \int_{c}^{\infty} e^{-\frac{(r - c)^{2}}{2} - c(r-c)} \mathop{dr} \\
		& \leq e^{- \frac{c^2}{2}} \mathbb{P} [ r > 0 ] = \frac{1}{2}e^{- \frac{c^2}{2}},
	\end{align*}
	since $e^{-c (r-c) } \leq 1$ for $r \geq c$. For the symmetry of the Gaussian distribution, we have:
	\begin{equation} \label{eq:normal}
		\mathbb{P} [ |r| > c ] \leq e^{- \frac{c^2}{2}}.
	\end{equation}
	
	Let us focus on the GP modeling the number of clicks.
	Following Lemma $5$ in the work by~\cite{srinivas2010gaussian}, we have that conditioned on the number of clicks $(\tilde{n}_{j,1}(x_{j,1}), \ldots, \tilde{n}_{j,t}(x_{j,t}))$, the selected bids $(x_{j,1}, \ldots, x_{j,1})$, with $x_{j,h} \in X_j$, are deterministic and the estimated number of clicks follows:
	$$ n_{j,t}(x) \sim \mathcal{N}(\hat{n}_{j,t}(x), (\hat{\sigma}^n_{j,t}(x))^2),$$
	for all $x \in X_j$.
	Thus, substituting $r = \frac{\hat{n}_{j,t}(x) - n_{j,t}(x)}{\hat{\sigma}^n_{j,t}(x)}$ and $c = \sqrt{b_t}$ in Equation~(\ref{eq:normal}), we obtain:
	\begin{equation} \label{eq:meandiff}
		\mathbb{P} \left[ |\hat{n}_{j,t}(x) - n_{j,t}(x)| > \sqrt{b_{t}} \hat{\sigma}^n_{j,t}(x) \right] \leq e^{- \frac{b_t}{2}}.
	\end{equation}
	
	Recall that, after $n$ rounds, each arm can be chosen a number of times from $1$ to $n$.
	Applying the union bound over the rounds ($h \in \{1, \ldots, T\}$), the sub-campaigns $C_j$ ($C_j$ with $j \in \{1, \ldots, N\}$), the number of times the arms in $C_j$ are chosen ($t \in \{1, \ldots, n\}$), and the available arms in $C_j$ ($x \in X_j$), and exploiting Equation~(\ref{eq:meandiff}), we obtain:
	\begin{align}
		&\mathbb{P} \left[ \bigcup_{h,j,t,x} \left( |\hat{n}_{j,t}(x) - n_{j,t}(x)| > \sqrt{b_{t}} \hat{\sigma}^n_{j,t}(x) \right) \right] \label{eq:ub1}\\
		& \leq \sum_{h = 1}^T \sum_{j = 1}^N \sum_{t = 1}^{n} |X_j| e^{- \frac{b_t}{2}}. \label{eq:ub2}
	\end{align}
	Thus, choosing $b_t = 2 \ln{ \left( \frac{\pi^2 N Q T t^2}{3\delta} \right)}$, we obtain:
	\begin{align*}
		& \sum_{h = 1}^T \sum_{j = 1}^N \sum_{t = 1}^{n} |X_j| e^{- \frac{b_t}{2}} \leq \sum_{h = 1}^T \sum_{j = 1}^N \sum_{t = 1}^{n} Q \frac{3 \delta}{\pi^2 N Q T t^2}\\	
		& \sum_{n=1}^{\infty} \frac{2\delta}{\pi^2 t^2} = \frac{\delta}{2},
	\end{align*}
	where we used the fact that $Q \geq |X_j|$ for each $j \in \{1, \ldots N\}$.
	
	Applying the same proof strategy to the GP defined over the costs leads to the following:
	\begin{equation*}
		\mathbb{P} \left[ \bigcup_{h,j,t,x} \left( |\hat{c}_{j,t}(x) - c_{j,t}(x)| > \sqrt{b_{t}} \hat{\sigma}^c_{j,t}(x) \right) \right] \leq \frac{\delta}{2}.
	\end{equation*}
	The above proof implies that the union of the event that all the bounds used in the GCB algorithm holds with probability at least $1 - \delta$. Formally, for each $t \geq 1$, we know that with probability at least $1 - \delta$ the following holds for all $x_{j} \in X_j$, $j \in \{1, \ldots N\}$, and number of times the the arm $x_j$ has been pulled over $t$ rounds:
	\begin{align} \label{eq:hypothesis}
		|\hat{n}_j(x_j) - n_j(x_j)| \leq \sqrt{b_t} \hat{\sigma}^n_{j,t}(x_j),\\
		|\hat{c}_j(x_j) - c_j(x_j)| \leq \sqrt{b_t} \hat{\sigma}^c_{j,t}(x_j).
	\end{align}
	From now on, let us assume we are in the \emph{clean event} that the previous bounds hold.
	
	Let us focus on the term $r_{\bm{\mu}}(\textbf{x}_t)$. The following holds:
	\begin{align}
		r_{\bm{\mu}}(\textbf{x}_t) &\geq r^*_{\bm{\mu}} \geq r_{\bm{\mu}}(\mathbf{x}^{*}_{\bm{\mu}}) \geq r_{\bm{\eta}}(\mathbf{x}^{*}_{\bm{\mu}}) = r^*_{\bm{\eta}}, \label{eq:monoapprox}
	\end{align}
	where we use the definition of $r^*_{\bm{\mu}}$, and the monotonicity property of the expected reward (Assumption~\ref{ass:monotonicity}), being $\left( \bm{\mu} \right)_{i} \geq \left(\bm{\eta} \right)_{i}, \forall i$.
	Using Equation~(\ref{eq:monoapprox}), the instantaneous expected pseudo-regret $reg_t$ at round $t$ satisfies the following inequality:
	\begin{align}
		reg_t &= r^*_{\bm{\eta}} - r_{\bm{\eta}}(\mathbf{x}_t) \leq r_{\bm{\mu}}(\textbf{x}_t) - r_{\bm{\eta}}(\mathbf{x}_t) = \\
		& \leq  \underbrace{r_{\bm{\mu}}(\textbf{x}_t) - r_{\hat{\bm{\mu}}}(\textbf{x}_t)}_{r_a} + \underbrace{r_{\hat{\bm{\mu}}}(\textbf{x}_t) - r_{\bm{\eta}}(\mathbf{x}_t)}_{r_b}, \label{eq:regt2}
	\end{align}
	where
	\begin{align}
		\hat{\bm{\mu}} := \big[ \hat{w}_{1,t-1}(x_1), \ldots, \hat{w}_{N,t-1}(x_{|X_N|}), & \hat{w}_{1,t-1}(x_1), \ldots, \hat{w}_{N,t-1}(x_{|X_N|}),\\ \nonumber
		& - \hat{c}_{1,t-1}(x_1), \ldots, - \hat{c}_{N,t-1}(x_{|X_N|}) \big],
	\end{align}
	is the vector composed of the estimated average payoffs for each arm $x \in X_j$ and each campaign $C_j$, where $\hat{w}_{j,t-1}(x) := v_j \hat{n}_{j,t-1}(x)$.
	
	We use the Lipschitz property of the expected reward function (see Assumption~\ref{ass:lipshitz}) to bound the terms in Equation~(\ref{eq:regt2}) as follows:
	\begin{align}
		r_a & \leq \Lambda || \bm{\mu} - \hat{\bm{\mu}} ||_{\infty} = \Lambda \max_{j \in \{1, \ldots, N\} } \left( v_{\max} \sqrt{b_t} \max_{x \in X_j} \hat{\sigma}^n_{j,t}(x) \right) \label{eq:lipsch1}\\
		& \leq N v_{\max} \sqrt{b_t} \max_{j \in \{1, \ldots, N\} } \left( \max_{x \in X_j} \hat{\sigma}^n_{j,t}(x) \right) \label{eq:lipsch4}\\
		& \leq N v_{\max} \sqrt{b_t} \sum_{j=1}^N \left( \max_{x \in X_j} \hat{\sigma}^n_{j,t}(x) \right), \label{eq:lipsch2}\\
		r_b & \leq \Lambda || \hat{\bm{\mu}} - \bm{\eta} ||_{\infty} \nonumber \\
		& \leq N v_{\max} \sqrt{b_t} \sum_{j=1}^N \left( \max_{x \in X_j} \hat{\sigma}^n_{j,t}(x) \right), \label{eq:lipsch3}
	\end{align}
	where Equation~\eqref{eq:lipsch1} holds by the definition of $\bm{\mu}$, Equation~(\ref{eq:lipsch2}) holds since the maximum over a set is not greater than the sum of the elements of the set, if they are all non-negative, and Equation~\eqref{eq:lipsch3} directly follows from Equation~\eqref{eq:hypothesis}.
	Plugging Equations~(\ref{eq:lipsch2}) and~(\ref{eq:lipsch3}) into Equation~(\ref{eq:regt2}), we obtain:
	\begin{equation} \label{eq:instregucb}
		reg_t \leq 2 N v_{\max} \sqrt{b_t} \sum_{j=1}^N \left( \max_{x \in X_j} \hat{\sigma}^n_{j,t}(x) \right).
	\end{equation}
	We need now to upper bound $\hat{\sigma}^n_{j,t}(x)$.
	Consider a realization $n_j(\cdot)$ of a GP over $X_j$ and recall that, thanks to Lemma~$5.3$ in~\citep{srinivas2010gaussian}, under the Gaussian assumption we can express the information gain $IG_{j,t}$ provided by $(\tilde{n}_j(\hat{x}_{j,1}), \ldots, \tilde{n}_j(\hat{x}_{j,|X_j|}) )$ corresponding to the sequence of arms $(\hat{x}_{j,1}, \ldots, \hat{x}_{j,|X_j|})$ as:
	\begin{equation}
		IG_{j,t} = \frac{1}{2} \sum_{h = 1}^{t} \log \left( 1 + \sigma_n^{-2}\, (\hat{\sigma}^n_{j,t}(\hat{x}_{j,h}) )^2 \right).
	\end{equation}
	We have that:
	\begin{align}
		(\hat{\sigma}^{n}_{j,t}(\hat{x}_{j,h}))^2 & = \sigma_n^{2} \left[ \sigma_n^{-2} (\hat{\sigma}^{n}_{j,t}(\hat{x}_{j,h}))^2 \right] 
		\leq \frac{\log \left[ 1 + \sigma_n^{-2} (\hat{\sigma}^{n}_{j,t}(\hat{x}_{j,h}))^2 \right]} {\log \left( 1 + \sigma_n^{-2} \right) }, \label{eq:temp1}
	\end{align}
	since $s^{2} \leq \frac{\sigma_n^{-2} \log{(1 + s^{2})}}{\log \left( 1 + \sigma_n^{-2} \right)}$ for all $s \in [0,\sigma_n^{-1}]$, and $\sigma_n^{-2} (\hat{\sigma}^{n}_{j,t}(\hat{x}_{j,h}))^2 \leq \sigma_n^{-2} \,k(\hat{x}_{j,h}, \hat{x}_{j,h}) \leq \sigma_n^{-2}$, where $k(\cdot, \cdot)$ is the kernel of the GP.
	Since Equation~(\ref{eq:temp1}) holds for any $x \in X_j$ and for any $j \in \{1, \ldots N \}$, then it also holds for the arm $\hat{x}_{\max}$ maximizing the variance $(\hat{\sigma}^{n}_{j,t}(\hat{x}_{j,h}))^2$ over $X_j$.
	Thus, setting $\bar{c} = \frac{8\, N^2}{\log \left( 1 + \sigma_n^{-2} \right)}$ and exploiting the Cauchy-Schwarz inequality, we obtain:
	\begin{align*}
		&\mathcal{R}_T^{2}(GCB) \leq T \sum_{t=1}^{T} reg^{2}_t \\
		& \leq T \sum_{t=1}^{T} 4 N^2 v^2_{\max} b_t \left[ \sum_{j=1}^N \left( \max_{x \in X_j} \hat{\sigma}^n_{j,t}(x) \right) \right]^2 \\
		& \leq 4 N^2 v^2_{\max} T b_T \sum_{t=1}^{T} \left[ N \sum_{j=1}^N \max_{x \in X_j} (\hat{\sigma}^n_{j,t}(x))^2 \right]\\ 
		& \leq \bar{c} N v^2_{\max} T b_T \sum_{j=1}^N \frac{1}{2} \sum_{t=1}^{T} \max_{x \in X_j} \log \left( 1 + \sigma_n^{-2}\, (\hat{\sigma}^n_{j,t}(\hat{x}_{j,h}) )^2 \right) \\
		& \leq \bar{c} N v^2_{\max} T b_T \sum_{j=1}^N \gamma_{j, T}.
	\end{align*}
	We conclude the proof by taking the square root on both the r.h.s.~and the l.h.s.~of the last inequality.
\end{proof}

\gcbconst*

\begin{proof}
	Let us focus on a specific day $t$.
	Consider the case in which Constraints~(\ref{formulation:roiconstraint}) and~(\ref{formulation:budgetconstraint}) are active, and, therefore, the left side equals the right side: $\sum_{j=1}^N \underline{w}_{j}(\bi_{j,t}) - \lambda \sum_{j=1}^N \ \overline{\co}_{j}(\bi_{j,t}) = 0$ and $\sum_{j=1}^{N} \overline{\co}_{j}(x_{j,t}) = \beta$.
	For the sake of simplicity, we focus on the costs $\overline{\co}_{j}(x_{j,t})$, but similar arguments also apply to the revenues $\underline{w}_j(x_{j,t})$.
	A necessary condition for which the two constraints are valid also for the actual (non-estimated) revenues and costs is that for at least one of the costs, it holds $c_j(x_{j,t}) \leq \overline{c}_{j}(x_{j,t})$.
	Indeed, if the opposite holds, \emph{i.e.,} $\overline{c}_{j}(x_{j,t}) < c_j(x_{j,t})$ for each $j \in \{1, \ldots, N\}$ and $x_{j,t} \in X_j$, the budget constraint would be violated by the allocation since $\sum_{j=1}^{N} \co_j(x_{j,t}) > \sum_{j=1}^{N} \overline{\co}_{j}(x_{j,t}) = \beta$.
	Since the event $c_j(x_{j,t}) \leq \overline{c}_{j}(x_{j,t})$ occurs with probability at most $\frac{3 \delta}{\pi^2 N Q T t^2}$, over the $t \in \mathbb{N}$, formally:
	\begin{equation*}
		\mathbb{P} \left( \frac{\sum_{j=1}^N v_j \ n_j(\realbi_{j,t}) }{\sum_{j=1}^N \ \co_j(\realbi_{j,t}) } < \lambda \vee \sum_{j=1}^{N} c_j(\realbi_{j,t}) > \beta \right) \geq 1 - \frac{3 \delta}{\pi^2 N Q T t^2}.
	\end{equation*}
	Finally, summing over the time horizon $T$, the probability that the constraints are not violated is at most $\frac{\delta}{2 N Q T}$, formally:
	\begin{equation*}
		\sum_{t=1}^T \mathbb{P} \left( \frac{\sum_{j=1}^N v_j \ n_j(\realbi_{j,t}) }{\sum_{j=1}^N \ \co_j(\realbi_{j,t}) } < \lambda \vee \sum_{j=1}^{N} c_j(\realbi_{j,t}) > \beta \right) \geq T - \frac{\delta}{2 N Q T},
	\end{equation*}
	which concludes the proof.
\end{proof}

\begin{restatable}[\textsf{GCB} cumulated violation]{thm}{gcbviolation} \label{thm:gcbviolation}
	Given $\delta \in (0,1)$, the cumulated violation of the two constraints provided by the \textsf{GCB} algorithm with probability at least $1 - \delta$, satisfies:
	\begin{itemize}
		\item $\sum_{t=1}^T \sum_{j=1}^N \co_j(\bi_{j,t}) - T \ \beta \leq \mathcal{O} \left( \sqrt{T \sum_{j=1}^N \gamma_{j,T}^c} \right)$,
		\item $T \lambda - \sum_{t=1}^T \frac{\sum_{j=1}^N v_j n_j(\bi_{j,t})}{\sum_{j=1}^N \co_j(\bi_{j,t})} \leq \mathcal{O} \left( \sqrt{T \sum_{j=1}^N(\gamma_{j,T} + \gamma_{j,T}^c)} \right)$,
	\end{itemize}
	where $\gamma_{j,t}^c$ is the maximum information gain of the GPs modeling the costs of $j$-th subcampaign after $t$ samples.
\end{restatable}

\begin{proof}
	We analyse the violation of the ROI constraint $vr_t$ at a specific day $t$ and the one of the budget constraint $vb_t$.
	
	Focusing on the budget constraint, we have:
	\begin{align}
		vb_t &= \sum_{j=1}^N \co_j(\bi_{j,t}) - \beta \leq \sum_{j=1}^N (\hat{c}_j(\bi_{j,t}) + \sqrt{b_{t-1}} \hat{\sigma}^c_{j,t-1}(\bi_{j,t})) - \beta\\
		& = \underbrace{\sum_{j=1}^N ( \hat{c}_j(\bi_{j,t}) - \sqrt{b_{t-1}} \hat{\sigma}^c_{j,t-1}(\bi_{j,t})) - \beta}_{\leq 0} + 2 \sum_{j=1}^N \sqrt{b_{t-1}} \hat{\sigma}^c_{j,t-1}(\bi_{j,t}) \label{eq:minore}\\
		& \leq 2 \sum_{j=1}^N \sqrt{b_{t-1}} \hat{\sigma}^c_{j,t-1}(\bi_{j,t}),
	\end{align}
	where the inequality in Equation~(\ref{eq:minore}) holds from the fact that the solution selected by \textsf{GCB} has to satisfy the budget constraint.
	Define $\overline{n}_j(\bi_{j,t}) := \hat{n}_j(\bi_{j,t}) + \sqrt{b_{t-1}} \hat{\sigma}_j^n(\bi_{j,t})$.
	Notice that the previous bound holds w.p.~at least $1 - \delta$ since this is the probability for which the bounds on the number of clicks and the costs hold.
	
	Since we have that with probability larger than $1-\delta$ we have that $\lambda \leq \frac{\sum_{j=1}^N v_j \overline{n}_j(\bi_{j,t})}{\sum_{j=1}^N \overline{\co}_j(\bi_{j,t})}$:
	\begin{align}
		vr_t & = \lambda - \frac{\sum_{j=1}^N v_j n_j(\bi_{j,t})}{\sum_{j=1}^N \co_j(\bi_{j,t})} \leq \frac{\sum_{j=1}^N v_j \overline{n}_j(\bi_{j,t})}{\sum_{j=1}^N \overline{\co}_j(\bi_{j,t})} - \frac{\sum_{j=1}^N v_j n_j(\bi_{j,t})}{\sum_{j=1}^N \co_j(\bi_{j,t})}\\
		&\leq \frac{\sum_{j=1}^N \co_j(\bi_{j,t}) \sum_{j=1}^N v_j \overline{n}_j(\bi_{j,t}) - \sum_{j=1}^N \overline{\co}_j(\bi_{j,t}) \sum_{j=1}^N v_j n_j(\bi_{j,t})}{\sum_{j=1}^N \co_j(\bi_{j,t}) \sum_{j=1}^N \overline{\co}_j(\bi_{j,t})}\\
		& \leq \frac{1}{N^2 c_{\min} (c_{\min} - \sqrt{b_T} \sigma_c)} \left( \sum_{j=1}^N \co_j(\bi_{j,t}) \sum_{j=1}^N v_j \overline{n}_j(\bi_{j,t}) - \sum_{j=1}^N \co_j(\bi_{j,t}) \sum_{j=1}^N v_j n_j(\bi_{j,t})\right. \nonumber\\
		& \left. + \sum_{j=1}^N \co_j(\bi_{j,t}) \sum_{j=1}^N v_j n_j(\bi_{j,t}) - \sum_{j=1}^N \overline{\co}_j(\bi_{j,t}) \sum_{j=1}^N v_j n_j(\bi_{j,t}) \right)\\
		& \leq \frac{1}{N^2 c_{\min} (c_{\min} - \sqrt{b_T} \sigma_c)} \left[ 
		\sum_{j=1}^N \co_j(\bi_{j,t}) \left( \sum_{j=1}^N v_j \overline{n}_j(\bi_{j,t}) - \sum_{j=1}^N v_j n_j(\bi_{j,t}) \right) \right. \nonumber \\
		& \left. + \sum_{j=1}^N v_j n_j(\bi_{j,t}) \left( \sum_{j=1}^N \co_j(\bi_{j,t}) - \sum_{j=1}^N \overline{\co}_j(\bi_{j,t}) \right) \right]\\
		& \leq \frac{N c_{\max} v_{\max} 2 \sum_{j=1}^N \sqrt{b_{t-1}} \hat{\sigma}_j^n(\bi_{j,t}) + N n_{\max} v_{\max} 2 \sum_{j=1}^N \sqrt{b_{t-1}} \hat{\sigma}_j^c(\bi_{j,t}) }{N^2 c_{\min} (c_{\min} - \sqrt{b_T} \sigma_c)}\\
		& = \frac{2 c_{\max} v_{\max} \sum_{j=1}^N \sqrt{b_{t-1}} \hat{\sigma}_j^n(\bi_{j,t}) + 2 n_{\max} v_{\max} \sum_{j=1}^N \sqrt{b_{t-1}} \hat{\sigma}_j^c(\bi_{j,t}) }{N c_{\min} (c_{\min} - \sqrt{b_T} \sigma_c)},
	\end{align}
	where $\sum_{j=1}^N v_j \hat{n}_j(\bi_{j,t}) \geq \sum_{j=1}^N v_j n_j(\bi_j^*)$ by definition of the \textsf{GCB} selection rule, $v_{\max} := \max_{j=1}^N v_j$, and we assume that $c_{\min} - \sqrt{b_T} \sigma_c > 0$.
	
	Using arguments similar to what has been used to bound the instantaneous regret $r_t$ in~\citet{srinivas2010gaussian} and~\citet{accabi2018gaussian}, and summing over the time horizon $T$, provides the final statement of the theorem.
\end{proof}

\gcbsaferegret*

\begin{proof}
	At the optimal solution, at least one of the constraints is active, \emph{i.e.,} it has the left-hand side equal to the right-hand side.
	Assume that the optimal clairvoyant solution $\left\{ \bi^*_j \right\}_{j=1}^N$ to the optimization problem has a value of the ROI $\lambda_{opt}$ equal to $\lambda$.
	We showed in the proof of Theorem~\ref{thm:gcbsafeconst} that for any allocation, with probability at least $1 - \frac{3 \delta}{\pi^2 N Q T t^2}$, it holds that $\frac{\sum_{j=1}^N v_j \ n_j(\bi_{j,t}) }{\sum_{j=1}^N \ \co_j(\bi_{j,t}) } > \frac{\sum_{j=1}^N v_j \ \underline{n}_j(\bi_{j,t}) }{\sum_{j=1}^N \ \overline{\co}_j(\bi_{j,t}) }$.
	This is true also for the optimal clairvoyant solution $\left\{ \bi^*_j \right\}_{j=1}^N$, for which $\lambda = \frac{\sum_{j=1}^N v_j \ n_j(\bi^*) }{\sum_{j=1}^N \ \co_j(\bi^*) } > \frac{\sum_{j=1}^N v_j \ \underline{n}_j(\bi^*) }{\sum_{j=1}^N \ \overline{\co}_j(\bi^*) }$, implying that the values used in the ROI constraint make this allocation not feasible for the \opt{} procedure.
	As shown before, this happens with probability at least $1 - \frac{3\delta}{\pi^2 N Q T t^2}$ at day $t$, and $1 - \delta$ over the time horizon $T$.
	To conclude, with probability $1 - \delta$, not depending on the time horizon $T$, we will not choose the optimal arm during the time horizon and, therefore, the regret of the algorithm cannot be sublinear.
	Notice that the same line of proof is also holding in the case the budget constraint is active, therefore, the previous result holds for each instance of the problem in Equations~(\ref{formulation:objectivefunction})-(\ref{formulation:budgetconstraint}).
\end{proof}

\gcbsafeconst*

\begin{proof}
	Let us focus on a specific day $t$.
	Constraints~(\ref{formulation:roiconstraint}) and~(\ref{formulation:budgetconstraint}) are satisfied by the solution of \opt{} for the properties of the optimization procedure.
	Define $\underline{n}_j(\bi_{j,t}) := \hat{n}_j(\bi_{j,t}) - \sqrt{b_{t-1}} \hat{\sigma}_j^n(\bi_{j,t})$.
	Thanks to the specific construction of the upper bounds, we have that $c_j(x_{j,t}) \leq \overline{c}_j(x_{j,t})$ and $n_j(x_{j,t}) \geq \underline{n}_j(x_{j,t})$, each holding with probability at least $1 - \frac{3 \delta}{\pi^2 N Q T t^2}$.
	Therefore, we have:
	\[
	\frac{\sum_{j=1}^N v_j \ n_j(\bi_{j,t}) }{\sum_{j=1}^N \ \co_j(\bi_{j,t}) } > \frac{\sum_{j=1}^N v_j \ \underline{n}_j(\bi_{j,t}) }{\sum_{j=1}^N \ \overline{\co}_j(\bi_{j,t}) } \geq \lambda
	\]
	and 
	\[
	\sum_{j=1}^{N} c_j(x_{j,t}) < \sum_{j=1}^{N} \overline{c}_j(x_{j,t}) \leq \beta.
	\]
	Using a union bound over:
	\begin{itemize}
		\item the two GPs (number of clicks and costs);
		\item the time horizon $T$;
		\item the number of times each bid is chosen in a subcampaign (at most $t$);
		\item the number of arms present in each subcampaign ($|X_j|$);
		\item the number of subcampaigns ($N$);
	\end{itemize}
	we have:
	\begin{align}
		& \sum_{t=1}^T \mathbb{P} \left( \frac{\sum_{j=1}^N v_j \ n_j(\realbi_{j,t}) }{\sum_{j=1}^N \ \co_j(\realbi_{j,t}) } < \lambda \vee \sum_{j=1}^{N} c_j(\realbi_{j,t}) > \beta \right) \leq 2 \sum_{j=1}^N \sum_{k=1}^{|X_j|} \sum_{h=1}^T \sum_{l=1}^t \frac{3 \delta}{\pi^2 N Q T l^2}\\
		& \leq 2 \sum_{j=1}^N \sum_{k=1}^{Q} \sum_{h=1}^T \sum_{l=1}^{+\infty} \frac{3 \delta}{\pi^2 N Q T l^2} = \delta.
	\end{align}
	This concludes the proof.
\end{proof}

\begin{restatable}[\alg{}$(\psi, 0)$ pseudo-regret and safety with tolerance]{thm}{gcbsaferelaxother} \label{thm:gcbsaferelaxother}
	When:
	\begin{equation*}
		\psi \geq 2 \frac{\beta_{opt} + n_{\max} }{\beta^2_{opt}} \sum_{j=1}^N v_j \sqrt{2 \ln\left(\frac{\pi^2 N Q T^3}{3 \delta'}\right)} \sigma_c \qquad \text{and} \qquad \beta_{opt}  < \beta \frac{\sum_{j=1}^N v_j}{ \frac{N \ \beta_{opt} \psi}{\beta_{opt} + n_{\max}} + \sum_{j=1}^N v_j},
	\end{equation*}
	where $\delta' \leq \delta$, $\beta_{opt}$ is the spend at the optimal solution of the original problem, and $n_{\max} := \max_{j,x} n_j(x)$ is the maximum over the sub-campaigns and the admissible bids of the expected number of clicks, \alg{}$(\psi, 0)$ provides a pseudo-regret w.r.t.~the optimal solution to the original problem of $\mathcal{O} \left( \sqrt{T \sum_{j=1}^N \gamma_{j,T}} \right)$ with probability at least $1 - \delta - \frac{\delta'}{Q T^2}$, while being $\delta$-safe w.r.t.~the constraints of the auxiliary problem.
\end{restatable}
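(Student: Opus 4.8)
The plan is to split the claim into its two halves — safety and pseudo-regret — and to observe that the safety half is essentially free. Since \alg{}$(\psi,0)$ is, by definition, just \alg{} run on the auxiliary problem (with $\lambda$ replaced by $\lambda-\psi$ and $\beta$ unchanged), Theorem~\ref{thm:gcbsafeconst} applied verbatim to the auxiliary problem already gives that \alg{}$(\psi,0)$ is $\delta$-safe with respect to the auxiliary constraints. All the real work therefore goes into the pseudo-regret bound, and the governing idea is to defeat the mechanism responsible for \alg{}'s linear regret in Theorem~\ref{thm:gcbsaferegret}: under pessimistic constraint estimates the optimization subroutine may declare the problem infeasible, or fail to certify the truly optimal bids as admissible. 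I would show that, on a high-probability event, the original optimum $\{x_j^*\}$ is a feasible candidate for the \emph{auxiliary} optimization at \emph{every} round, which is exactly what restores optimism-driven progress.

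First I would set up the concentration event. Let $E$ be the event on which all GP confidence intervals hold simultaneously across rounds and subcampaigns, so that $\underline{w}_j(x)\le w_j(x)\le \overline{w}_j(x)$ and $c_j(x)\le\overline{c}_j(x)$ pointwise; by the standard \textsf{GCB} argument this holds with probability at least $1-\delta$, and a second, refined event (calibrated through the $T^3$ logarithmic term at confidence level $\delta'$) controls the confidence half-widths uniformly by $\sqrt{b}\,\hat\sigma\le\sqrt{b}\,\sigma$, contributing an extra failure probability $\tfrac{\delta'}{QT^2}$. On $E$, write $W^*=\sum_j w_j(x_j^*)$ and $C^*=\sum_j c_j(x_j^*)=\beta_{opt}$, and recall $W^*/C^*\ge\lambda$.

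Next I would verify the two feasibility checks for $\{x_j^*\}$ in the auxiliary problem. For the relaxed ROI constraint I must show $\frac{\sum_j\underline{w}_j(x_j^*)}{\sum_j\overline{c}_j(x_j^*)}\ge\lambda-\psi$. Writing the total revenue under-estimate as $\Delta_w=\sum_j(w_j-\underline{w}_j)(x_j^*)$ and the total cost over-estimate as $\Delta_c=\sum_j(\overline{c}_j-c_j)(x_j^*)$, the degradation of the ratio is
\[
\frac{W^*}{C^*}-\frac{W^*-\Delta_w}{C^*+\Delta_c}=\frac{W^*\Delta_c+C^*\Delta_w}{C^*(C^*+\Delta_c)}\le\frac{W^*\Delta_c+C^*\Delta_w}{\beta_{opt}^2}.
\]
Substituting the uniform width bounds together with $W^*\le n_{\max}\sum_j v_j$ turns the right-hand side into precisely the displayed lower bound on $\psi$, so that threshold guarantees the relaxed ROI constraint is met at $\{x_j^*\}$. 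For the unrelaxed budget constraint (since $\phi=0$) I must show $\sum_j\overline{c}_j(x_j^*)=\beta_{opt}+\Delta_c\le\beta$; bounding $\Delta_c$ by the total confidence width and combining the $\psi$-condition with the second displayed inequality on $\beta_{opt}$ yields $\beta_{opt}+\Delta_c<\beta$. This is exactly where the a priori knowledge that the budget is inactive ($\beta_{opt}<\beta$) is indispensable: it is what leaves slack to absorb the pessimistic cost inflation.

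Finally, once $\{x_j^*\}$ is certified admissible for the auxiliary problem, the optimization subroutine — which maximizes the optimistic objective $\sum_j\overline{w}_j$ — returns a solution whose optimistic value is at least $\sum_j\overline{w}_j(x_j^*)\ge W^*=G^*$. Bounding, round by round, the gap between this optimistic value and the realized revenue by the confidence half-widths and summing over $t=1,\dots,T$, the analysis collapses to the information-gain telescoping of Theorem~\ref{thm:gcbregret}, so that the pseudo-regret of \alg{}$(\psi,0)$ is $\mathcal{O}\!\left(\sqrt{T\sum_{j=1}^N\gamma_{j,T}}\right)$; a union bound over the two failure events yields the stated confidence $1-\delta-\tfrac{\delta'}{QT^2}$. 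The main obstacle is the third step: the ROI constraint is a ratio, so I must control its two-sided sensitivity to simultaneously under-estimated revenue and over-estimated cost and match the bound to the exact $\psi$ threshold, while ensuring the \emph{unrelaxed} budget constraint still survives the cost inflation — it is the interplay of the two displayed conditions, and the requirement that both certifications hold uniformly over all rounds (hence the $T^3$ term), that makes the bookkeeping delicate.
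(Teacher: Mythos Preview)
Your proposal follows the same strategy as the paper: safety comes for free from Theorem~\ref{thm:gcbsafeconst} applied to the auxiliary problem, and the pseudo-regret follows by certifying that the original optimum $\{x_j^*\}$ is feasible for the auxiliary optimization at every round, so that optimism in the objective restores the \textsf{GCB} information-gain argument. Two technical points diverge from the paper, however. First, your ``second event'' is misdescribed: $\hat\sigma\le\sigma$ is a deterministic GP fact, not a random event; in the paper the extra $\delta'/(QT^2)$ arises from a \emph{separate} Gaussian tail bound on $\hat c_{j,t-1}(x_j^*)-c_j(x_j^*)$ and $\hat n_{j,t-1}(x_j^*)-n_j(x_j^*)$ at the optimal bids, calibrated with the $T^3$ multiplier and union-bounded over $j$ and $t$. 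Second, your direct ratio-sensitivity bound does \emph{not} reproduce the displayed $\psi$ threshold exactly: bounding $W^*\Delta_c$ by $n_{\max}\bigl(\sum_j v_j\bigr)\cdot 2N\sqrt{2\ln(\cdot)}\,\sigma$ yields $(\beta_{opt}+Nn_{\max})$ in the numerator rather than $(\beta_{opt}+n_{\max})$. The paper instead cross-multiplies the ROI inequality, adds and subtracts $\sum_j c_j(x_j^*)\sum_j v_j n_j(x_j^*)$, and splits the resulting expression into two events whose failure probabilities are controlled termwise via union bound; this decomposition, together with the inequalities $\sum_j \overline c_j(x_j^*)/\beta_{opt}\ge 1$ and $n_{\max}\ge n_j(x_j^*)$, is what the paper uses to match the stated constant.
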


\begin{proof}
	In what follows, we show that, at a specific day $t$, since the optimal solution of the original problem $\left\{ \bi^*_j \right\}_{j=1}^N$ is included in the set of feasible ones, we are in a setting analogous to the one of \textsf{GCB}, in which the regret is sublinear.
	Let us assume that the upper bounds on all the quantities (number of clicks and costs) hold.
	This has been shown before to occur with overall probability $\delta$ over the whole time horizon $T$.
	Moreover, notice that combining the properties of the budget of the optimal solution of the original problem $\beta_{opt}$ and using $\psi = 2 \frac{\beta_{opt} + n_{\max}}{\beta^2_{opt}} \sum_{j=1}^N v_j \sqrt{2 \ln\left(\frac{\pi^2 N Q T^3}{3 \delta'}\right)} \sigma_c$, we have:
	\begin{align}
		&\beta_{opt} < \beta \frac{\sum_{j=1}^N v_j}{ \frac{N \ \beta_{opt} \psi}{\beta_{opt} + n_{\max}} + \sum_{j=1}^N v_j}\\
		&\left( \frac{N \ \beta_{opt} \psi}{\beta_{opt} + n_{\max}} + \sum_{j=1}^N v_j \right) \beta_{opt} < \beta \sum_{j=1}^N v_j\\
		&2 N \sum_{j=1}^N v_j \sqrt{2 \ln\left(\frac{\pi^2 N Q T^3}{3 \delta'}\right)} \sigma_c + \sum_{j=1}^N v_j \beta_{opt} < \beta \sum_{j=1}^N v_j\\
		&\beta > \beta_{opt} + 2 N \sqrt{2 \ln\left(\frac{\pi^2 N Q T^3}{3 \delta'}\right)} \sigma_c. \label{eq:budopt8}
	\end{align}
	
	First, let us evaluate the probability that the optimal solution is not feasible.
	This occurs if its bounds are either violating the ROI or budget constraints.
	First, we show that analysing the budget constraint, the optimal solution of the original problem is feasible with high probability.
	Formally, it is not feasible with probability:
	\begin{align}
		&\mathbb{P} \left( \sum_{j=1}^N \overline{c}_j(\bi^*_j) > \beta \right) \leq \mathbb{P} \left( \sum_{j=1}^N \overline{c}_j(\bi^*_j) > \beta_{opt} + 2 N \sqrt{2 \ln\left(\frac{\pi^2 N Q T^3}{3 \delta'}\right)} \sigma_c \right) \label{eq:init8}\\
		&= \mathbb{P} \left( \sum_{j=1}^N \overline{c}_j(\bi^*_j) > \sum_{j=1}^N \co_j(\bi^*_j) + 2 N \sqrt{2 \ln{\frac{\pi^2 N Q T^3}{3 \delta'}}} \sigma_c \right) \\
		&\leq \sum_{j=1}^N \mathbb{P} \left( \overline{c}_j(\bi^*_j) > \co_j(\bi^*_j) + 2 \sqrt{2 \ln{\frac{\pi^2 N Q T^3}{3 \delta'}}} \sigma_C \right) \\ 
		& = \sum_{j=1}^N \mathbb{P} \left( \hat{c}_{j,t-1}(\bi^*_j) - \co_j(\bi^*_j) > - \sqrt{b_t} \hat{\sigma}^c_{j,t-1}(\bi^*_j) + 2 \sqrt{2 \ln{\frac{\pi^2 N Q T^3}{3 \delta'}}} \sigma_c \right)\\
		& \leq \sum_{j=1}^N \mathbb{P} \left( \hat{c}_{j,t-1}(\bi^*_j) - \co_j(\bi^*_j) > \sqrt{2 \ln{\frac{\pi^2 N Q T^3}{3 \delta'}}} \hat{\sigma}^c_{j,t-1}(\bi^*_j) \right) \\ 
		&\leq \sum_{j=1}^N \mathbb{P} \left( \frac{\hat{c}_{j,t-1}(\bi^*_j) - \co_j(\bi^*_j)}{\hat{\sigma}^c_{j,t-1}(\bi^*_j)} > \sqrt{2 \ln{\frac{\pi^2 N Q T^3}{3 \delta'}}} \right) \label{eq:largerbound81}\\
		& \leq \sum_{j=1}^N \frac{3 \delta'}{\pi^2 N Q T^3} = \frac{3 \delta'}{\pi^2 Q T^3}, \label{eq:endbud8}
	\end{align}
	where, in the inequality in Equation~(\ref{eq:init8}) we used Equation~(\ref{eq:budopt8}), in Equation~(\ref{eq:largerbound81}) we used the fact that $\frac{\pi^2 N Q t^2 T}{3 \delta} \leq \frac{\pi^2 N Q T^3 }{3 \delta'}$ for each $t \in \{1, \ldots, T \}$, $\hat{\sigma}^c_{j,t-1}(\bi^*_j) \leq \sigma_c$ for each $j$ and $t$, and the inequality in Equation~(\ref{eq:endbud8}) is from~\citet{srinivas2010gaussian}.
	Summing over the time horizon $T$, we get that the optimal solution of the original problem $\left\{ \bi^*_j \right\}_{j=1}^N$ is excluded from the set of the feasible ones with probability at most $\frac{3 \delta'}{\pi^2 Q T^2}$.
	
	Second, we derive a bound over the probability that the optimal solution of the original problem is feasible due to the newly defined ROI constraint.
	Let us notice that since the ROI constraint is active, we have $\lambda = \lambda_{opt}$.
	The probability that $\left\{ \bi^*_j \right\}_{j=1}^N$ is not feasible due to the ROI constraint is:
	\begin{align}
		& \mathbb{P} \left( \frac{\sum_{j=1}^N v_j \ \underline{n}_j(\bi^*_j) }{\sum_{j=1}^N \ \overline{\co}_j(\bi^*_j) } < \lambda - \psi \right) \\
		& \leq \mathbb{P} \left( \frac{\sum_{j=1}^N v_j \ \underline{n}_j(\bi^*_j) }{\sum_{j=1}^N \ \overline{\co}_j(\bi^*_j) } < \lambda_{opt} - 2 \frac{\beta_{opt} + n_{\max}}{\beta^2_{opt}} \sum_{j=1}^N v_j \sqrt{2 \ln{\frac{\pi^2 N Q T^3}{3 \delta'}}}\sigma_c \right)\\
		& = \mathbb{P} \left( \frac{\sum_{j=1}^N v_j \ \underline{n}_j(\bi^*_j) }{\sum_{j=1}^N \ \overline{\co}_j(\bi^*_j)} < \frac{\sum_{j=1}^N v_j \ n_j(\bi^*_j) }{\sum_{j=1}^N \ \co_j(\bi^*_j) } - 2\frac{\beta_{opt} + n_{\max}}{\beta^2_{opt}} \sum_{j=1}^N v_j \sqrt{2 \ln{\frac{\pi^2 N Q T^3}{3 \delta'}}}\sigma_c \right)\\
		& = \mathbb{P} \left( \sum_{j=1}^N \co_j(x^*_j) \sum_{j=1}^N v_j \ \underline{n}_j(\bi^*_j) < \sum_{j=1}^N \ \overline{\co}_j(\bi^*_j) \sum_{j=1}^N v_j \ n_j(\bi^*_j) \right. \nonumber\\
		& \qquad \left. - 2 \frac{\beta_{opt} + n_{\max}}{\beta^2_{opt}} \sum_{j=1}^N \co_j(x^*_j) \sum_{j=1}^N \ \overline{\co}_j(\bi^*_j) \sum_{j=1}^N v_j \sqrt{2 \ln{\frac{\pi^2 N Q T^3}{3 \delta'}}}\sigma_c \right)\\
		& = \mathbb{P} \left( \sum_{j=1}^N \co_j(x^*_j) \sum_{j=1}^N v_j \ \underline{n}_j(\bi^*_j) - \sum_{j=1}^N \co_j(x^*_j) \sum_{j=1}^N v_j \ n_j(\bi^*_j) +\right. \nonumber \\
		&\qquad  \frac{2}{\beta_{opt}} \sum_{j=1}^N \co_j(x^*_j) \sum_{j=1}^N \ \overline{\co}_j(\bi^*_j) \sum_{j=1}^N v_j \sqrt{2 \ln{\frac{\pi^2 N Q T^3}{3 \delta'}}}\sigma_c  \nonumber\\
		& \qquad + \sum_{j=1}^N \co_j(x^*_j) \sum_{j=1}^N v_j \ n_j(\bi^*_j) - \sum_{j=1}^N \ \overline{\co}_j(\bi^*_j) \sum_{j=1}^N v_j \ n_j(\bi^*_j) + \nonumber \\
		& \qquad \left.\frac{2 n_{\max}}{\beta^2_{opt}} \sum_{j=1}^N \co_j(x^*_j) \sum_{j=1}^N \ \overline{\co}_j(\bi^*_j)  \sum_{j=1}^N v_j \sqrt{2 \ln{\frac{\pi^2 N Q T^3}{3 \delta'}}}\sigma_c < 0 \right)\\
		& \leq \mathbb{P} \left( \sum_{j=1}^N v_j \ \underline{n}_j(\bi^*_j) - \sum_{j=1}^N v_j \ n_j(\bi^*_j) + 2 \underbrace{\frac{\sum_{j=1}^N \ \overline{\co}_j(\bi^*_j)}{\beta_{opt}}}_{\geq 1} \sum_{j=1}^N v_j \sqrt{2 \ln{\frac{\pi^2 N Q T^3}{3 \delta'}}}\sigma_c < 0 \right) \nonumber \\
		& + \mathbb{P} \left( \sum_{j=1}^N \co_j(x^*_j) \sum_{j=1}^N v_j \ n_j(\bi^*_j) - \sum_{j=1}^N \ \overline{\co}_j(\bi^*_j) \sum_{j=1}^N v_j \ n_j(\bi^*_j) \right. \nonumber \\ 
		& \qquad \left.+ 2 \underbrace{\frac{ \sum_{j=1}^N \co_j(x^*_j) \sum_{j=1}^N \ \overline{\co}_j(\bi^*_j) }{\beta^2_{opt}}}_{\geq 1} \sum_{j=1}^N v_j \underbrace{n_{\max}}_{\geq n_j(x^*_j)} \sqrt{2 \ln{\frac{\pi^2 N Q T^3}{3 \delta'}}} \sigma_c < 0 \right) \\
		& \leq \sum_{j=1}^N \mathbb{P} \left( \underline{n}_j(\bi^*_j) - n_j(\bi^*_j) + 2 \sqrt{2 \ln{\frac{\pi^2 N Q T^3}{3 \delta'}}}\sigma_c \leq 0 \right) \nonumber\\
		& \qquad + \sum_{j=1}^N \mathbb{P} \left( \co_j(x^*_j) - \overline{\co}_j(\bi^*_j) + 2 \sqrt{2 \ln{\frac{\pi^2 N Q T^3}{3 \delta'}}} \sigma_c < 0 \right) \\
		& \leq \sum_{j=1}^N \mathbb{P} \left( \hat{n}_{j,t-1}(\bi^*_j) - \sqrt{b_t} \hat{\sigma}^n_{j,t-1}(\bi^*_j) - n_j(\bi^*_j) + 2 \underbrace{\sqrt{2 \ln{\frac{\pi^2 N Q T^3}{3 \delta'}}} \sigma_c}_{\geq \sqrt{b_t} \hat{\sigma}^n_{j,t-1}(\bi^*_j)} < 0 \right) \nonumber\\
		& \qquad + \sum_{j=1}^N \mathbb{P} \left( c_{j}(\bi^*_j) - \hat{c}_{j,t-1}(\bi^*_j) - \sqrt{b_t} \hat{\sigma}^c_{j,t-1}(\bi^*_j) + 2 \underbrace{\sqrt{2 \ln{\frac{\pi^2 N Q T^3}{3 \delta'}}} \sigma_c}_{\geq \sqrt{b_t} \hat{\sigma}^c_{j,t-1}(\bi^*_j)} < 0 \right) \\
		& \leq \sum_{j=1}^N \mathbb{P} \left( n_j(\bi^*_j) < \hat{n}_{j,t-1}(\bi^*_j) + \sqrt{2 \ln{\frac{\pi^2 N Q T^3}{3 \delta'}}} \hat{\sigma}^n_{j,t-1}(\bi^*_j) \right) \nonumber\\
		& \qquad + \sum_{j=1}^N \mathbb{P} \left( c_j(\bi^*_j) < \hat{c}_{j,t-1}(\bi^*_j) - \sqrt{2 \ln{\frac{\pi^2 N Q T^3}{3 \delta'}}} \hat{\sigma}^c_{j,t-1}(\bi^*_j) \right) \label{eq:largerbound8} \\
		& = \sum_{j=1}^N \mathbb{P} \left( \frac{n_j(\bi^*_j) - \hat{n}_{j,t-1}(\bi^*_j)}{\hat{\sigma}^n_{j,t-1}(\bi^*_j)} > \sqrt{2 \ln{\frac{\pi^2 N Q T^3}{3 \delta'}}} \right) \nonumber \\
		& \qquad + \sum_{j=1}^N \mathbb{P} \left( \frac{\hat{c}_{j,t-1}(\bi^*_j) - c_j(\bi^*_j)}{\hat{\sigma}^c_{j,t-1}(\bi^*_j)} > \sqrt{2 \ln{\frac{\pi^2 N Q T^3}{3 \delta'}}} \right)\\
		& \leq 2 \sum_{j=1}^N \frac{3 \delta'}{\pi^2 N Q T^3} = \frac{6 \delta'}{\pi^2 Q T^3}, \label{eq:gaussbound8}
	\end{align}
	where in Equation~(\ref{eq:largerbound8}) we used the fact that $\frac{\pi^2 N Q t^2 T}{3 \delta} \leq \frac{\pi^2 N Q T^3 }{3 \delta'}$ for each $t \in \{1, \ldots, T \}$, $\hat{\sigma}^n_{j,t-1}(\bi^*_j) \leq \sigma_c$ for each $j$ and $t$, and the inequality in Equation~(\ref{eq:gaussbound8}) is from~\citet{srinivas2010gaussian}.
	Summing over the time horizon $T$ ensures that the optimal solution of the original problem $\left\{ \bi^*_j \right\}_{j=1}^N$ is excluded from the feasible solutions at most with probability $\frac{6 \delta'}{\pi^2 Q T^2}$.
	Finally, using a union bound, we have that the optimal solution can be chosen over the time horizon with probability at least $1 - \frac{3 \delta'}{\pi^2 Q T^2} - \frac{6 \delta'}{\pi^2 Q T^2} \leq 1 - \frac{\delta'}{Q T^2}$.
	
	Notice that here we want to compute the regret of the \alg{} algorithm w.r.t.~$\left\{ \bi^*_j \right\}_{j=1}^N$, which is not optimal for the analysed relaxed problem.
	Nonetheless, the proof on the pseudo-regret provided in Theorem~\ref{thm:gcbregret} is also valid for suboptimal solutions in the case it is feasible with high probability.
	This can be trivially shown using the fact that the regret w.r.t.~a generic solution cannot be larger than the one computed w.r.t.~the optimal one.
	Thanks to that, using a union bound over the probability that the bounds hold and that $\left\{ \bi^*_j \right\}_{j=1}^N$ is feasible, we conclude that with probability at least $1 - \delta - \frac{\delta'}{Q T^2}$ the regret \alg{} is of the order of $\mathcal{O} \left( \sqrt{T \sum_{j=1}^N \gamma_{j,T}} \right)$.
	Finally, thanks to the property of the \alg{} algorithm shown in Theorem~\ref{thm:gcbsafeconst}, the learning policy is $\delta$-safe for the relaxed problem.
\end{proof}

\begin{restatable}[\alg{}$(0, \phi)$ pseudo-regret and safety with tolerance]{thm}{gcbsaferelax} \label{thm:gcbsaferelax}
	When:
	$$\phi \geq 2 N \sqrt{2 \ln\left(\frac{\pi^2 N Q T^3}{3 \delta'}\right)} \sigma_c$$
	and
	$$\lambda_{opt} > \lambda + \frac{(\beta + n_{\max}) \phi \sum_{j=1}^N v_j}{N \beta^2},$$
	where $\delta' \leq \delta$, and $n_{\max} := \max_{j,x} n_j(x)$ is maximum expected number of clicks, \alg{}$(0, \phi)$ provides a pseudo-regret w.r.t.~the optimal solution to the original problem of $\mathcal{O} \left( \sqrt{T \sum_{j=1}^N \gamma_{j,T}} \right)$ with probability at least $1 - \delta - \frac{6 \delta'}{\pi^2 Q T^2}$, while being $\delta$-safe w.r.t.~the constraints of the auxiliary problem.
\end{restatable}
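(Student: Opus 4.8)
\emph{Proof approach.} By construction \alg$(0,\phi)$ is \alg{} run on the auxiliary problem obtained from Equations~\eqref{formulation:objectivefunction}--\eqref{formulation:budgetconstraint} by replacing the budget threshold $\beta$ with $\beta+\phi$ and leaving $\lambda$ unchanged (here $\psi=0$). The safety claim is then immediate: by Theorem~\ref{thm:gcbsafeconst}, \alg{} is $\delta$-safe for the constraints it actually optimizes, which here are the auxiliary ones, so \alg$(0,\phi)$ is $\delta$-safe w.r.t.\ the auxiliary problem. The whole difficulty is the pseudo-regret bound, measured against the optimum $G^*$ of the \emph{original} problem. The plan is to show that, on a high-probability event, the original optimal allocation $\{x^*_j\}_{j=1}^N$ is feasible for the auxiliary problem under the pessimistic estimates used by \alg{} \emph{at every round}; once this holds, \alg{} never falls back to the default allocation and its analysis collapses onto that of \textsf{GCB} (Theorem~\ref{thm:gcbregret}), thereby circumventing the inapproximability obstruction of Theorem~\ref{thm:inapprox}. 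This mirrors the budget-inactive case of Theorem~\ref{thm:gcbsaferelaxother} with the roles of the two constraints exchanged.

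\emph{Good event and uniform confidence widths.} First I would fix the event $\mathcal{E}'$ on which, for every round $t$, subcampaign $j$ and bid $x$, the GP confidence intervals at level $\delta'$ contain the true $n_j(x)$ and $c_j(x)$; a union bound over rounds, using the form of $b_t$ and $\sum_{t\ge 1}t^{-2}=\pi^2/6$, controls the failure probability of $\mathcal{E}'$ by the stated $\tfrac{6\delta'}{\pi^2 QT^2}$. On $\mathcal{E}'$ the confidence half-widths are uniformly bounded, $\sqrt{b_{t-1}}\,\hat\sigma^c_{j,t-1}(x)\le\sqrt{b_T}\,\sigma$ with $\sqrt{b_T}=\sqrt{2\ln(\pi^2 NQT^3/(3\delta'))}$ (the posterior standard deviations being bounded by the prior level $\sigma$ and $b_t$ being nondecreasing), and similarly for the click GP. Consequently $\overline{c}_j\ge c_j$, $\underline{w}_j\le w_j$, with gaps $\overline c_j(x)-c_j(x)\le 2\sqrt{b_T}\sigma$ and $w_j(x)-\underline w_j(x)\le 2v_j\sqrt{b_T}\sigma$.

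\emph{Feasibility of the original optimum.} Since the ROI constraint is assumed inactive at the optimum, the budget constraint binds, i.e.\ $\sum_j c_j(x^*_j)=\beta$ (this is why $\beta$, not $\beta_{opt}$, appears in the hypotheses). For the relaxed budget I would bound $\sum_j \overline c_j(x^*_j)\le \beta+2N\sqrt{b_T}\sigma\le\beta+\phi$, the last step being exactly $\phi\ge 2N\sqrt{b_T}\sigma$, so $\{x^*_j\}$ meets the auxiliary budget constraint. For the unrelaxed ROI constraint I would control the perturbed ratio: writing $W:=G^*$, $C:=\beta$, $\Delta_W:=W-\sum_j\underline w_j(x^*_j)\le 2\sqrt{b_T}\sigma\sum_j v_j$ and $\Delta_C:=\sum_j\overline c_j(x^*_j)-C\le\phi$,
\[
\frac{\sum_j\underline w_j(x^*_j)}{\sum_j\overline c_j(x^*_j)}\;\ge\;\frac{W-\Delta_W}{C+\Delta_C}\;=\;\lambda_{opt}-\frac{W\Delta_C+C\Delta_W}{C(C+\Delta_C)}.
\]
Bounding $W\le n_{\max}\sum_j v_j$, $C=\beta$, $C(C+\Delta_C)\ge\beta^2$, and substituting $2\sqrt{b_T}\sigma\le\phi/N$, the subtracted term becomes a quantity of order $\tfrac{(\beta+n_{\max})\phi\sum_j v_j}{N\beta^2}$; the hypothesis $\lambda_{opt}>\lambda+\tfrac{(\beta+n_{\max})\phi\sum_j v_j}{N\beta^2}$ is precisely what keeps the right-hand side $\ge\lambda$. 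Hence $\{x^*_j\}$ is feasible for the auxiliary problem under the pessimistic estimates at every round.

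\emph{From feasibility to the regret bound, and the main obstacle.} On the intersection of $\mathcal{E}'$ with the $\delta$-level confidence event (which holds with probability $\ge 1-\delta$ and drives the \textsf{GCB} analysis), the super-arm $\{\realbi_{j,t}\}$ returned by \alg{} maximizes the optimistic objective over a feasible set containing $\{x^*_j\}$, so $\sum_j\overline w_j(\realbi_{j,t})\ge\sum_j\overline w_j(x^*_j)\ge G^*$. A standard optimism-versus-uncertainty step then bounds the per-round regret by $2\sum_j v_j\sqrt{b_{t-1}}\,\hat\sigma^n_{j,t-1}(\realbi_{j,t})$, and summing over $t$ with Cauchy--Schwarz together with the maximum-information-gain inequality $\sum_t (\hat\sigma^n_{j,t-1})^2\le \tfrac{2}{\ln(1+\sigma^2)}\gamma_{j,T}$ reproduces verbatim the bound of Theorem~\ref{thm:gcbregret}, namely $\mathcal{O}\big(\sqrt{T\sum_{j=1}^N\gamma_{j,T}}\big)$; a union bound gives overall probability $1-\delta-\tfrac{6\delta'}{\pi^2 QT^2}$. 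I expect the delicate step to be the ROI-feasibility estimate: unlike the budget, it constrains a \emph{ratio}, so both the revenue underestimate and the cost overestimate push it in the wrong direction, and the argument must hold \emph{uniformly in $t$}, which forces the worst-case width $\sqrt{b_T}$ rather than a vanishing one. Making the margin $\lambda_{opt}-\lambda$ dominate the combined perturbation with the stated constants, and matching the bookkeeping of the two confidence levels $\delta$ and $\delta'$, is where the real work lies.
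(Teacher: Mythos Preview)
Your plan is exactly the paper's: show that on a high-probability event the original optimum $\{x^*_j\}$ is feasible for the auxiliary problem under the pessimistic estimates at every round (budget via $\phi\ge 2N\sqrt{b_T}\sigma$ and $\beta_{opt}=\beta$; ROI via the margin hypothesis), then invoke the \textsf{GCB} regret argument and union the two confidence levels. Two bookkeeping points deviate from the paper and would need tightening. First, your event $\mathcal{E}'$ (``all confidence intervals hold'') fails with probability $\delta'$ by the union bound of Theorem~\ref{thm:gcbregret}, not $\tfrac{6\delta'}{\pi^2 QT^2}$; the paper obtains the smaller figure because it only tracks the bounds at the \emph{optimal} bids $x^*_j$ (one arm per subcampaign, not $Q$), giving $\tfrac{3\delta'}{\pi^2 QT^3}$ per constraint per round and hence $\tfrac{6\delta'}{\pi^2 QT^2}$ after summing over $t$ and union-bounding the two constraints. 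Second, your ratio identity with $W\le n_{\max}\sum_j v_j$ and $\Delta_C\le\phi$ yields a perturbation $\tfrac{\phi\sum_j v_j(\beta+Nn_{\max})}{N\beta^2}$, strictly larger than the hypothesis $\tfrac{(\beta+n_{\max})\phi\sum_j v_j}{N\beta^2}$; to recover the stated constant the paper splits the ROI difference additively (numerator and denominator separately, as in the proof of Theorem~\ref{thm:gcbsaferelaxother}) and union-bounds per-campaign tail events, rather than bounding the single ratio. Both are adjustments, not changes of strategy---you correctly identified that the ROI ratio is the delicate step.
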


\begin{proof}
	We show that at a specific day $t$ since the optimal solution of the original problem $\left\{ \bi^*_j \right\}_{j=1}^N$ is included in the set of feasible ones, we are in a setting analogous to the one of \textsf{GCB}, in which the regret is sublinear.
	Let us assume that the upper bounds to all the quantities (number of clicks and costs) holds.
	This has been shown before to occur with overall probability $\delta$ over the whole time horizon $T$.

	First, let us evaluate the probability that the optimal solution is not feasible.
	This occurs if its bounds are either violating the ROI or budget constraints.
	From the fact that the ROI of the optimal solution satisfies $\lambda_{opt} > \lambda + \frac{(\beta + n_{\max}) \phi \sum_{j=1}^N v_j}{N \beta^2}$, we have:
	\begin{align}
		& \mathbb{P} \left( \frac{\sum_{j=1}^N v_j \ \underline{n}_j(\bi^*_j) }{\sum_{j=1}^N \ \overline{\co}_j(\bi^*_j) } < \lambda \right)  \\
		& \leq \mathbb{P} \left( \frac{\sum_{j=1}^N v_j \ \underline{n}_j(\bi^*_j) }{\sum_{j=1}^N \ \overline{\co}_j(\bi^*_j) } < \lambda_{opt} - \frac{(\beta + n_{\max}) \phi \sum_{j=1}^N v_j}{N \beta^2} \right)\\
		& = \mathbb{P} \left( \frac{\sum_{j=1}^N v_j \ \underline{n}_j(\bi^*_j) }{\sum_{j=1}^N \ \overline{\co}_j(\bi^*_j)} < \frac{\sum_{j=1}^N v_j \ n_j(\bi^*_j) }{\sum_{j=1}^N \ \co_j(\bi^*_j) } - 2\frac{\beta_{opt} + n_{\max}}{\beta^2_{opt}} \sum_{j=1}^N v_j \sqrt{\ln{\frac{\pi^2 N Q T^3}{3 \delta'}}}\sigma_c \right) \\
		& \leq \frac{3 \delta'}{\pi^2 Q T^3},
	\end{align}
	where the derivation uses arguments similar to the ones applied in the proof for the ROI constraint in Theorem~\ref{thm:gcbsaferelaxother}.
	Summing over the time horizon $T$ ensures that the optimal solution of the original problem $\left\{ \bi^*_j \right\}_{j=1}^N$ is excluded from the feasible solutions at most with probability $\frac{3 \delta'}{\pi^2 Q T^2}$.
	
	Second, let us evaluate the probability for which the optimal solution of the original problem $\left\{ \bi^*_j \right\}_{j=1}^N$ is excluded due to the budget constraint, formally:
	\begin{align}
		& \mathbb{P} \left( \sum_{j=1}^N \overline{c}_j(\bi^*_j) > \beta + \phi \right) \\
		& \leq \mathbb{P} \left( \sum_{j=1}^N \overline{c}_j(\bi^*_j) > \beta + 2 N \sqrt{2 \ln{\frac{\pi^2 N Q T^3}{3 \delta'}}} \sigma_c \right)\\
		& = \mathbb{P} \left( \sum_{j=1}^N \overline{c}_j(\bi^*_j) > \sum_{j=1}^N \ \co_j(\bi^*_j) + 2 N \sqrt{2 \ln{\frac{\pi^2 N Q T^3}{3 \delta'}}} \sigma_c \right) \\
		& \leq \sum_{j=1}^N \mathbb{P} \left( \overline{c}_j(\bi^*_j) > \co_j(\bi^*_j) + 2 \sqrt{\ln{\frac{12NT^3}{\pi^2 \delta'}}} \sigma_c \right) \label{eq:initbud} \\
		&= \sum_{j=1}^N \mathbb{P} \left( \hat{c}_{j,t-1}(\bi^*_j) - \co_j(\bi^*_j) \geq - \sqrt{b_t} \hat{\sigma}^c_{j,t-1}(\bi^*_j) + 2 \sqrt{2 \ln{\frac{\pi^2 N Q T^3}{3 \delta'}}} \sigma_c \right)\\
		& \leq \sum_{j=1}^N \mathbb{P} \left( \hat{c}_{j,t-1}(\bi^*_j) - \co_j(\bi^*_j) \geq \sqrt{2 \ln{\frac{\pi^2 N Q T^3}{3 \delta'}}} \hat{\sigma}^c_{j,t-1}(\bi^*_j) \right) \\
		& \leq \sum_{j=1}^N \mathbb{P} \left( \frac{\hat{c}_{j,t-1}(\bi^*_j) - \co_j(\bi^*_j)}{\hat{\sigma}^c_{j,t-1}(\bi^*_j)} \geq \sqrt{2 \ln{\frac{\pi^2 N Q T^3}{3 \delta'}}} \right)\\
		& \leq \sum_{j=1}^N \frac{3 \delta'}{\pi^2 N Q T^3} = \frac{3 \delta'}{\pi^2 Q T^3}, \label{eq:endbud}
	\end{align}
	where we use the fact that $\beta = \beta_{opt}$, and the derivation uses arguments similar to the ones applied in the proof for the budget constraint in Theorem~\ref{thm:gcbsaferelaxother}.
	Summing over the time horizon $T$, we get that the optimal solution of the original problem $\left\{ \bi^*_j \right\}_{j=1}^N$ is excluded from the set of the feasible ones with probability at most $\frac{\pi^2 \delta'}{6 T^2}$.
	Finally, using a union bound, we have that the optimal solution can be chosen over the time horizon with probability at least $1 - \frac{3 \delta'}{\pi^2 Q T^2}$.
	
	Notice that here we want to compute the regret of the \alg{} algorithm w.r.t.~$\left\{ \bi^*_j \right\}_{j=1}^N$ which is not optimal for the analysed relaxed problem.
	Nonetheless, the proof on the pseudo-regret provided in Theorem~\ref{thm:gcbregret} is also valid for suboptimal solutions in the case it is feasible with high probability.
	This can be trivially shown using the fact that the regret w.r.t.~a generic solution cannot be larger than the one computed on the optimal one.
	Thanks to that, using a union bound over the probability that the bounds hold and that $\left\{ \bi^*_j \right\}_{j=1}^N$ is feasible, we conclude that with probability at least $1 - \delta - \frac{6 \delta'}{\pi^2 Q T^2}$ the regret \alg{} is of the order of $\mathcal{O} \left( \sqrt{T \sum_{j=1}^N \gamma_{j,T}} \right)$.
	Finally, thanks to the property of the \alg{} algorithm shown in Theorem~\ref{thm:gcbsafeconst}, the learning policy is $\delta$-safe for the relaxed problem.
\end{proof}

\gcbsaferelaxboth*

\begin{proof}
	The proof follows from combining the arguments about the ROI constraint used in Theorem~\ref{thm:gcbsaferelaxother} and those about the budget constraint used in Theorem~\ref{thm:gcbsaferelax}.
\end{proof}

\section{Running Time} \label{app:runningtime}

The asymptotic running time of the \textsf{GCB} and \alg{} algorithms is given by the summation of the running time of the estimation and optimization subroutine.

The asymptotic running time of the estimation procedure, whose main component is the estimation of the quantities in Equations~(\ref{eq:ncmean})-(\ref{eq:cosigma}) is $\Theta(\sum_{j=1}^N |X_j| \ t^2)$, where $t$ is the number of samples (corresponding to the rounds), and the asymptotic space complexity is $\Theta(N t^2)$, \emph{i.e.}, the space required to store the Gram matrix.
A better (linear) dependence on the number of days $t$ can be obtained by using the recursive formula for the GP mean and variance computation (see~\citet{chowdhury2017kernelized} for details).

The asymptotic running time of the \textsf{Opt} procedure is $\Theta \left( \sum_{j=1}^N |X_j| \ |Y|^2 \ |R|^2 \right)$, where $|X_j|$ is the cardinality of the set of bids $X_j$, since it cycles over all the subcampaigns and, for each one of them, finds the maximum bids and compute the values in the matrix $S(y, r)$.
Moreover, the asymptotic space complexity of the \textsf{Opt} procedure is $\Theta\left(\max_{j= \{1, \ldots, N\}} |X_j| \ |Y| \ |R|\right)$ since it stores the values in the matrix $S(y, r)$ and finds the maximum over the possible bids $x \in X_j$.

\clearpage
\section{Additional Experiments and Experimental Settings Details}\label{app:expresadditional}

\subsection{Experiment~\#4: comparing \textsf{GCB}, \alg{}, and \alg{}$(\psi,\phi)$ with multiple, heterogeneous settings}\label{sec:6.4}

In this experiment, we extend the experimental activity we conduct in Experiments~\#1 and~\#3 to other multiple heterogeneous settings.

\paragraph{Setting}
We simulate $N = 5$ subcampaigns with a daily budget $\beta = 100$, with $|X_j| = 201$ bid values evenly spaced in $[0, \ 2]$, $|Y| = 101$ cost values evenly spaced in $[0, \ 100]$, being the daily budget $\beta = 100$, and $|R|$ evenly spaced revenue values depending on the setting.
We generate $10$ scenarios that differ in the parameters defining the cost and revenue functions and in the ROI parameter $\lambda$.
Recall that the number-of-click functions coincide with the revenue functions since $v_j = 1$ for each $j \in \{1, \dots, N\}$. 
Parameters $\alpha_j \in \mathbb{N}^+$ and $\theta_j \in \mathbb{N}^+$ are sampled from discrete uniform distributions $\mathcal{U}\{50, 100\}$ and $\mathcal{U}\{400, 700\}$, respectively. 
Parameters $\gamma_j$ and $\delta_j$ are sampled from the continuous uniform distributions $\mathcal{U}[0.2 , 1.1)$.
Finally, parameters $\lambda$ are chosen such that the ROI constraint is active at the optimal solution.
Table~\ref{tab:param_10settings} specifies the values of such parameters.

\begin{table}
	\centering
	\caption{Values of the parameters used in the $10$ different settings of Experiment~\#4.}
	\label{tab:param_10settings}
	\setlength{\extrarowheight}{0pt}
	\renewcommand{\arraystretch}{0.85}
			\begin{tabular}{l|l||c|c|c|c|c|c|}
				

				&& $C_1$ & $C_2$ & $C_3$ & $C_4$ & $C_5$ & $\lambda$\\
				\hline\hline
				
				Setting 1 & $\theta_j$ & 530  & 417 & 548 & 571 & 550 &  10.0  \\ \cline{2-7}
				& $\delta_j$  & 0.356 & 0.689 & 0.299 & 0.570 & 0.245 &    \\ \cline{2-7}
				& $\alpha_j$  & 83 & 97 & 72 & 100 & 96 &    \\ \cline{2-7}
				& $\gamma_j$ & 0.939 & 0.856 & 0.484 & 0.661 & 0.246 &   \\ \hline\hline

				Setting 2 & $\theta_j$ & 597 & 682 & 698  & 456 & 444 &  14.0  \\ \cline{2-7}
				& $\delta_j$  & 0.202 & 0.520 & 0.367 & 0.393 & 0.689 &    \\ \cline{2-7}
				& $\alpha_j$  & 83 & 98 & 56 & 60 & 51 &    \\ \cline{2-7}
				& $\gamma_j$ & 0.224 & 0.849 & 0.726 & 0.559 & 0.783 &   \\ \hline\hline
				
				Setting 3 &  $\theta_j$ & 570 & 514 & 426 & 469 & 548 & 10.5 \\ \cline{2-7}
				& $\delta_j$  & 0.217 & 0.638 & 0.694 & 0.391 & 0.345 &    \\ \cline{2-7}
				& $\alpha_j$  & 97 & 78 & 53 & 80 & 82 &    \\ \cline{2-7}
				& $\gamma_j$ & 0.225 & 0.680 & 1.051 & 0.412 & 0.918 &   \\ \hline\hline
				
				Setting 4 &  $\theta_j$ & 487 & 494 & 467 & 684 & 494 & 12.0 \\ \cline{2-7}
				& $\delta_j$  & 0.348 & 0.424 & 0.326 & 0.722 & 0.265 &    \\ \cline{2-7}
				& $\alpha_j$  & 62 & 79 & 76 & 69 & 99 &    \\ \cline{2-7}
				& $\gamma_j$ & 0.460 & 1.021 & 0.515 & 0.894 & 1.056 &   \\ \hline\hline
				
				Setting 5 & $\theta_j$ & 525 & 643 & 455 & 440 & 600 & 14.0 \\ \cline{2-7}
				& $\delta_j$  & 0.258 & 0.607 & 0.390 & 0.740 & 0.388 &    \\ \cline{2-7}
				& $\alpha_j$  & 52 & 87 & 68 & 99 & 94 &    \\ \cline{2-7}
				& $\gamma_j$ & 0.723 & 0.834 & 1.054 & 1.071 & 0.943 &   \\ \hline\hline
				
				Setting 6 & $\theta_j$ & 617 & 518 & 547 & 567 & 576 & 11.0 \\ \cline{2-7}
				& $\delta_j$  & 0.844 & 0.677 & 0.866 & 0.252 & 0.247 &    \\ \cline{2-7}
				& $\alpha_j$  & 71 & 53 & 87 & 98 & 59 &    \\ \cline{2-7}
				& $\gamma_j$ & 0.875 & 0.841 & 1.070 & 0.631 & 0.288 &   \\ \hline\hline
				
				Setting 7 & $\theta_j$ & 409 & 592 & 628 & 613 & 513 & 11.5 \\ \cline{2-7}
				& $\delta_j$  & 0.507 & 0.230 & 0.571 & 0.359 & 0.307 &     \\ \cline{2-7}
				& $\alpha_j$  & 77 & 78 & 91 & 50 & 71 &    \\ \cline{2-7}
				& $\gamma_j$ & 0.810 & 0.246 & 0.774 & 0.516 & 0.379 &   \\ \hline\hline
				
				Setting 8 & $\theta_j$ & 602 & 605 & 618 & 505 & 588 &  13.0 \\ \cline{2-7}
				& $\delta_j$  & 0.326 & 0.265 & 0.201 & 0.219 & 0.291 &    \\ \cline{2-7}
				& $\alpha_j$  & 67 & 80 & 99 & 77 & 99 &    \\ \cline{2-7}
				& $\gamma_j$ & 0.671 & 0.775 & 0.440 & 0.310 & 0.405 &   \\ \hline\hline
				
				Setting 9 & $\theta_j$ & 486 & 684 & 547 & 419 & 453 & 13.0 \\ \cline{2-7}
				& $\delta_j$  & 0.418 & 0.330 & 0.529 & 0.729 & 0.679 &    \\ \cline{2-7}
				& $\alpha_j$  & 53 & 82 & 58 & 96 & 100 &    \\ \cline{2-7}
				& $\gamma_j$ & 0.618 & 0.863 & 0.669 & 0.866 & 0.831 &   \\ \hline\hline
				
				Setting 10 & $\theta_j$ & 617 & 520 & 422 & 559 & 457 & 14.0 \\ \cline{2-7}
				& $\delta_j$  & 0.205 & 0.539 & 0.217 & 0.490 & 0.224 &    \\ \cline{2-7}
				& $\alpha_j$  & 51 & 86 & 93 & 61 & 84 &    \\ \cline{2-7}
				& $\gamma_j$ & 1.0493 & 0.779 & 0.233 & 0.578 & 0.562 &   \\ \hline
			\end{tabular}
	\end{table}
	
\setlength{\extrarowheight}{4pt}
\renewcommand{\arraystretch}{0.7}
\begin{table*}
	\caption{Results of Experiment~\#4.} \label{tab:rand_performance}
	\centering
	\resizebox{\columnwidth}{!}{\begin{tabular}{@{\hspace{4pt}}l@{\hspace{4pt}}|@{\hspace{4pt}}l@{\hspace{0pt}}||@{\hspace{4pt}}c@{\hspace{4pt}}|@{\hspace{4pt}}c@{\hspace{4pt}}|@{\hspace{4pt}}c@{\hspace{4pt}}|@{\hspace{4pt}}c@{\hspace{4pt}}|@{\hspace{4pt}}c@{\hspace{4pt}}|@{\hspace{4pt}}c@{\hspace{4pt}}|@{\hspace{4pt}}c@{\hspace{4pt}}|@{\hspace{4pt}}c@{\hspace{4pt}}|@{\hspace{4pt}}c@{\hspace{4pt}}|@{\hspace{4pt}}c@{\hspace{4pt}}|@{\hspace{4pt}}c@{\hspace{4pt}}|@{\hspace{4pt}}c@{\hspace{4pt}}|}
			\multicolumn{2}{c||}{}& $W_T$ & $W_{T/2}$ & $\sigma_T$ & $\sigma_{T/2}$ & $M_T$ & $M_{T/2}$ & $U_T$ & $U_{T/2}$ & $L_T$ & $L_{T/2}$ & $V_{ROI}$ & $V_{B}$\\
			\midrule			
			\multirow{4}{*}{\rotatebox[origin=c]{90}{Setting $\#1$}} & \textsf{GCB} & 57481 & 30767 & 556 & 376 & 57497 & 30811 & 58081 & 31239 & 56758 & 30288 & 1.00 & 0.62\\
			\cline{2-14}
			& \alg{}  & 44419 & 21549 & 4766 & 2474 & 45348 & 21972 & 46783 & 23163 & 42287 & 20324 & 0.02& 0.00\\
			\cline{2-14}
			& \alg{}$(0.05,0)$ & 48028 & 23524 & 4902 & 2487 & 48626 & 23831 & 50388 & 24827 & 46307 & 22506 & 0.21 & 0.00\\
			\cline{2-14}
			& \alg{}$(0.10,0)$ & 52327 & 25859 & 829 & 611 & 52338 & 25887 & 53324 & 26605 & 51316 & 25104 & 0.94 & 0.00\\
			\hline \hline
			\multirow{4}{*}{\rotatebox[origin=c]{90}{Setting $\#2$}} & \textsf{GCB} & 63664 & 35566 & 1049 & 679 & 63701 & 35573 & 64984 & 36524 & 62249 & 34675 & 1.00 & 0.14\\
			\cline{2-14}
			& \alg{}  & 34675 & 16290 & 8541 & 4448 & 37028 & 17647 & 39594 & 19473 & 27748 & 11141 & 0.03 & 0.00\\
			\cline{2-14}
			& \alg{}$(0.05,0)$  & 40962 & 19564 & 6013 & 3122 & 41823 & 20152 & 44468 & 21698 & 38640 & 17645 & 0.04 & 0.00\\
			\cline{2-14}
			& \alg{}$(0.10,0)$ & 46694 &  22099 & 6382 & 3112 & 47749 & 22433 & 51564 & 24776 & 44099 & 19929 & 0.72 & 0.00\\
			\hline \hline
			\multirow{4}{*}{\rotatebox[origin=c]{90}{Setting $\#3$}} & \textsf{GCB}  & 54845 & 30213& 757 & 478 & 54816 & 30177 & 55734 & 30885 & 54006 & 29638 & 1.00 & 0.25\\
			\cline{2-14}
			& \alg{}  & 35726 & 16577 & 8239 & 4361 & 38302 & 18114 & 40746 & 19882 & 27279 & 8791 & 0.03 & 0.00\\
			\cline{2-14}
			& \alg{}$(0.05,0)$ & 38757 & 18370 & 8492 & 4594 & 41422 & 19808 & 43337 & 21092 & 30413 & 12678 & 0.07 & 0.00\\
			\cline{2-14}
			& \alg{}$(0.10,0)$ & 42184 & 19993 & 9652 & 5056 & 44820 & 21574 & 47659 & 23118 & 36570 & 14450 & 0.75 & 0.00\\
			\hline \hline			
			\multirow{4}{*}{\rotatebox[origin=c]{90}{Setting $\#4$}} & \textsf{GCB} & 71404 & 37383 & 351 & 262 & 71399 & 37387 & 71877 & 37732 & 70930 & 37021 & 0.98 & 0.98\\
			\cline{2-14}
			& \alg{}  & 29101 & 13817 & 7052 & 3646 & 30992 & 14680 & 35602 & 17256 & 20509 & 9562 & 0.00 & 0.00\\
			\cline{2-14}
			& \alg{}$(0.05,0)$  & 39802 & 18270 & 10232 & 4955 & 38296 & 17994 & 53375 & 24962 & 25197 & 11341 & 0.01 & 0.00\\
			\cline{2-14}
			& \alg{}$(0.10,0)$ & 51515 & 24095 & 11094 & 5639 & 56621 & 24902 & 61992 & 30020 & 35642 & 16198 & 0.56 & 0.00\\
			\hline \hline
			\multirow{4}{*}{\rotatebox[origin=c]{90}{Setting $\#5$}} & \textsf{GCB} & 74638 & 39523 & 642 & 392 & 74693 & 39529 & 75405 & 40049 & 73756 & 39063 & 0.98 & 0.31\\
			\cline{2-14}
			& \alg{}  & 48956 & 23230 & 6715 & 3486 & 50021 & 23838 & 53644 & 26266 & 42946 & 19287 & 0.00 & 0.00 \\
			\cline{2-14}
			& \alg{}$(0.05,0)$  & 56205 & 27003 & 2578 & 1742 & 56554 & 27211 & 58839 & 28802 & 53278 & 24987 & 0.00 & 0.00\\
			\cline{2-14}
			& \alg{}$(0.10,0)$ & 63411 & 30207 & 5636 & 2916 & 64364 & 30665 & 66764 & 32212 & 60519 & 28260 & 0.59 & 0.00\\
			\hline \hline
			\multirow{4}{*}{\rotatebox[origin=c]{90}{Setting $\#6$}} & \textsf{GCB} & 67118 & 35775 & 327 & 260 & 67130 & 35795 & 67536 & 36111 & 66726 & 35424 & 0.98 & 0.98\\
			\cline{2-14}
			& \alg{}  & 14448 & 7707 & 6006 & 3065 & 15019 & 8075 & 18581 & 9800 & 6781 & 3926 & 0.02 & 0.00\\
			\cline{2-14}
			& \alg{}$(0.05,0)$ & 14968 & 7710 & 6174 & 2974 & 15161 & 8157 & 20548 & 10351 & 7954 & 3860 & 0.02 & 0.00\\
			\cline{2-14}
			& \alg{}$(0.10,0)$ & 34716 & 15507 & 16133 & 7280 & 37409 & 16601 & 55236 & 25366 & 9895 & 5188 & 0.19 & 0.00\\
			\hline \hline
			\multirow{4}{*}{\rotatebox[origin=c]{90}{Setting $\#7$}} & \textsf{GCB} & 63038 & 35330 & 873 & 401 & 63088 & 35367 & 64226 & 35793 & 61754 & 34823 & 1.00 & 0.41\\
			\cline{2-14}
			& \alg{}  & 31662 & 14806 & 5651 & 3090 & 33009 & 15570 & 35004 & 16922 & 28296 & 11338 & 0.04 & 0.00\\
			\cline{2-14}
			& \alg{}$(0.05,0)$ & 37744 & 17606 & 4173 & 2619 & 38321 & 18161 & 41184 & 19805 & 33914 & 15276 & 0.03 & 0.00\\
			\cline{2-14}
			& \alg{}$(0.10,0)$ & 42528 & 20046 & 7497 & 3624 & 43765 & 20683 & 47187 & 22301 & 38988 & 18314 & 0.70 & 0.00\\
			\hline \hline
			\multirow{4}{*}{\rotatebox[origin=c]{90}{Setting $\#8$}} & \textsf{GCB} & 79571 & 42322 & 476 & 375 & 79581 & 42317 & 80073 & 42743 & 78969 & 41913 & 1.00 & 0.98\\
			\cline{2-14}
			& \alg{} & 48046 & 22478 & 11779 & 6000 & 52094 & 24180 & 57321 & 28024 & 30655 & 13338 & 0.02 & 0.00\\
			\cline{2-14}
			& \alg{}$(0.05,0)$ & 58450 & 27477 & 10296 & 5605 & 61404 & 28845 & 66902 & 32883 & 41196 & 18222 & 0.02 & 0.00\\
			\cline{2-14}
			& \alg{}$(0.10,0)$ & 68252 & 33255 & 3436 & 2417 & 68886 & 33857 & 70758 & 35377 & 65394 & 30696 & 0.07 & 0.00\\
			\hline \hline
			\multirow{4}{*}{\rotatebox[origin=c]{90}{Setting $\#9$}} & \textsf{GCB} & 70280 & 37363 & 672 & 347 & 70275 & 37352 & 71123 & 37811 & 69379 & 36942 & 1.00 & 0.34\\
			\cline{2-14}
			& \alg{}  & 40116 & 18895 & 5522 & 3047 & 40673 & 19357 & 43850 & 21161 & 37310 & 17222 & 0.03 & 0.00  \\
			\cline{2-14}
			& \alg{}$(0.05,0)$ & 51138 & 23683 & 3110 & 2036 & 50984 & 23375 & 54545 & 26174 & 47465 & 21385 & 0.03 &  0.00\\
			\cline{2-14}
			& \alg{}$(0.10,0)$ & 63574 & 29675 & 3810 & 3323 & 64011 & 30112 & 66658 & 32559 & 60970 & 27280 & 0.80 & 0.00\\
			\hline \hline
			\multirow{4}{*}{\rotatebox[origin=c]{90}{Setting $\#10$}} & \textsf{GCB} & 80570 & 41973 & 435 & 344 & 80568 & 42019 & 81127 & 42388 & 80023 & 41496 & 1.00 & 0.98\\ 
			\cline{2-14}
			& \alg{} & 58965 & 28785 & 3097 & 1465 & 60033 & 28917 & 62353 & 30535 & 54590 & 26931 & 0.02 & 0.00\\
			\cline{2-14}
			& \alg{}$(0.05,0)$  & 63685 & 31004 & 3787 & 1876 & 65273 & 31550 & 67364 & 33105 & 57860 & 28349 & 0.02 & 0.00\\
			\cline{2-14}
			& \alg{}$(0.10,0)$ & 68480 & 33358 & 4224 & 2181 & 70388 & 33998 & 72730 & 35838 & 61971 & 30317 & 0.65 & 0.00  \\
			\bottomrule
	\end{tabular}}
\end{table*}

	\paragraph{Results}
	We compare the \textsf{GCB}, \alg{}, \alg$(0.05, 0)$, and \alg$(0.10, 0)$ algorithms in terms of:
	\begin{itemize}
		\item $W_t := \sum_{h=1}^t P_t(\mathfrak{U})$: average (over $100$ runs) cumulative revenue at round $t$ (and the corresponding standard deviation $\sigma_t$);
		\item $M_t$: median (over $100$ runs) of the cumulative revenue at round $t$;
		\item $U_t$: $90$-th percentile (over $100$ runs) of the cumulative revenue at round $t$;
		\item $L_t$: $10$-th percentile (over $100$ runs) of the cumulative revenue at round $t$;
		\item $V_{ROI}$: the fraction of days in which the ROI constraint is violated;
		\item $V_{B}$: the fraction of days in which the budget constraint is violated.
	\end{itemize}
	
	%
	%


	Table~\ref{tab:rand_performance} reports the algorithms' performances at $\lceil T/2 \rceil = 28$ and at the end of the learning process $t = T = 57$.
	As already observed in the previous experiments, \textsf{GCB} violates the ROI constraint at every round, run, and setting. More surprisingly, \textsf{GCB} violates the budget constraint most of the time (60\% on average), even if that constraint is not active at the optimal solution.
	Interestingly, \alg{}$(\psi,0)$ never violates the budget constraints (for every $\psi$). As expected, the violation of the ROI constraint is close to zero with \alg{}, while it increases as $\psi$ increases.
	In terms of average cumulative revenue, at $T$, we observe that \alg{} gets about $56\%$ of the revenue provided by \textsf{GCB}, while the ratio related to \alg{}$(0.05,0)$ is about $66\%$ and that related to \alg{}$(0.10,0)$ is about $78\%$. At $T/2$, we the ratios are about $52\%$ for \textsf{GCB}, $61\%$ for \alg{}$(0.05,0)$, and $73\%$ for \alg{}$(0.10,0)$, showing that those ratios increase as $T$ increases. The rationale is that in the early stages of the learning process, safe algorithms learn more slowly than non-safe algorithms.
	Similar performances can be observed when focusing on the other indices.
	Summarily, the above results show that our algorithms provide advertisers with a wide spectrum of effective tools to address the revenue/safety tradeoff.
	A small value of $\psi$ (and $\phi$) represents a good tradeoff.
	The choice of the specific configuration to adopt in practice depends on the advertiser's aversion to violating the constraints.
	
	\subsection{Additional Information for Reproducibility}\label{app:reproducibility}
	
	In this section, we provide additional information for the full reproducibility of the experiments provided in the main paper.
	
	The code has been run on an Intel(R) Core(TM) $i7-4710MQ$ CPU with $16$ GiB of system memory.
	The operating system was Ubuntu $18.04.5$ LTS, and the experiments were run on Python $3.7.6$.
	The libraries used in the experiments, with the corresponding versions, were:
	\begin{itemize}
		\item \texttt{matplotlib==3.1.3}
		\item \texttt{gpflow==2.0.5}
		\item \texttt{tikzplotlib==0.9.4}
		\item \texttt{tf\_nightly==2.2.0.dev20200308}
		\item \texttt{numpy==1.18.1}
		\item \texttt{tensorflow\_probability==0.10.0}
		\item \texttt{scikit\_learn==0.23.2}
		\item \texttt{tensorflow==2.3.0}
	\end{itemize}
	
	On this architecture, the average execution time of each algorithm takes an average of $\approx 30$ sec for each day $t$ of execution.
	
	\subsection{Parameters and Setting of Experiment \#1 (main paper)} \label{app:tech}
	Table \ref{tab:params_exp1} specifies the values of the parameters of cost and number-of-click functions of the subcampaigns used in Experiment~\#1.
	
	\begin{table}[t!]
		\caption{Parameters of the synthetic settings used in Experiment~\#1.} 
		\large
		\centering
		\begin{tabu}{|l|[2pt]l|l|l|l|l|}
			\hline
			& $C_1$  & $C_2$ & $C_3$ & $C_4$ & $C_5$\\ \tabucline[2pt]{-}
			$\theta_j$  & 60  & 77 & 75  & 65  & 70 \\  \hline
			$\delta_j$ & 0.41    & 0.48   & 0.43   & 0.47 & 0.40 \\ \hline
			$\alpha_j$ & 497  & 565 & 573  & 503  & 536  \\  \hline
			$\gamma_j$ & 0.65    & 0.62   & 0.67   & 0.68 & 0.69\\  \hline
			$\sigma_f$ GP revenue &  0.669   & 0.499  & 0.761  &  0.619& 0.582   \\  \hline
			$l$ GP revenue &  0.425   &0.469 & 0.471  & 0.483 & 0.386   \\  \hline
			$\sigma_f$ GP cost & 0.311   &  0.443  & 0.316  &  0.349 & 0.418   \\  \hline
			$l$ GP cost & 0.76   & 0.719  & 0.562  & 0.722 & 0.727   \\  \hline
		\end{tabu}
		\label{tab:params_exp1}
	\end{table}
	
	\clearpage
	\subsection{Additional Results of Experiment \#2} \label{app:addfig3}
	
	In Figures~\ref{fig:exp3_1}, \ref{fig:exp3_2}, and~\ref{fig:exp3_3} we report the $90\%$ and $10\%$ of the quantities analysed in the experimental section for Experiment~\#3 provided by the \textsf{GCB}, \alg, and \alg{}$(0.05,0)$, respectively.
	These results show that the constraints are satisfied by \alg, and \alg{}$(0.05,0)$ also with high probability.
	While for \alg{} this is expected due to the theoretical results we provided, the fact that also \alg{}$(0.05,0)$ guarantees safety w.r.t.~the original optimization problem suggests that in some specific setting \alg{} is too conservative.
	This is reflected in a lower cumulative revenue, which might be negative from a business point of view.
	
	\begin{figure*}[th!]
		\centering
		\captionsetup[subfigure]{oneside,margin={1.05cm,0cm}}
		\subfloat[]{\scalebox{0.67}{\includegraphics{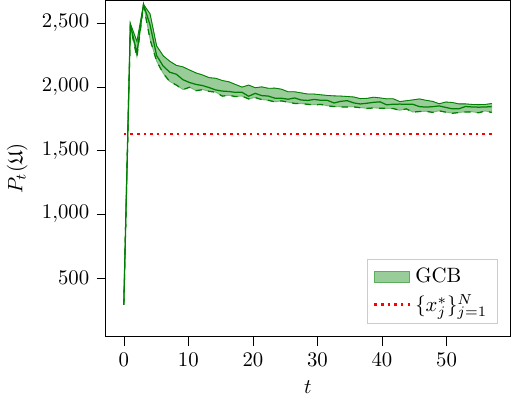}\label{fig:add1}}}
		\subfloat[]{\scalebox{0.67}{\includegraphics{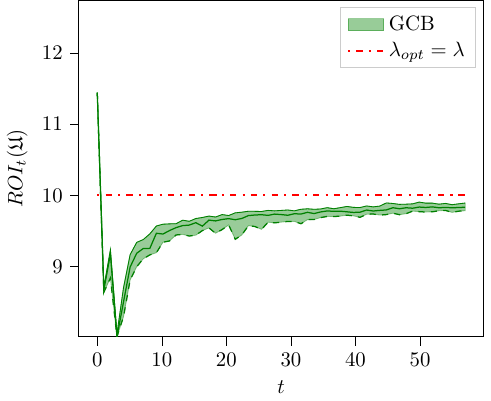}\label{fig:add2}}}
		\subfloat[]{\scalebox{0.67}{\includegraphics{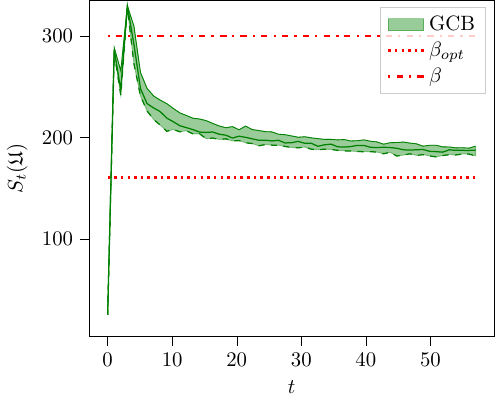}\label{fig:add3}}}
		\caption{Results of Experiment \#2: daily revenue (a), ROI (b), and spend (c) obtained by \textsf{GCB}. The dash-dotted lines correspond to the optimum values for the revenue and ROI, while the dashed lines correspond to the values of the ROI and budget constraints.}
		\label{fig:exp3_1}
	\end{figure*}

	\begin{figure*}[th!]
		\centering
		\captionsetup[subfigure]{oneside,margin={1.05cm,0cm}}
		\subfloat[]{\scalebox{0.67}{\includegraphics{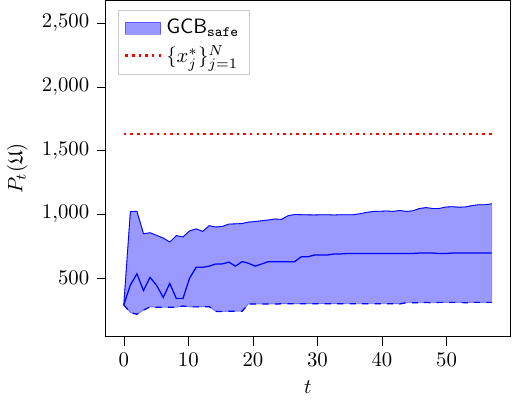}\label{fig:add4}}}
		\subfloat[]{\scalebox{0.67}{\includegraphics{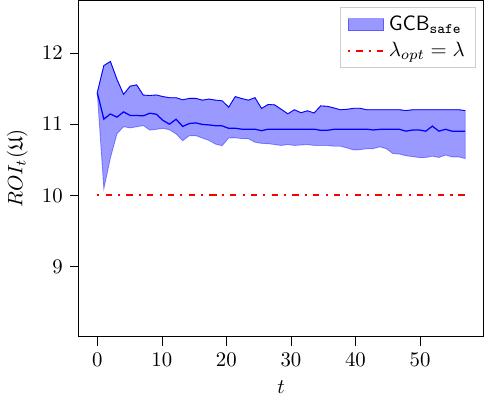}\label{fig:add5}}}
		\subfloat[]{\scalebox{0.67}{\includegraphics{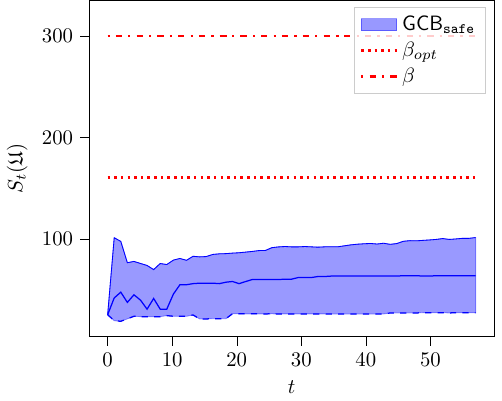}\label{fig:add6}}}		
		\caption{Results of Experiment \#2: daily revenue (a), ROI (b), and spend (c) obtained by \alg{}. The dash-dotted lines correspond to the optimum values for the revenue and ROI, while the dashed lines correspond to the values of the ROI and budget constraints.}
		\label{fig:exp3_2}
	\end{figure*}
	
	\begin{figure*}[th!]
		\centering
		\captionsetup[subfigure]{oneside,margin={1.05cm,0cm}}
		\subfloat[]{\scalebox{0.67}{\includegraphics{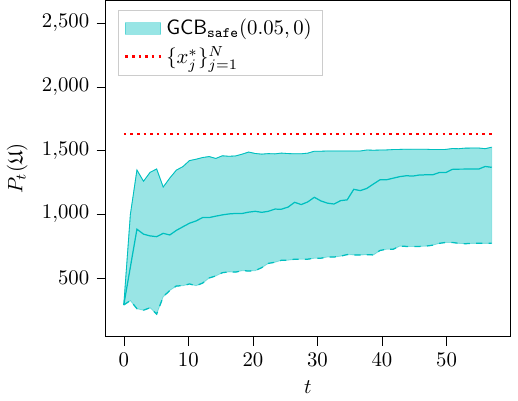}\label{fig:add7}}}
		\subfloat[]{\scalebox{0.67}{\includegraphics{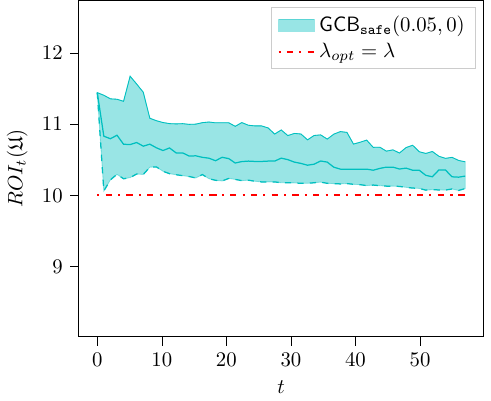}\label{fig:add8}}}
		\subfloat[]{\scalebox{0.67}{\includegraphics{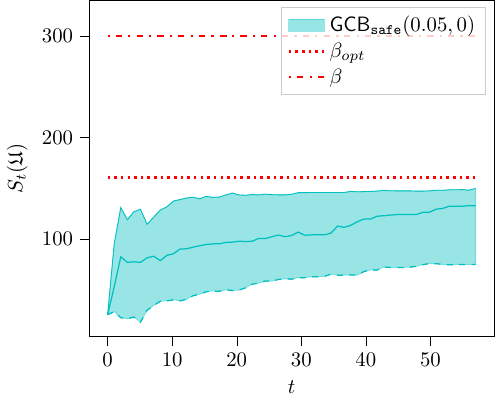}\label{fig:add9}}}		
		\caption{Results of Experiment \#2: daily revenue (a), ROI (b), and spend (c) obtained by \alg{}$(0.05,0)$. The dash-dotted lines correspond to the optimum values for the revenue and ROI, while the dashed lines correspond to the values of the ROI and budget constraints.}
		\label{fig:exp3_3}
	\end{figure*}
	
	\clearpage
	\subsection{Additional Results of Experiment \#3} \label{app:addfig}
	
	In Figures~\ref{fig:add_exp2_1}, \ref{fig:add_exp2_2}, \ref{fig:add_exp2_3}, and~\ref{fig:add_exp2_4} we report the $90\%$ and $10\%$ of the quantities related to Experiment~\#2 provided by the \alg, \alg{}$(0,0.05)$, \alg{}$(0,0.10)$, and \alg{}$(0,0.15)$, respectively.
	
\begin{figure*}[th!]
	\centering
	\captionsetup[subfigure]{oneside,margin={1.05cm,0cm}}
	\subfloat[]{\scalebox{0.67}{\includegraphics{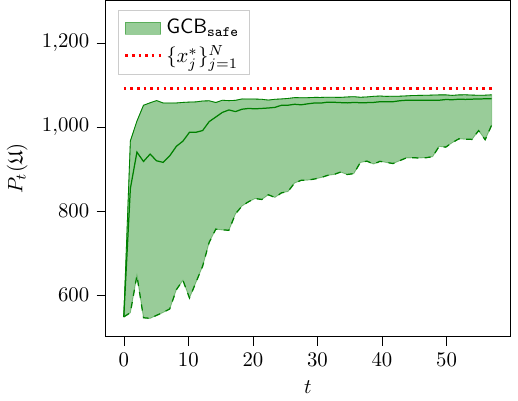}\label{fig:addx4}}}
	\subfloat[]{\scalebox{0.67}{\includegraphics{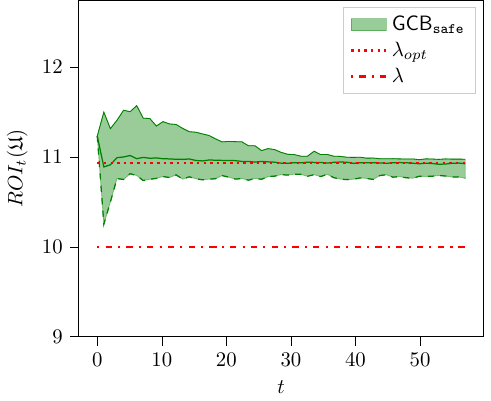}\label{fig:addx5}}}
	\subfloat[]{\scalebox{0.67}{\includegraphics{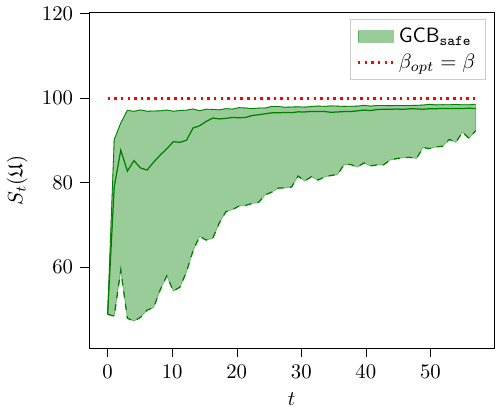}\label{fig:addx6}}}
	\caption{Results of Experiment \#3: daily revenue (a), ROI (b), and spend (c) obtained by \alg{}. The dash-dotted lines correspond to the optimum values for the revenue and ROI, while the dashed lines correspond to the values of the ROI and budget constraints.}
	\label{fig:add_exp2_1}
\end{figure*}

\begin{figure*}[th!]
	\centering
	\captionsetup[subfigure]{oneside,margin={1.05cm,0cm}}
	\subfloat[]{\scalebox{0.67}{\includegraphics{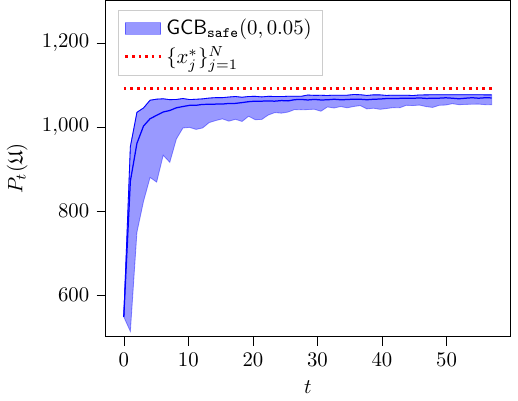}\label{fig:addx7}}}
	\subfloat[]{\scalebox{0.67}{\includegraphics{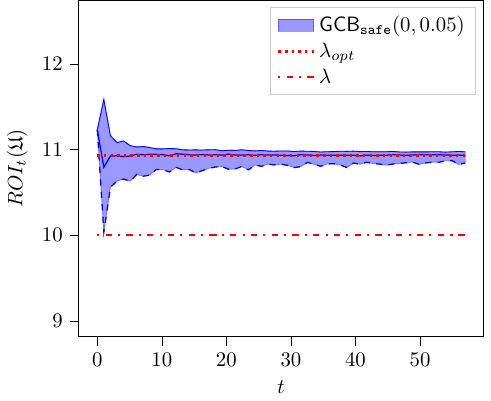}\label{fig:addx8}}}
	\subfloat[]{\scalebox{0.67}{\includegraphics{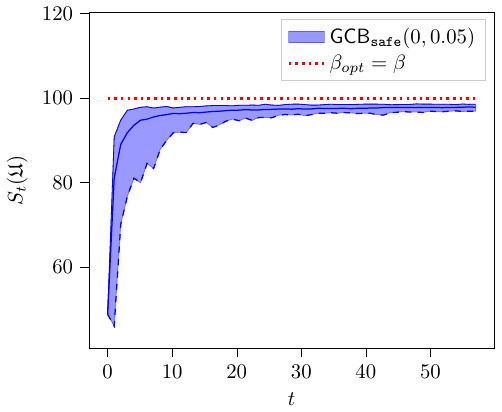}\label{fig:addx9}}}	
	\caption{Results of Experiment \#3: daily revenue (a), ROI (b), and spend (c) obtained by and \alg{}$(0,0.05)$. The dash-dotted lines correspond to the optimum values for the revenue and ROI, while the dashed lines correspond to the values of the ROI and budget constraints.}
	\label{fig:add_exp2_2}
\end{figure*}

\begin{figure*}[th!]
	\centering
	\captionsetup[subfigure]{oneside,margin={1.05cm,0cm}}
	\subfloat[]{\scalebox{0.67}{\includegraphics{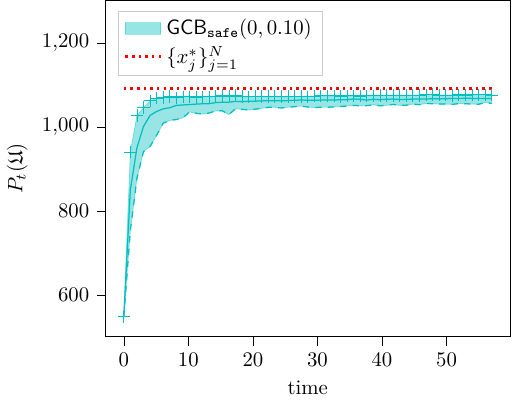}\label{fig:addx10}}}
	\subfloat[]{\scalebox{0.67}{\includegraphics{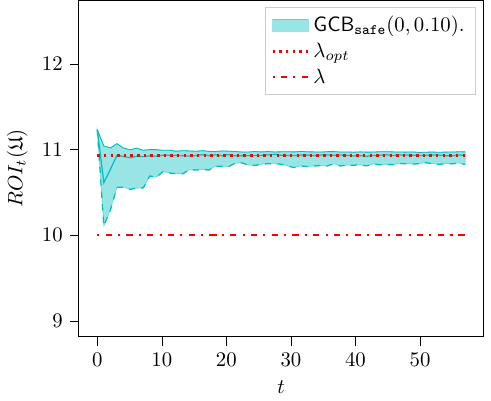}\label{fig:addx11}}}
	\subfloat[]{\scalebox{0.67}{\includegraphics{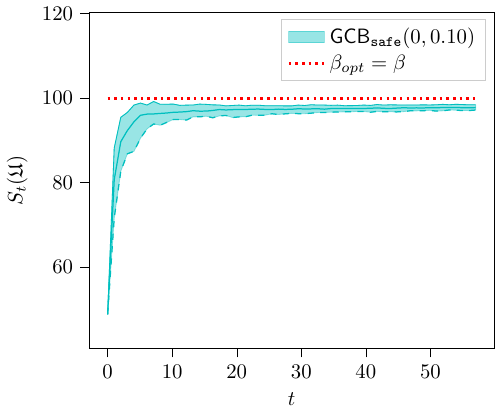}\label{fig:addx12}}}	
	\caption{Results of Experiment \#3: daily revenue (a), ROI (b), and spend (c) obtained by and \alg{}$(0,0.10)$. The dash-dotted lines correspond to the optimum values for the revenue and ROI, while the dashed lines correspond to the values of the ROI and budget constraints.}
	\label{fig:add_exp2_3}
\end{figure*}

\begin{figure*}[th!]
	\centering
	\captionsetup[subfigure]{oneside,margin={1.05cm,0cm}}
	\subfloat[]{\scalebox{0.67}{\includegraphics{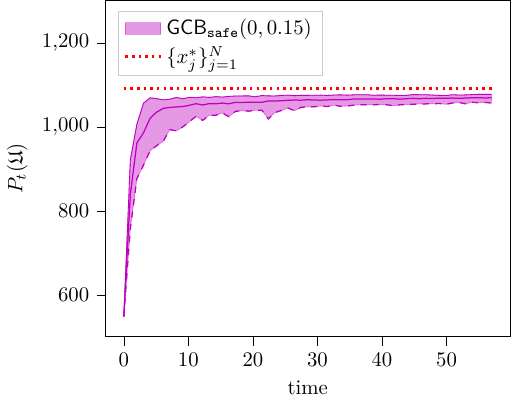}\label{fig:addx13}}}
	\subfloat[]{\scalebox{0.67}{\includegraphics{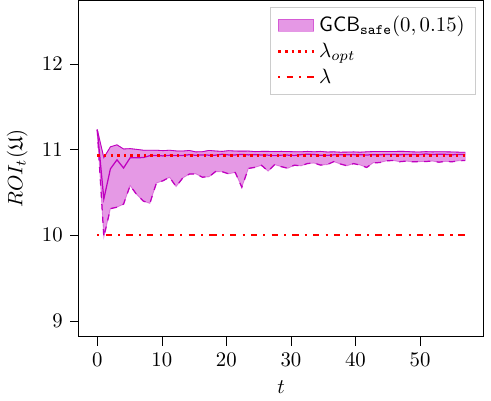}\label{fig:addx14}}}
	\subfloat[]{\scalebox{0.67}{\includegraphics{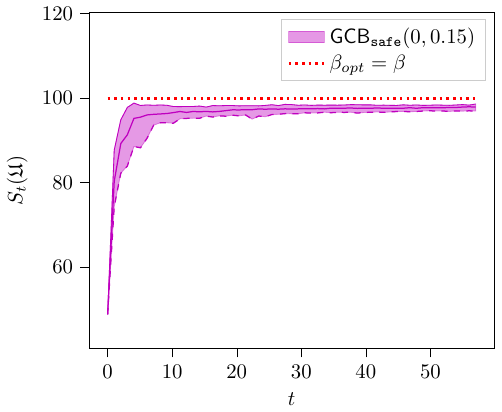}\label{fig:addx15}}}	
	\caption{Results of Experiment \#3: daily revenue (a), ROI (b), and spend (c) obtained by and \alg{}$(0,0.15)$. The dash-dotted lines correspond to the optimum values for the revenue and ROI, while the dashed lines correspond to the values of the ROI and budget constraints.}
	\label{fig:add_exp2_4}
\end{figure*}

\end{document}